\newcommand{\diff}{\mathrm{d}}
\newcommand{\cR}{{\mathcal{R}}}
\newcommand{\cM}{{\mathcal{M}}}
\newcommand{\cN}{{\mathcal{N}}}
\newcommand{\R}{{\mathbb{R}}}
\newcommand{\N}{{\mathbb{N}}}
\newcommand{\Var}{\text{Var}}
\newcommand{\Prob}{\mathbb{P}}
\newcommand{\Esp}{\mathbb{E}}
\newcommand{\erf}{\mathrm{erf}}
\newcommand{\length}{L}
\renewcommand{\leq}{\leqslant}
\renewcommand{\geq}{\geqslant}
\definecolor{ForestGreen}{cmyk}{0.91,0,0.88,0.12}
\colorlet{pierrem}{ForestGreen}
\DeclareMathOperator*{\argmax}{arg\,max}
\DeclareMathOperator*{\argmin}{arg\,min}
\DeclareMathOperator*{\Cov}{Cov}
\DeclareMathAlphabet{\mathbbb}{U}{bbold}{m}{n}
\newtheorem{theorem}{Theorem}
\newtheorem{corollary}[theorem]{Corollary}
\newtheorem{lemma}[theorem]{Lemma}
\newtheorem{proposition}[theorem]{Proposition}
\newtheorem{definition}{Definition}
\newtheorem{assumption}{Assumption}
\definecolor{main}{HTML}{5989cf}    %
\definecolor{sub}{HTML}{cde4ff}     %
\newtcolorbox{boxH}{
    colback = sub, 
    colframe = main, 
    boxrule = 0pt, 
    leftrule = 6pt %
}
\newtcolorbox{boxI}{
    colback = sub, 
    colframe = main, 
    boxrule = 0pt, 
    toprule = 6pt %
}
\title{Attention~Layers~Provably~Solve \\Single-Location Regression}
\author{Pierre Marion \\ %
Institute of Mathematics\\
EPFL\\
Lausanne, Switzerland \\
\texttt{pierre.marion@epfl.ch} \\
\And
Rapha\"el Berthier \\
Sorbonne Université, Inria \\
Centre Inria de Sorbonne Université \\
Paris, France \\
\texttt{raphael.berthier@inria.fr} \\
\AND
Gérard Biau \\
Sorbonne Université \\
Institut Universitaire de France \\
Paris, France \\
\texttt{gerard.biau@upmc.fr} \\
\And
Claire Boyer \\
Université Paris-Saclay \\
Institut Universitaire de France \\
Orsay, France \\
}
\begin{document}

\maketitle

\begin{abstract}
Attention-based models, such as Transformer, excel across various tasks but lack a comprehensive theoretical understanding, especially regarding token-wise sparsity and internal linear representations. To address this gap, we introduce the \textit{single-location regression} task, where only one token in a sequence determines the output, and its position is a latent random variable, retrievable via a linear projection of the input. To solve this task, we propose a dedicated predictor, which turns out to be a simplified version of a non-linear self-attention layer. We study its theoretical properties, by showing its asymptotic Bayes optimality and analyzing its training dynamics. In particular, despite the non-convex nature of the problem, the predictor effectively learns the underlying structure. This work highlights the capacity of attention mechanisms to handle sparse token information and internal linear structures.
\end{abstract}

\section{Introduction}

Attention-based models \citep{bahdanau2015neural}, such as Transformer \citep{vaswani2017attention}, have achieved unprecedented performance in various learning tasks, including natural language processing (NLP), e.g., text generation \citep{bubeck2023sparks}, translation \citep{luong-etal-2015-effective}, sentiment analysis \citep{song2019attentional,sun-etal-2019-utilizing,xu-etal-2019-bert}, and audio/speech analysis \citep{bahdanau2016endtoend}. These developments have led to many architectural and algorithmic variants of attention-based models \citep[see the review by][]{lin2022survey}. At a high level, the success of attention has been linked to its ability to manage long-range dependencies in input sequences \citep{bahdanau2015neural,vaswani2017attention}, since it consists in computing pairwise dependence between input tokens according to their projection in learned directions, independently of their location in the sequence. 

On the theoretical front, however, a deeper understanding of attention-based neural networks is still in its infancy. This limited progress is due both to the complexity of the architectures and to the disturbing diversity of relevant tasks. A common approach to tackle these challenges is to introduce a simplified task that models certain features of real-world tasks, followed by demonstrating a simplified version of the attention mechanism capable of solving the task. Prominent examples of this pattern include studying in-context learning with linearized attention \citep{ahn2023transformers,vonoswald2023transformers,zhang2024trained}, topic understanding with single-layer attention and alternate minimization scheme \citep{li2023how}, learning spatial structure with positional attention \citep{jelassi2022vision}, next-token prediction with latent bigram \citep{bietti2023birth,tian2023scan} or causal graph \citep{nichani2024how} structures, and sparse token selection \citep{wang2024transformers}. We refer to Appendix \ref{app:related-works} for additional discussion on some of these related works.
While these works shed light on some abilities of Transformer, they do not encompass all the characteristics of tasks where Transformer performs well, in particular in NLP. Two features of particular interest, which to our knowledge have not been addressed in previous theoretical studies on Transformer, are token-wise sparsity, where relevant information is contained in a limited number of tokens, and internal linear representations, which are interpretable representations of the input constructed by the model.

\paragraph{Contributions.}
To understand why attention is a suitable architecture for addressing these features, we introduce \textit{single-location regression}, a novel statistical task where attention-based predictors excel (Section \ref{sec:regressiontask}). In a nutshell, this task is a regression problem with a sequence of tokens as input. The key novelty is that only one token determines the prediction, and the location of this token is a latent random variable that changes based on the input sequence. Consequently, solving the task requires first identifying the location of the relevant token, which can be done by learning a latent linear projection, followed by performing regression on that token.

To tackle this problem, we propose a dedicated predictor, which turns out to be a simplified version of a non-linear self-attention layer. We show that this attention-based predictor is asymptotically Bayes optimal, whereas more standard linear regressors fail to perform better than the null predictor.
We then analyze the training dynamics of the proposed predictor, when trained to minimize the theoretical risk by projected gradient descent. Despite the non-convexity of the problem and the non-linearity of this transformer-based method, we show that the learned predictor successfully retrieves the underlying structure of the task and thus solves single-location regression. 

\paragraph{Organization.}
Section \ref{sec:regressiontask} presents the mathematical framework of single-location regression, followed by motivations from language processing.
Section \ref{sec:solving} is dedicated to defining our predictor and explaining its connection with attention. We then move on to the mathematical study, from both statistical (Section \ref{sec:risk-optimal}) and optimization (Section \ref{sec:optim}) points of view. Section \ref{sec:conclusion} concludes the paper.

\section{Single-location regression task}
\label{sec:regressiontask}
In this section, we describe our statistical task, and connect it to language processing motivations.

\subsection{Statistical setting} 
We consider a regression scenario where the inputs are sequences of $L$ random \textit{tokens}\footnote{For the sake of simplicity, we interchangeably use the terms ``token'' and ``embedding'', although they have slightly different meanings in the NLP community.} 
$(X_1, \hdots, X_L)$ taking values in $\mathbb{R}^d$.
The output $Y \in \mathbb{R}$ is assumed to be given by
\begin{align}
   Y &= X_{J_0}^\top v^\star + \xi,  
   \tag{$P_\mathrm{learn}$}
   \label{pb:learning_pb}
\end{align}
where $J_0$ is a latent discrete random variable on $\{1, \hdots, L\}$ and,
conditionally on $J_0$,
\[
\left\{
\begin{array}{lll}
    X_{J_0} &\sim&\cN\Big(\sqrt{\frac{d}{2}}k^\star, \gamma^2 I_d\Big)   \\
    X_\ell &\sim& \cN(0, I_d) \quad \textnormal{for} \quad \ell \neq J_0 \, .
\end{array}
\right.
\]
In the above formulation, $\cN(\mu, \Sigma)$ denotes the normal distribution with expectation $\mu$ and covariance matrix $\Sigma$, and $I_d$ is the identity matrix of size $d \times d$. All vectors are considered as column matrices, and the noise term $\xi$ is assumed to be a centered random variable independent of~$X$ and~$J_0$, with finite second-order moment $\varepsilon^2$. Conditionally on $J_0$, the tokens $(X_j)_{1 \leq j \leq L}$ are assumed to be independent.

The parameters of the regression problem \eqref{pb:learning_pb} are the \textit{unknown} vectors $k^\star$ and $v^\star$, both assumed to be on the unit sphere $\mathbb{S}^{d-1}$ in dimension~$d$, i.e., $\|k^\star\|_2 = \|v^\star\|_2 = 1$. The output is determined by a specific token in the sentence, indexed by the discrete random variable $J_0$ on $\{1, \hdots, L\}$. This token can be detected via its mean, which is proportional to $k^\star$, contrarily to the others which have zero mean. Once $X_{J_0}$ is identified, the prediction is formed as a linear projection in the direction~$v^\star$. Therefore, the originality and difficulty of this task lies in the fact that the response $Y$ is linearly related to a single informative token $X_{J_0}$, whose location varies from sequence to sequence---in this sense, the problem is sparse, but with a random support. %

A knee-jerk reaction would be to fit a linear model to the pair $(X_1^\top,\hdots, X_L^\top,Y)$. One might also consider tackling the problem with classical statistical approaches dedicated to sparsity, such as a Lasso estimator or a group-Lasso technique \citep{hastie2009}. %
However, as we will see (in Section~\ref{sec:risk-optimal}), all linear predictors fail due to the unknown and changing location of $J_0$.
We note in addition that $\mathbb{E}[ \| X_{\ell} \|_2^2] = d$ when $\ell\neq J_0$, while  $\mathbb{E}[ \| X_{J_0} \|_2^2] = d/2 + \gamma^2 d$. Therefore, choosing $\gamma^2=1/2$ implies that
tokens have the same squared norm in expectation, whether they are discriminatory of not. This shows that any approach based on comparing the magnitude of the tokens does not yield meaningful results. Ultimately, it is necessary to implement a more sophisticated approach, capable of taking into account the characteristics of the problem.

\subsection{Language processing motivation}
The structure of the task \eqref{pb:learning_pb} is motivated by natural language processing (NLP), and more specifically by two features, token-wise sparsity and internal linear representations, as we detail next.

\begin{figure}[ht]
    \begin{subfigure}{0.49\textwidth}
    \centering
    \includegraphics[width=\textwidth]{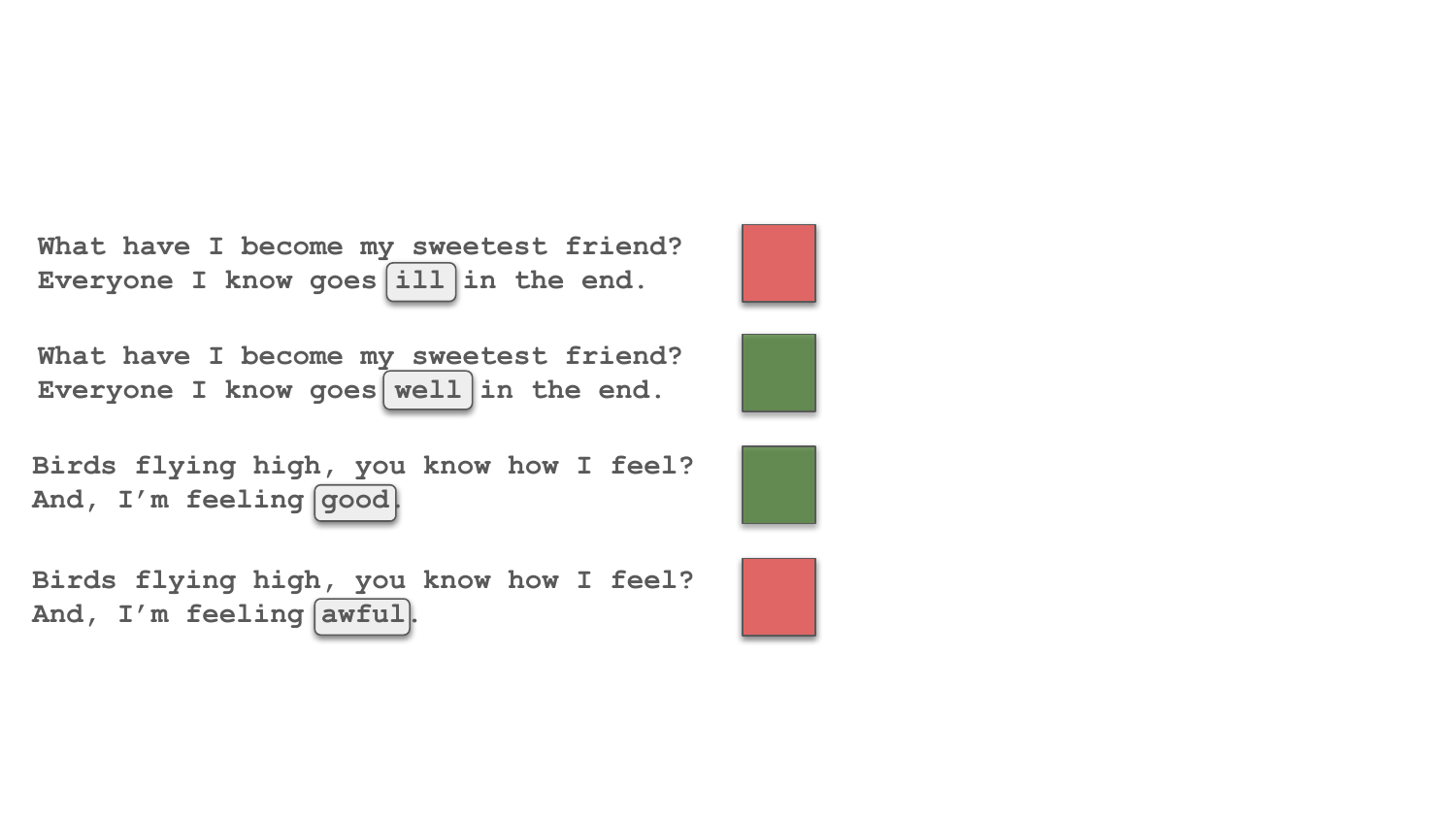}
    \vspace{5mm}
    \caption{Examples of input-output pairs. The input is a text containing two sentences (e.g., a question and an answer), and the task is to perform sentiment analysis only for the second sentence. The $Y$ output is symbolized here by a color code, where green (resp. red) corresponds to positive (resp. negative) feelings. The relevant information is sparse, typically concentrated in a single token: changing the grey token flips the output.}
    \label{fig:examples-nlp}
    \end{subfigure}
     \hfill
    \begin{subfigure}{0.49\textwidth}
     \centering
    \includegraphics[width=\textwidth]{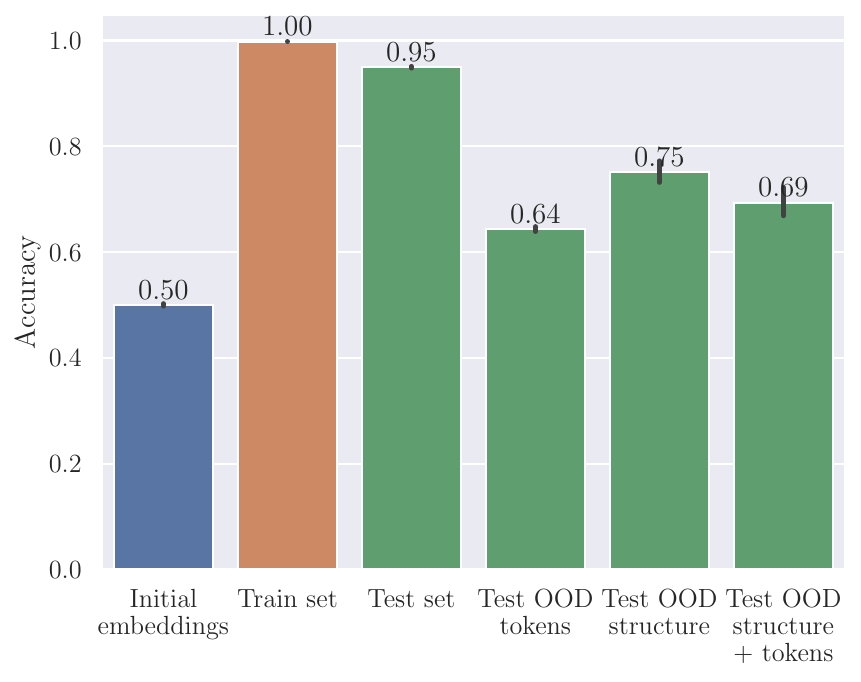}
    \caption{Accuracy of logistic regression on embeddings of [CLS] tokens in the hidden layers of a pretrained Transformer model. Initial embeddings of [CLS] (at layer $0$) are not context-aware, so they have a pure-chance accuracy of $50\%$. In hidden layers, the [CLS] token contains a representation of the sentence that achieves high scores and 
    is robust to out-of-distribution changes in token distribution and sentence structure.}
    \label{fig:exp-transformers}
    \end{subfigure}
    \caption{A simple sentiment analysis task with synthetic data, which exemplifies (a) token-wise sparsity and (b) internal linear representations. We refer to Appendix \ref{app:experimental-details} for details on the experiment.}
    \label{fig1}
\end{figure}

\paragraph{Token-wise sparsity.} In language tasks, the relevant information is often contained in few tokens, where we recall that tokens correspond to small text units (typically, words or subwords), which are embedded in $\R^d$ using a learned dictionary. This sparsity is revealed by the success of sparse attention \citep{martins2016from,blondel2017regularized,correia-etal-2019-adaptively,child2019generating,jasczcur2021sparse,kim2022learned,farina2024sparsity}, which is competitive with full attention while attending to fewer tokens. 
As an illustration, we consider a simple sentiment analysis task in Figure \ref{fig:examples-nlp}, and observe that changing one token flips the output.
This is modeled in \eqref{pb:learning_pb} by having the output $Y$ depend on a single token~$J_0$, whose location furthermore varies with the input.

\paragraph{Internal linear representations.} Linear projections of internal representations of Transformer (a.k.a.~linear probing) contain interpretable information \citep{bolukbasi2021interpretability,burns2023discovering,li2023inference}. Such a linear structure is also present in the learned token embeddings that are fed as input to language models \citep{mikolov2013exploiting,mikolov-etal-2013-linguistic,bolukbasi2016man,nanda-etal-2023-emergent,wenyi2023hyperpolyglot}. In our task \eqref{pb:learning_pb}, the two directions $k^\star$ and~$v^\star$ have to be learned by the model in order to solve the task. Figure~\ref{fig:examples-nlp-2} gives an example of possible such directions for the toy task described above. While this illustration relies on initial embeddings, similar structures also appear in the intermediate representations of Transformer. This is shown in Figure~\ref{fig:exp-transformers}, where we observe that pretrained Transformer architectures indeed build internal representations that are sufficient to solve the task with a linear classifier.

\begin{figure}[ht]
    \centering
    \includegraphics[width=0.9\textwidth]{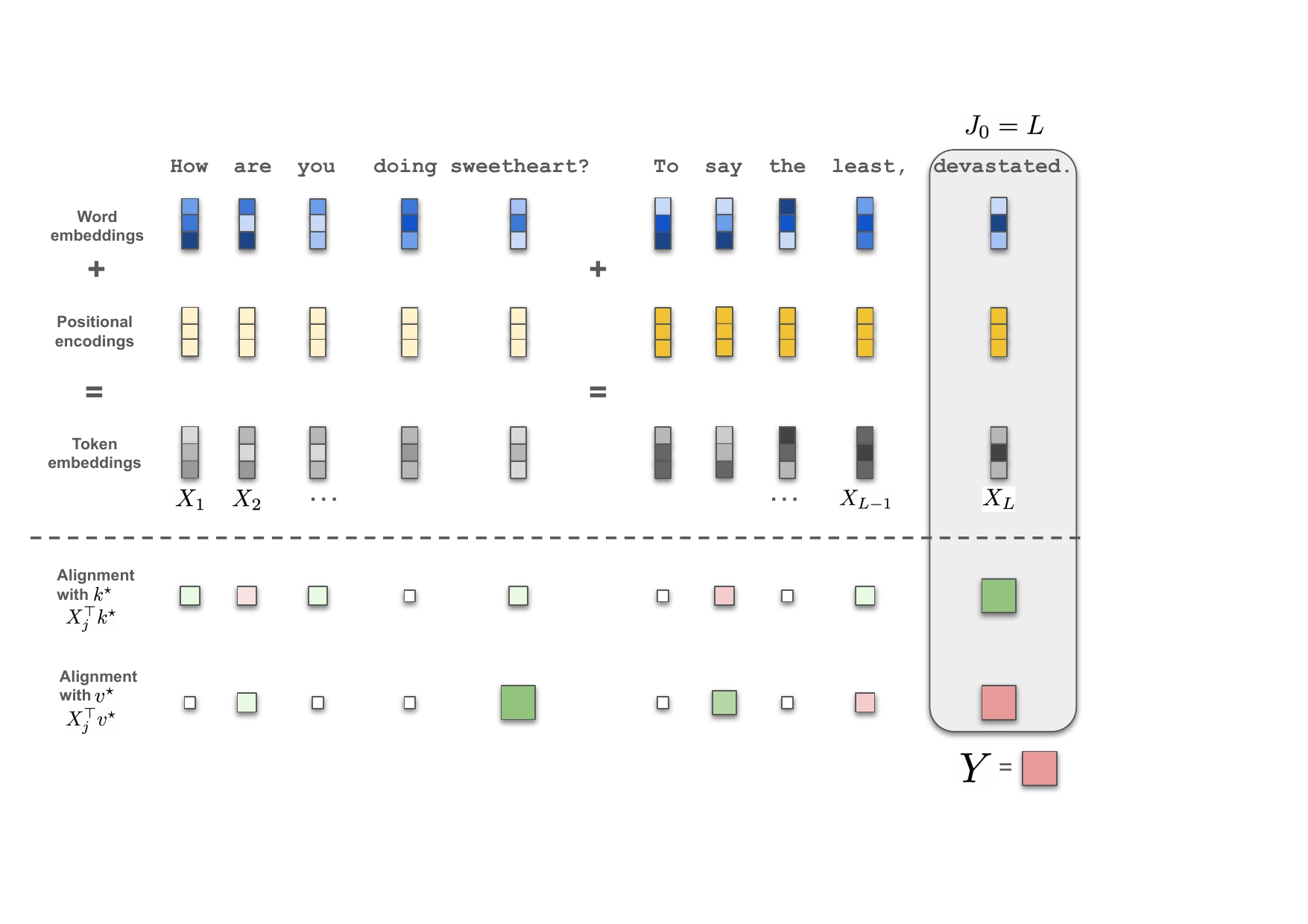}
    \caption{Modeling of an NLP task within our statistical setting \eqref{pb:learning_pb}. The token embeddings $X_1, \dots, X_L$ are constructed by adding the embeddings of each word and a positional encoding. For illustration purposes, we assume that each token corresponds to a word, and that the positional encoding solely depends on the part of the sentence (before or after the question mark), which differs from usual practice. Then, let the direction~$k^\star$ encode both the notion of sentiment and the position in the second part of the sentence. Thus only the last token of the sentence is aligned (positively) with $k^\star$, and we have $J_0=L$. As for~$v^\star$, it encodes whether the word is associated with a positive or negative sentiment. Note that several tokens are positively or negatively aligned with $v^\star$, but the output $Y$ only depends on the token $J_0$. This illustrates the interest of having two latent directions $k^\star$ and $v^\star$, one that filters the informative token and one that aligns with the output $Y$.}
    \label{fig:examples-nlp-2}
\end{figure}

We acknowledge that our statistical task presents limitations such as fixed sequence length, independent tokens, and output depending only on a single token. 
More complex models could be considered, but at significant technical cost. Moreover, as argued above, our problem \eqref{pb:learning_pb} preserves interesting aspects of NLP tasks, which makes it relevant for theoretical study of Transformer. Furthermore, it is an original statistical task requiring the implementation of a customized estimation strategy. It is precisely in this context that attention models prove their effectiveness, as we show next.

\section{An attention-based predictor to solve the regression task}
\label{sec:solving}
In this section, we propose a predictor adapted to the problem \eqref{pb:learning_pb} and discuss its connection with attention. In order to make our point as clear as possible, the construction is divided into three steps. We represent the input sequence in a matrix format $\mathbb{X} \in \R^{L \times d}$, where $\mathbb{X} = (X_1 | X_2 | \cdots | X_L)^\top$. %

\paragraph{Step 1: An oracle non-differentiable predictor.}

If the vectors $(k^\star, v^\star) \in (\mathbb{S}^{d-1})^2$ were known, then a natural procedure to solve the task \eqref{pb:learning_pb} would be to predict $Y$ from $\mathbb{X}$ via
\begin{equation}
\label{eq:true-task}
T(\mathbb{X}) = (\mathbb{X} v^\star)_{j_0(\mathbb{X})} = X_{j_0(\mathbb{X})}^\top v^\star \, , \quad \textnormal{where} \quad j_0(\mathbb{X}) = \argmax_{1 \leq \ell \leq L} \, (\mathbb{X} k^\star)_\ell %
\, .
\end{equation}
The $\argmax$ part detects the location $J_0$ by exploiting the fact that all $X_{\ell}$ have zero mean except $X_{J_0}$, while the $\mathbb{X} v^\star$ part exploits the linear relationship $Y = X_{J_0}^\top v^\star + \xi$.
In a more compact format, this ideal predictor can be rewritten as
$ %
    T(\mathbb{X})= \sum_{\ell=1}^L \mathbbb{1}_{\argmax(\mathbb{X} k^\star) = \ell} (\mathbb{X} v^\star)_\ell \, ,
$
which is a linear regression in the direction $v^\star$ with non-differentiable weights depending on $k^\star$. 

\paragraph{Step 2: A trainable predictor.} In practice, the vectors $k^{\star}$ and $v^{\star}$ are unknown and must be estimated from the data. In addition, the non-differentiability of the $\argmax$ function poses significant optimization challenges. To solve this problem, the most common approach in machine learning is to replace $\argmax$ with a softmax function with inverse temperature $\lambda > 0$, i.e., for $z=(z_1, \hdots, z_L)\in~\mathbb R^L$, $[\mathrm{softmax}(\lambda z)]_j={e^{\lambda z_j}} / {\sum_{\ell=1}^L e^{\lambda z_\ell}}$. This leads us to the model
\begin{equation}    \label{def:simplified-transformer}
   T_\lambda^{(\text{soft},k,v)}(\mathbb{X}) = \sum_{\ell=1}^L [\mathrm{softmax}(\lambda \mathbb{X} k)]_\ell (\mathbb{X} v)_\ell = \mathrm{softmax}\big(\lambda \underbrace{ \mathbb{X} k}_{L\times 1} \big)^\top  \underbrace{\mathbb{X}v}_{L\times 1}, 
\end{equation}
where $k,v \in \mathbb{S}^{d-1}$, and the superscript `soft' is used to indicate the presence of the softmax function.

\paragraph{Step 3: The final predictor.} The softmax nonlinearity, by inducing a coupling between all tokens, significantly complicates the mathematical analysis. To alleviate this difficulty, we replace it by the component-wise nonlinear function $\erf(z)=\frac{2}{\sqrt \pi}\int_0^z e^{-t^2}dt$, which is differentiable, increasing on $\R$, and such that $\erf(-\infty) = -1$ and $\erf(\infty) = 1$. We are therefore led to our operational model
\begin{align}
\label{def:our_transformer}
T_\lambda^{(k,v)}(\mathbb{X}) = \erf\big(\lambda \mathbb{X} k \big)^\top \mathbb{X}v = \sum_{\ell=1}^L \erf\big( \lambda {X}_\ell^\top k \big) X_\ell^\top v \, , 
\end{align}
where the $\erf$ function is applied component-wise. 
The choice of this activation function enables closed-form expectations for functions of Gaussian random variables (see, e.g., Lemma~\ref{lem:technical_results}). Note that the role of softmax in attention is an open question in the community. Several empirical papers investigate simplifying softmax into a component-wise nonlinearity \citep{qin2022cosformer, shen2023study, wortsman2023replacing,ramapuram2024theory}, and have observed a similar performance. These works emphasize the importance of the normalization $\lambda$ when replacing softmax, which we also find out to play an important role (see Corollary \ref{cor:oracle-asymptotic-bayes} and Section \ref{sec:optim}).

\paragraph{Connection to attention.}
It turns out that our estimation method finds a natural interpretation in terms of attention models. To see this, consider a model consisting of a single attention layer with a single head \citep{vaswani2017attention}
\begin{align}   \label{eq:def-transformer}
    T_\lambda^{(Q, K, V, O)}(\mathbb{X}) &= \mathrm{softmax}\Big(\lambda \underbrace{\mathbb{X} Q}_{L\times p} \underbrace{\vphantom{q}K^\top \mathbb{X}^\top}_{p\times L} \Big) \underbrace{\mathbb{X}V}_{L\times p} \underbrace{O^\top}_{p \times o} \, ,
\end{align}
where the dimensions $p,o\in \mathbb N^*$ are hyperparameters of the model, the softmax function is applied row by row, $Q,K,V \in\mathbb{R}^{d \times p}$ and $O \in \mathbb{R}^{o \times p}$ are the regular query, key, value, and output matrices, and $\lambda$ is usually taken to be $1/\sqrt{p}$.
In practice, the attention head is added to~$\mathbb{X}$ via a skip connection, which enforces $o = d$. 
In a nutshell, $K$ detects which tokens are relevant in the sentence, $V$ encodes the regression coefficient, and $Q$ encodes where to store the information.

In a supervised context, it is classical in practice to concatenate in first position an additional token [CLS] to the tokenized sentence $\mathbb{X}$ \citep[see, e.g.,][]{devlin2019bert}. In this context, only the first coordinate of the output is used for the prediction task. Thus, we focus on the first row of~\eqref{eq:def-transformer}, corresponding to the embedding of [CLS], namely %
\begin{align}   \label{eq:def-transformer-first-line}
    T_\lambda^{(Q, K, V, O)}(\mathbb{X})_1 = \mathrm{softmax}\left(\lambda \, a \, K^\top \mathbb{X}^\top \right) \mathbb{X} V O^\top,
\end{align}
with $a=X_{\textnormal{[CLS]}}^\top Q \in \R^{1 \times p}$, where $X_{\textnormal{[CLS]}} \in \mathbb R^d$ denotes the embedding of the [CLS] token. 

It is important to note that only considering the first output coordinate is a mathematically valid simplification for a single attention layer, but not when multiple layers are stacked, as all coordinates of the attention output contribute. Nevertheless, even in this latter more realistic case, the [CLS] token---or the similar concepts of attention sinks and registers---has been empirically shown to play a crucial role \citep{clark2019bert,darcet2024vision,xiao2024efficient}. This is also confirmed by our experiment in Figure~\ref{fig:exp-transformers}, where we show that the [CLS] token in pretrained Transformer architectures stores an internal representation of the sentence that is sufficient to solve simple NLP tasks with a linear classifier. This further motivates the need to understand how information is stored in this token.

It turns out that there is a direct connection between the model $T_\lambda^{(\text{soft},k,v)}(\mathbb{X})$ defined in \eqref{def:simplified-transformer} and the attention model $T_\lambda^{(Q, K, V, O)}(\mathbb{X})_1$ described in \eqref{eq:def-transformer-first-line}. To see this, take $o=1$, to adapt the model \eqref{eq:def-transformer-first-line} for univariate regression, and set $p=1$, a reasonable assumption given both empirical and theoretical evidence suggesting that Transformer parameter matrices are low-rank \citep{aghajanyan2021intrinsic,kajitsuka2024are}. Then, let $Q \in \R^{d \times 1}$ be any vector with positive correlation with $X_{\textnormal{[CLS]}}$ (for instance it suffices to take $Q = X_{\textnormal{[CLS]}}$), and $O = 1$. We then deduce that
\[
T^{(Q, K, V, O)} _{\lambda}(\mathbb{X})_1 = 
T_{\lambda X^{\top}_{\textnormal{[CLS]}}Q}^{(\text{soft},K,V)}(\mathbb{X}) \, .
\]
In other words, \textit{the attention layer \eqref{eq:def-transformer-first-line} matches the considered predictor in \eqref{def:simplified-transformer}} with a softmax inverse temperature proportional to the scalar product between $X_{\textnormal{[CLS]}}$ and $Q$. 
Thus, our results, in particular the study of the training dynamics in Section \ref{sec:optim}, can be seen as a model of how Transformer builds internal representations of the input during training.
This is also supported by numerical experiments showing that Transformer layers behave similarly to our predictor (see Appendix \ref{app:experimental-details}).

\section{Risk of the oracle and of the linear predictors}    \label{sec:risk-optimal}
Now that we have constructed our predictor $T_\lambda^{(k,v)}$ (see Eq.~\eqref{def:our_transformer}), a first key question is to assess its statistical performance. Recall that $k,v \in \mathbb{S}^{d-1}$ are the two parameters of the model, and their purpose is to approximate their theoretical counterparts $k^{\star}$ and $v^{\star}$ defined in \eqref{eq:true-task}. This begs in particular the question of the performance of the \textit{oracle predictor} $T_\lambda^{(k^\star,v^\star)}$. 
To answer these questions, we introduce the risk of the predictor, which is measured by the mean squared error
\begin{align}
\label{def:risk}
\cR_\lambda(k, v) &= \Esp \Big[ \Big(Y - T_\lambda^{(k,v)}(\mathbb{X}) \Big)^2 \Big] \, .
\end{align}
To proceed with the analysis, we make the following assumption.
\begin{assumption}
\label{hyp:orthogonality} The vectors $k^\star, v^\star \in  \mathbb{S}^{d-1}$ are orthogonal, i.e., $k^{\star \top}v^{\star}=0$.
\end{assumption}
This assumption is made everywhere in the remainder of the paper, even though it is not reminded explicitly at each result. It is a relatively mild assumption in a high-dimensional setting where any two independent vectors uniformly distributed on the sphere are close to being orthogonal. 
\paragraph{Oracle predictor.}Our first result characterizes the risk of the proposed transformer model  \eqref{def:our_transformer} with oracle parameters $(k^\star,v^\star)$. 
All the proofs of the paper are deferred to the Appendix.
\begin{theorem}
\label{theo:optimalrisk}
There exists a function $\mathcal{R}_\lambda^<: \R^5 \to \R$ such that, for any $(k,v) \in (\mathbb{S}^{d-1})^2$,
\begin{equation*}
\mathcal{R}_\lambda(k, v) = \mathcal{R}_\lambda^<(\kappa, \nu, \theta, \eta, \rho) \, ,
\end{equation*}
where $\kappa:= k^\top k^\star$, $\nu := v^\top v^\star$, $\theta := v^\top k^\star$, $\eta := k^\top v^\star$, and $\rho := k^\top v$. A closed-form expression of $\mathcal{R}_\lambda^<$ is given in Appendix \ref{proof:risk_on_sphere}.
In particular,
    \begin{align*}
        \mathcal{R}_\lambda(k^\star, v^\star) &= \mathcal{R}_\lambda^<(1, 1, 0, 0, 0) \\
        &= \gamma^2- 2 \gamma^2 \, \erf \bigg(\lambda \sqrt{\frac{d}{2(1+2\lambda^2\gamma^2)}} \bigg)  +  \gamma^2 \zeta\Big(\lambda \sqrt{\frac{d}{2}} ,\lambda^2 \gamma^2\Big)   +   (\length -1)  \zeta(0, \lambda^2)  + \varepsilon^2 \, ,
    \end{align*}
    where, for $t,\gamma\in \mathbb{R}$,
    \begin{equation}
    \label{eq:def-zeta}
        \zeta(t,\gamma^2) := \Esp \left[ \erf^2(t+G)\right] \, , \qquad  G\sim\mathcal{N}(0,\gamma^2) \, .
    \end{equation}
\end{theorem}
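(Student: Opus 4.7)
The plan is to expand the squared error and reduce every expectation to Gaussian integrals in a few scalar variables. Writing
\begin{equation*}
\cR_\lambda(k,v) = \Esp[Y^2] - 2\,\Esp[Y\, T_\lambda^{(k,v)}(\mathbb{X})] + \Esp\big[T_\lambda^{(k,v)}(\mathbb{X})^2\big],
\end{equation*}
the first reduction is to observe that, by rotational invariance of the Gaussian, the joint law of the scalar projections $(X_\ell^\top k^\star, X_\ell^\top v^\star, X_\ell^\top k, X_\ell^\top v)$ depends on $(k,v,k^\star,v^\star)$ only through the inner products $\kappa,\nu,\theta,\eta,\rho$ (plus the constraints $\|k\|_2=\|v\|_2=\|k^\star\|_2=\|v^\star\|_2=1$ and $k^{\star\top}v^\star=0$). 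Since both $Y$ and $T_\lambda^{(k,v)}(\mathbb{X})$ are expressible through these projections only, the existence of a function $\mathcal{R}_\lambda^<$ as in the statement follows automatically; the rest of the proof is to compute it and then evaluate at $(1,1,0,0,0)$.

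The easy term is $\Esp[Y^2]=\Esp[(X_{J_0}^\top v^\star)^2]+\varepsilon^2$. Conditionally on $J_0$, $X_{J_0}^\top v^\star$ is Gaussian with mean $\sqrt{d/2}\,k^{\star\top}v^\star=0$ (by Assumption~\ref{hyp:orthogonality}) and variance $\gamma^2$, so $\Esp[Y^2]=\gamma^2+\varepsilon^2$. For the cross term, I condition on $J_0$ and split the sum defining $T_\lambda^{(k,v)}$ into the $\ell=J_0$ contribution and the $\ell\neq J_0$ contributions; independence of the tokens given $J_0$ factorises each summand, and for $\ell\neq J_0$ the factor $\Esp[X_{J_0}^\top v^\star\mid J_0]=0$ kills the whole term. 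What remains is $\Esp[Y\, T_\lambda^{(k,v)}]=\Esp\bigl[(X^\top v^\star)(X^\top v)\,\erf(\lambda X^\top k)\bigr]$ with $X\sim\cN(\sqrt{d/2}\,k^\star,\gamma^2 I_d)$, a Gaussian integral in three jointly normal variables whose parameters are polynomials in $\kappa,\nu,\theta,\eta,\rho,\gamma$.

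The main obstacle is the quadratic term $\Esp[(T_\lambda^{(k,v)})^2]$: expanding the double sum and conditioning on $J_0$ gives four types of pairs $(\ell,\ell')$, namely $\ell=\ell'=J_0$, $\ell=\ell'\neq J_0$, $\ell\neq\ell'$ with exactly one of them equal to $J_0$, and $\ell\neq\ell'$ both different from $J_0$. For each type, independence of the tokens given $J_0$ reduces the quantity to a product of Gaussian expectations of the form $\Esp[\erf(aG_1+b)\,(cG_2+e)]$ or $\Esp[\erf(a_1G_1+b_1)\erf(a_2G_2+b_2)(c_1G_3+e_1)(c_2G_4+e_2)]$ where $(G_1,G_2,G_3,G_4)$ is a centered Gaussian with a covariance structure determined by $\kappa,\nu,\theta,\eta,\rho$. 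These are handled via standard tools: the identity $\Esp[\erf(aN+b)]=\erf(b/\sqrt{1+2a^2})$ for $N\sim\cN(0,1)$, Stein's lemma to peel off the linear factors, and, for terms where both linear factors survive, an explicit two-dimensional Gaussian integral that can be evaluated by writing $\erf$ as a Fourier transform (or by differentiating Mehler-type formulas). The bookkeeping is tedious but systematic; the closed form for $\mathcal{R}_\lambda^<$ is then just the sum of these explicit pieces.

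Finally, I specialise to $(k,v)=(k^\star,v^\star)$, i.e., $(\kappa,\nu,\theta,\eta,\rho)=(1,1,0,0,0)$. Under Assumption~\ref{hyp:orthogonality} and $\rho=0$, the pair of projections $(X_\ell^\top k^\star,X_\ell^\top v^\star)$ is independent for each $\ell$, and moreover for $\ell=J_0$ the mean of $X_{J_0}^\top v^\star$ vanishes while the mean of $X_{J_0}^\top k^\star$ equals $\sqrt{d/2}$. Consequently, all $\ell\neq\ell'$ contributions to $\Esp[(T_\lambda^{(k^\star,v^\star)})^2]$ vanish (each contains a factor $\Esp[\erf(\lambda G)\,G]$ with $G$ centered Gaussian, which is zero, or a factor with zero mean). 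Only the diagonal $\ell=\ell'$ terms survive, giving $\gamma^2\,\Esp[\erf^2(\lambda\sqrt{d/2}+\lambda\gamma N)]$ for $\ell=J_0$ and $(L-1)\,\Esp[\erf^2(\lambda N)]$ for $\ell\neq J_0$, with $N\sim\cN(0,1)$. Rewriting these as $\gamma^2\zeta(\lambda\sqrt{d/2},\lambda^2\gamma^2)$ and $(L-1)\zeta(0,\lambda^2)$, and combining with the cross term $-2\gamma^2\,\erf(\lambda\sqrt{d/(2(1+2\lambda^2\gamma^2))})$ computed from the identity above, yields the announced formula.
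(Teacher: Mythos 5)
Your proposal is correct and follows essentially the same route as the paper: both reduce the problem to the five Gram-matrix entries by observing that the joint Gaussian law of the projections $(X_\ell^\top k^\star, X_\ell^\top v^\star, X_\ell^\top k, X_\ell^\top v)$ depends only on these inner products, expand the square, factor out terms using conditional independence given $J_0$, and evaluate the surviving Gaussian--$\erf$ integrals via Stein's lemma and the identity $\Esp[\erf(t+G)]=\erf(t/\sqrt{1+2\gamma^2})$. The only cosmetic difference is that you invoke rotational invariance to argue existence of $\mathcal{R}^<_\lambda$ in the abstract, while the paper establishes it constructively by writing out the risk in terms of standard Gaussians with explicit covariance determined by $(\kappa,\nu,\theta,\eta,\rho)$.
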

This result is fundamental for the analysis of gradient descent studied in the next section since it reduces the dimension of the dynamical system defined by the optimization dynamics. Before delving into the optimization analysis, we study below the statistical optimality of the estimator $\mathcal{R}_\lambda(k^\star, v^\star)$ and its comparison with linear regression.

\paragraph{Asymptotic Bayes optimality.} 
Let us start by observing that the Bayes risk associated with problem \eqref{pb:learning_pb} is larger than $\varepsilon^2$, which follows from elementary properties of the conditional expectation \citep[][Chapter 11]{legall2022measure}. Indeed, using the Pythagorean theorem, one easily shows that
\begin{equation}    \label{eq:bayes-min}
\Esp[(Y - \Esp[Y|X])^2] \geq \Esp[(Y - \Esp[Y|X,J_0])^2] = \Esp[\xi^2] = \varepsilon^2 \, .   
\end{equation}
Then, the following corollary to Theorem \ref{theo:optimalrisk} shows that the oracle predictor achieves the Bayes-optimal risk in the asymptotic scaling $L \ll 1/\lambda^2 \ll d$.
\begin{corollary}   \label{cor:oracle-asymptotic-bayes}
Assume a joint asymptotic scaling where $d\to\infty$ and $L = o(d)$. Taking $\lambda$ such that $\lambda \sqrt{d} \to \infty$ and $\lambda \sqrt{L} \to 0$, we have
\[
\mathcal{R}_\lambda(k^\star, v^\star) \xrightarrow{} \varepsilon^2 \, .
\]
Thus, in this asymptotic regime, the oracle predictor $T_\lambda^{(k^\star,v^\star)}$ is asymptotically Bayes optimal.
\end{corollary}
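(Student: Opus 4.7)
The plan is to start from the closed-form expression for $\mathcal{R}_\lambda(k^\star, v^\star)$ given in Theorem \ref{theo:optimalrisk} and show that each of the four non-$\varepsilon^2$ terms has a computable limit, with the three contributions involving $\gamma^2$ cancelling pairwise and the sum-over-tokens term vanishing. Combining with the Bayes lower bound $\varepsilon^2$ already noted in \eqref{eq:bayes-min} will yield asymptotic Bayes optimality.

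First, I would observe that the assumption $\lambda \sqrt{L} \to 0$, together with $L \geq 1$, forces $\lambda \to 0$, so in particular $\lambda^2 \gamma^2 \to 0$. I then treat the three nontrivial terms:
\begin{itemize}
    \item For the $\erf$ term, the argument is $\lambda \sqrt{d / (2(1 + 2 \lambda^2 \gamma^2))}$. Since $1 + 2 \lambda^2 \gamma^2 \to 1$ and $\lambda \sqrt{d} \to \infty$ by hypothesis, this argument tends to $+\infty$, so $\erf(\cdot) \to 1$ and the term converges to $-2\gamma^2$.
    \item For $\gamma^2 \zeta(\lambda \sqrt{d/2},\, \lambda^2 \gamma^2)$, recall $\zeta(t,\sigma^2) = \mathbb{E}[\erf^2(t + G)]$ with $G \sim \mathcal{N}(0,\sigma^2)$. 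Writing $G = \lambda \gamma \tilde G$ with $\tilde G \sim \mathcal{N}(0,1)$, we have $\lambda \sqrt{d/2} + \lambda \gamma \tilde G \to +\infty$ almost surely (since $\lambda \sqrt{d} \to \infty$ while $\lambda \gamma \to 0$), so $\erf^2(\cdot) \to 1$ pointwise. Bounded convergence (with the envelope $\erf^2 \leq 1$) gives $\zeta \to 1$, so this term converges to $\gamma^2$.
    \item For $(L-1)\zeta(0,\lambda^2) = (L-1)\,\mathbb{E}[\erf^2(\lambda \tilde G)]$ with $\tilde G \sim \mathcal{N}(0,1)$, I would use the elementary bound $|\erf(x)| \leq 2|x|/\sqrt{\pi}$ to get $\mathbb{E}[\erf^2(\lambda \tilde G)] \leq (4/\pi)\lambda^2$. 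Hence this term is bounded by $(4/\pi)(L-1)\lambda^2 = O(L \lambda^2)$, which vanishes by the assumption $\lambda \sqrt{L} \to 0$.
\end{itemize}

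Adding up the four $\gamma^2$-free contributions plus $\varepsilon^2$ yields the limit $\gamma^2 - 2\gamma^2 + \gamma^2 + 0 + \varepsilon^2 = \varepsilon^2$. To conclude asymptotic Bayes optimality, I would invoke \eqref{eq:bayes-min}, which shows that the Bayes risk is at least $\varepsilon^2$; since $T_\lambda^{(k^\star,v^\star)}$ attains this value in the limit, no measurable predictor can do asymptotically better.

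There is no genuine obstacle here: the argument is a direct asymptotic computation based on Theorem \ref{theo:optimalrisk}. The only point requiring a small amount of care is the joint limit in $\zeta(\lambda\sqrt{d/2},\lambda^2\gamma^2)$, where both arguments move simultaneously; this is where the separate hypotheses $\lambda\to 0$ (from $\lambda\sqrt{L}\to 0$) and $\lambda\sqrt{d}\to\infty$ both play a role, and is handled cleanly by a single application of bounded convergence.
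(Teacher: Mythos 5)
Your proof is correct and follows essentially the same route as the paper: a term-by-term limit computation in the closed form from Theorem~\ref{theo:optimalrisk}, combined with the Bayes lower bound~\eqref{eq:bayes-min}. The only minor variation is in the term $\gamma^2\zeta\big(\lambda\sqrt{d/2},\lambda^2\gamma^2\big)$, where you use dominated convergence (with envelope $\erf^2\leq 1$) after noting $\lambda\to 0$, whereas the paper uses Jensen's inequality together with Lemma~\ref{lem:technical_results}$(ii)$ to sandwich $\zeta$ between $\erf^2\big(\lambda\sqrt{d/(2(1+2\lambda^2\gamma^2))}\big)$ and $1$; both arguments are valid.
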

Note that Corollary \ref{cor:oracle-asymptotic-bayes} holds for any finite $L \in \N_{>0}$, but $L$ may also tend to infinity, as long as $L = o(d)$. 
Let us give an intuition on why this result holds and where the scalings of $L$ and $\lambda$ intervene. The oracle predictor can be decomposed as
\begin{align}   \label{eq:exp-risk-oracle}
T_\lambda^{(k^\star,v^\star)}(\mathbb{X}) = \underbrace{X_{J_0}^\top v^\star}_{= \Esp[Y|\mathbb{X}, J_0]} \mathrm{erf}(\underbrace{\lambda X_{J_0}^\top k^\star}_{= \Theta(\lambda \sqrt{d})}) + \sum_{j \neq J_0} \underbrace{X_j^\top v^\star}_{= \Theta(1)} \mathrm{erf}(\underbrace{\lambda X_j^\top k^\star}_{= \Theta(\lambda)}) 
\end{align}
With the scaling $\lambda \sqrt{d} \to \infty$, the argument of the first $\mathrm{erf}$ nonlinearity diverges to infinity with $d$. Thus it reaches the saturating part of $\mathrm{erf}$, so the first term in \eqref{eq:exp-risk-oracle} converges to $\Esp[Y|\mathbb{X}, J_0]$. On the other hand, the argument of the $\mathrm{erf}$ nonlinearities inside the sum are of order~$\lambda = o(1)$. Thus they are in the linear part of $\mathrm{erf}$. Therefore, the sum consists of $L-1$ independent terms, each of magnitude~$\lambda$. As a consequence, by the central limit theorem,
the whole sum is of order $\Theta(\lambda \sqrt{L})$, and we get
\begin{align*}
T_\lambda^{(k^\star,v^\star)}(\mathbb{X}) \approx \Esp [Y|\mathbb{X}, J_0] + \Theta(\lambda \sqrt{L}) \, .    
\end{align*}
Due to the scaling $\lambda \sqrt{L} \to 0$, the second term decays to zero, and the oracle predictor implements the conditional expectation of $Y$ given $\mathbb{X}$ and $J_0$. This is the best that we can hope for: the predictor succeeds in inferring the latent variable $J_0$, then gives the best possible prediction of $Y$ given $\mathbb{X}$ and $J_0$.
We also see the crucial role played by the nonlinearity of $\mathrm{erf}$, whose linear part acts for $j \neq J_0$ and saturating part for $j=J_0$. In particular, the reasoning would not hold for a linear activation.

\paragraph{Linear model.} The asymptotic optimality of our oracle predictor is particularly striking in comparison to the risk of the optimal linear predictor. More precisely, let 
\[
\beta^\star \in \argmin_{\beta \in \R^{dL}}\Esp\Big[(Y-(X_1^\top, \hdots , X_L^\top) \beta)^2\Big]
\]
be the optimal linear predictor for the regression task \eqref{pb:learning_pb}. Its associated risk is
$\mathcal{R}(\beta^\star)=\Esp\big[(Y-(X_1^\top, \hdots , X_L^\top)\beta^{\star})^2\big]$. Both the optimal predictor and its risk can be made explicit as follows.
\begin{proposition}
\label{prop:bayes_risk_linear_model}
Let $p_j = \Prob(J_0 = j)$ for $j \in \{1, \dots, L\}$. Then the optimal linear predictor is parameterized by
    $
    \beta^\star =  (b_1 v^\star, \dots,  b_L v^\star)
    $,
   with
    $b_j = \frac{\gamma^2 p_j}{1 + p_j(\gamma^2-1)}$,
    and its risk is 
\[
    \mathcal{R}(\beta^\star) = \varepsilon^2 + \gamma^2 - \gamma^4 \sum_{j=1}^L \frac{p_j^2}{1 + p_j(\gamma^2 - 1)} \, .
\]
In particular, 
\[
    \mathcal{R}(\beta^\star) \geq \varepsilon^2 + \gamma^2 - \gamma^2 (\gamma^2 + 1) \max_{j=1, \dots, L} p_j \, .
\]
\end{proposition}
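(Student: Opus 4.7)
The closed-form expression for $\beta^\star$ is a standard ordinary least-squares computation: the optimal linear predictor satisfies the normal equations $\Sigma \beta^\star = \mu$, where $\Sigma = \Esp[XX^\top] \in \R^{dL \times dL}$ is the Gram matrix of the stacked input $X = (X_1^\top,\dots,X_L^\top)^\top$ and $\mu = \Esp[XY] \in \R^{dL}$. The whole proof boils down to computing these two moments (using the law of total expectation with respect to $J_0$) and inverting $\Sigma$. Assumption \ref{hyp:orthogonality} will play a central role: it forces the rank-one piece of each block of $\Sigma$ to annihilate $v^\star$, giving a clean closed form.

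First, I would compute $\mu$. Conditioning on $J_0 = j$, the tokens $(X_\ell)_{\ell \neq j}$ are centered and independent of $X_j$, so $\Esp[X_\ell Y \mid J_0 = j] = 0$ for $\ell \neq j$, while $\Esp[X_j Y \mid J_0 = j] = \Esp[X_j X_j^\top \mid J_0 = j]\, v^\star = (\gamma^2 I_d + \tfrac{d}{2} k^\star k^{\star\top}) v^\star = \gamma^2 v^\star$, using $k^{\star\top} v^\star = 0$. Summing over $J_0$, this yields $\Esp[X_j Y] = p_j \gamma^2 v^\star$. A similar conditioning argument for $\Sigma$ shows the off-diagonal blocks $\Esp[X_j X_\ell^\top]$ ($j \neq \ell$) vanish (at least one of the two has conditional mean zero and they are conditionally independent), while the diagonal blocks are $A_j = (1 + p_j(\gamma^2 - 1)) I_d + p_j \tfrac{d}{2} k^\star k^{\star\top}$.

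Because $\Sigma$ is block-diagonal, I can invert it block-by-block. Applying $A_j$ to any vector orthogonal to $k^\star$ reduces to multiplication by the scalar $a_j := 1 + p_j(\gamma^2 - 1)$; in particular $A_j v^\star = a_j v^\star$, so $\beta_j^\star = A_j^{-1}(p_j \gamma^2 v^\star) = b_j v^\star$ with the claimed $b_j$. The risk then follows from the standard identity $\mathcal{R}(\beta^\star) = \Esp[Y^2] - \mu^\top \beta^\star$: a direct computation gives $\Esp[Y^2] = \gamma^2 + \varepsilon^2$ (again using $k^{\star\top} v^\star = 0$), and $\mu^\top \beta^\star = \sum_j p_j \gamma^2 \cdot b_j = \gamma^4 \sum_j p_j^2/(1 + p_j(\gamma^2-1))$.

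Finally, for the lower bound, I observe that the function $f(p) = p/(1 + p(\gamma^2-1))$ is increasing on $[0,1]$ (its derivative is $(1+p(\gamma^2-1))^{-2}$), so $p_j^2/(1+p_j(\gamma^2-1)) \leq p_j\, f(p_{\max})$ with $p_{\max} := \max_j p_j$. Summing and using $\sum_j p_j = 1$ gives $\sum_j p_j^2/(1+p_j(\gamma^2-1)) \leq f(p_{\max})$, and a short case analysis (distinguishing $\gamma^2 \geq 1$ and $\gamma^2 < 1$, using $1 + p_{\max}(\gamma^2-1) \geq \min(1,\gamma^2)$) yields $\gamma^2/(1 + p_{\max}(\gamma^2-1)) \leq \gamma^2+1$, which rearranges to the stated inequality. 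None of the steps is really a conceptual obstacle; the main care is simply in the conditioning argument for $\Sigma$ and $\mu$, and in remembering to use Assumption \ref{hyp:orthogonality} twice (once to kill $k^\star k^{\star\top} v^\star$ in $\mu$, and once to diagonalize $A_j$ on $v^\star$).
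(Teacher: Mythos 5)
Your proposal is correct and follows essentially the same route as the paper: compute the Gram matrix $\Sigma$ and the cross-moment $\mu$ by conditioning on $J_0$, observe $\Sigma$ is block-diagonal with blocks $A_j = a_j I_d + p_j \tfrac{d}{2}k^\star k^{\star\top}$, invert on $v^\star$ (the paper invokes Sherman--Morrison, you note directly that $A_j v^\star = a_j v^\star$ by orthogonality, which is the same fact), and evaluate $\mathcal{R}(\beta^\star)=\Esp[Y^2]-\mu^\top\beta^\star$. Your derivation of the final lower bound (monotonicity of $p\mapsto p/(1+p(\gamma^2-1))$ followed by the case split on $\gamma^2\gtrless 1$) is a minor rearrangement of the paper's argument (which bounds $1+p_j(\gamma^2-1)\geq\min(1,\gamma^2)$ in each summand and then uses $\sum p_j^2\leq\max_j p_j$); both are correct and yield the same inequality.
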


This result calls for a few comments. If the number of tokens is $L=1$ or if $J_0$ is a constant location (meaning that one $p_j$ is equal to $1$ while the others are equal to $0$), then the learning problem \eqref{pb:learning_pb} corresponds to a standard linear regression. In this case, $\mathcal R(\beta^{\star})=\varepsilon^2$, and the linear predictor $(X_1, \hdots , X_L)\mapsto (X_1^\top, \hdots , X_L^\top)\beta^{\star}$ achieves the Bayes risk. 
\sidecaptionvpos{figure}{c}
\begin{SCfigure}[][ht]
\centering
\includegraphics[width=0.49\textwidth]{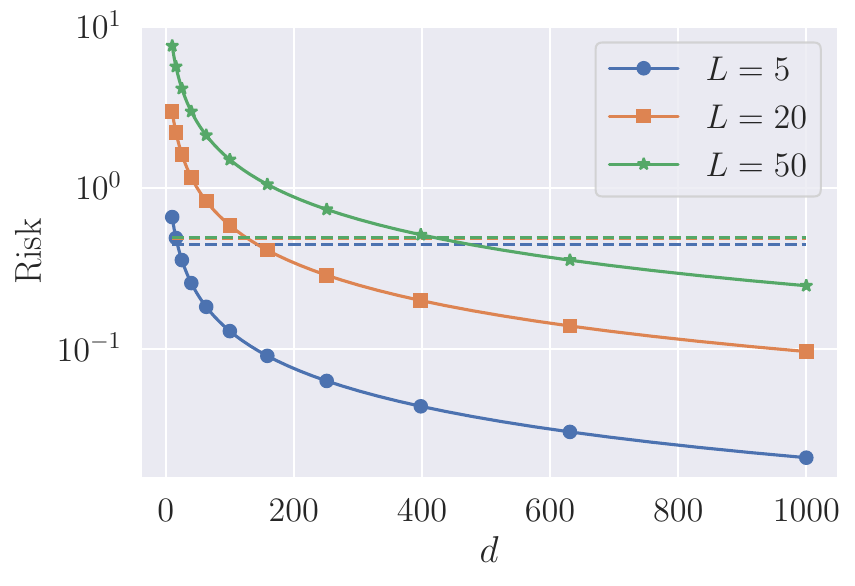}
\caption{Risk of the oracle predictor (Theorem~\ref{theo:optimalrisk}, solid lines) and of the best linear predictor (Proposition~\ref{prop:bayes_risk_linear_model}, dashed lines), depending on the dimensions $d$ and $L$. The oracle predictor outperforms the linear predictor when scaling $d$. We take $\varepsilon^2=0$, $\gamma=1/\sqrt{2}$, $\lambda=1/d^{0.4}$, and all $p_j$ equal to $1/L$.}
\label{fig:risk_comparison}
\end{SCfigure}
At the other end of the spectrum, in the case where $J_0$ is uniform over $\{1, \dots, L\}$, the formula for the risk of the linear predictor simplifies to $\mathcal{R}(\beta^\star) = \varepsilon^2 + \gamma^2 - \frac{\gamma^4}{\gamma^2+\length -1}$. When  $L\to \infty$, this risk tends to $\varepsilon^2+\gamma^2$, that is, the performance of the null predictor. In other words, the optimal linear predictor performs no better than always predicting zero. More generally, this conclusion is true in any limit where $L\to \infty$ and $\max p_j \to 0$. This can be explained by the fact that the location of the relevant token for prediction is random, varying from sentence to sentence. Unable to leverage this latent information, the linear regressor balances all its coefficients, resulting in poor prediction performance. This stands in sharp contrast to Corollary \ref{cor:oracle-asymptotic-bayes}, which shows that the oracle predictor $T_\lambda^{(k^{\star},v^{\star})}$ is able to account for the complexity of the task, at least asymptotically. This is also illustrated by Figure \ref{fig:risk_comparison}, which compares the value of the risks given by Theorem \ref{theo:optimalrisk} and Proposition \ref{prop:bayes_risk_linear_model}. 

Naturally, implementing the attention-based oracle predictor $T_\lambda^{(k^{\star},v^{\star})}$ requires knowledge of the parameters $k^{\star}$ and $v^{\star}$. Our goal in the next section is therefore to show that gradient descent is able to recover these parameters.

\section{Gradient descent provably recovers the oracle predictor}   \label{sec:optim}
This section is devoted to the analysis of the optimization dynamics in $(k,v) \in (\mathbb{S}^{d-1})^2$ of the risk
\begin{equation*}
\cR_\lambda(k, v) = \Esp \Big[ \Big(Y - T_\lambda^{(k,v)}(\mathbb{X}) \Big)^2 \Big] = \Esp \Big[ \Big(Y -\erf\big(\lambda \mathbb{X} k\big)^\top  \mathbb{X}v
\Big)^2 \Big] \, . 
\end{equation*}
We emphasize that $\cR_\lambda(k, v)$ is a theoretical risk, which depends on the distribution of the pair $(\mathbb X,Y)$ (defined in Section \ref{sec:regressiontask}). In practice, an empirical version of this risk is minimized. As we show experimentally (see Figure \ref{fig:experiment-sgd}), the stochastic dynamics induced by the empirical version of the risk are qualitatively similar to the deterministic dynamics of the theoretical risk. In the remainder of the article, we focus on the theoretical risk for simplicity, and leave the empirical risk for future research. 

Our optimization method is the Projected (Riemannian) Gradient Descent (PGD), described below.
\begin{definition}[PGD]
    Given an initialization $(k_0, v_0) \in (\mathbb{S}^{d-1})^2$, a step size $\alpha>0$, and an inverse temperature sequence $(\lambda_t)_{t \geq 0}$, the sequence $(k_t,v_t)_{t \geqslant 0} \in  (\mathbb{S}^{d-1})^2$ is recursively defined by
    \begin{align}   \label{eq:pgd-iterations}
        \begin{split}
        k_{t+1} &= \mathrm{Proj}_{\mathbb{S}^{d-1}}(k_t - \alpha (I_d-k_t k_t^\top) \nabla_k \mathcal{R}_{\lambda_t}(k_t,v_t)) %
        \, , \\
        v_{t+1} &= \mathrm{Proj}_{\mathbb{S}^{d-1}}(v_t - \alpha (I_d-v_t v_t^\top) \nabla_v \mathcal{R}_{\lambda_t}(k_t,v_t)) %
        \, ,
        \end{split}
\end{align}
where $\mathrm{Proj}_{\mathbb{S}^{d-1}}: x \mapsto x/\|x\|_2$ denotes the Euclidean projection on the unit sphere of $\R^d$.  
\end{definition}
The operators $(I_d-k_t k_t^\top)$ and $(I_d-v_t v_t^\top)$ correspond to Riemannian gradient descent \citep[][Section 4.3]{boumal2023introduction}, meaning that we compute the gradient of the risk on the Riemannian manifold $(\mathbb{S}^{d-1})^2$. In other words, the gradient step is performed on the tangent space to the sphere at the current iterate. This is a precaution we are taking because, in the analysis of the dynamics, we rely on an expression of the risk \eqref{def:risk} that is valid only on this manifold.
In addition, this ensures that the subsequent projection on $\mathbb{S}^{d-1}$ is always well-defined, despite the fact that the sphere is a non-convex set, because iterates always avoid the pathological cases $k=0$ or $v=0$.

Experimentally, we observe in Figure \ref{fig:init-sphere} that PGD is able to recover the oracle parameters $(k^\star, v^\star)$. Note that running the PGD iterates \eqref{eq:pgd-iterations} involves computing the gradients $\nabla_k \mathcal{R}_{\lambda_t}(k_t,v_t)$ and $\nabla_v \mathcal{R}_{\lambda_t}(k_t,v_t)$, which is non-trivial a priori. A direct approach using Monte Carlo simulations would require a large number of sample points to reduce variance, which is computationally intractable in particular in high-dimension, and in any case gives an approximate result. Instead, we leverage our closed form formula for $\mathcal{R}_\lambda^<$ from Theorem~\ref{theo:optimalrisk} to get exact values for the gradients  (up to numerical errors). Interestingly, we also observe in Figure \ref{fig:init-sphere} that $v$ aligns with $v^\star$ much faster than $k$ aligns with $k^\star$. This is typical of two-timescale dynamics, which is a common framework in analysis of non-convex learning dynamics \citep{heusel2017gans,dagreou2022framework,hong2023two,marion2023leveraging,berthier2024learning,marion2024implicit}.

\begin{figure}[ht]
    \begin{subfigure}[b]{0.99\textwidth}
    \centering
    \includegraphics[width=\textwidth]{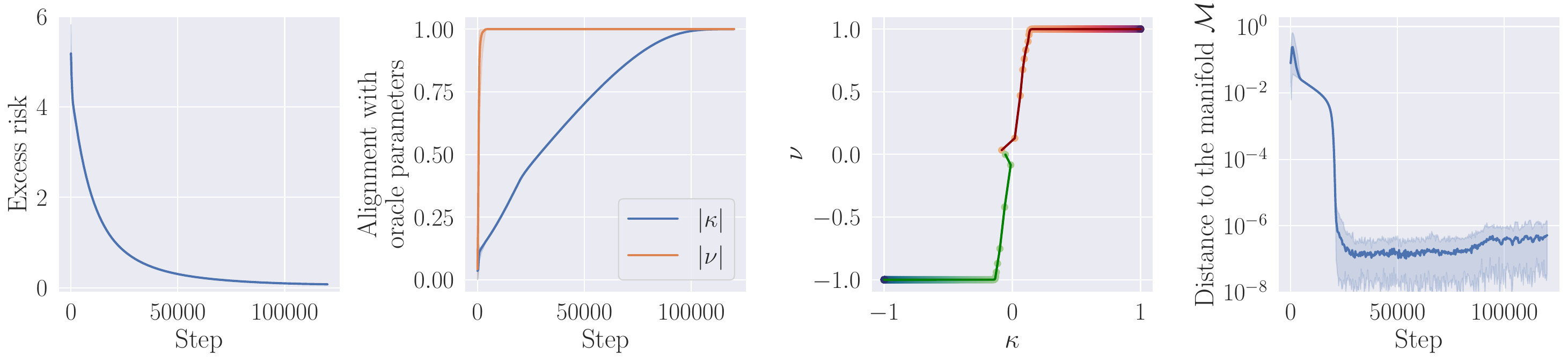}
    \caption{From a random initialization on $(\mathbb{S}^{d-1})^2$.}
    \label{fig:init-sphere}
    \end{subfigure}
     \hfill
    \begin{subfigure}[b]{0.99\textwidth}
     \centering
    \includegraphics[width=\textwidth]{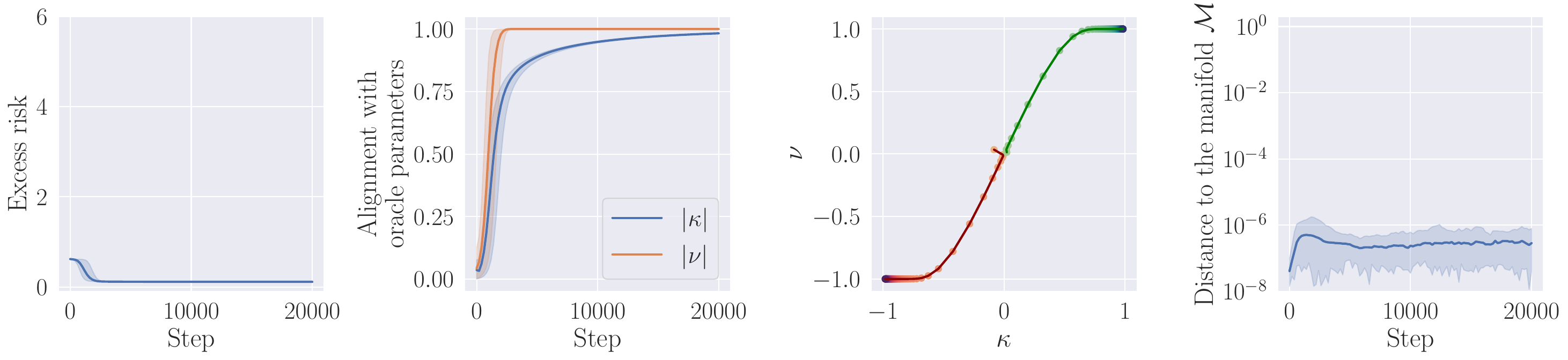}
    \caption{From a random initialization on $\cM$ (see Eq. \eqref{eq:def-M}).}
    \label{fig:init-manifold}
    \end{subfigure}
    \caption{Convergence of PGD to the oracle parameters. \textbf{Left:} Excess risk as a function of the number of steps. \textbf{Middle left:} Alignment $|\kappa| = |k^\top k^\star|$ and $|\nu| = |v^\top v^\star|$ with the oracle parameters. \textbf{Middle right:} Trajectories of $\kappa$ and $\nu$ in two repetitions of the experiments. Each repetition corresponds to a color, the trajectory starts in the middle and ends at a corner of the plot. \textbf{Right:} Distance to the invariant manifold $\cM$. In all plots except the middle right ones, the experiment is repeated $30$ times with independent random initializations, and $95\%$ percentile intervals are plotted (but are not visible when the variance is too small). Parameters are $d=400$, $L=10$, $\gamma = \sqrt{1/2}$, and (a) $\lambda_t = 1/(1 + 10^{-4}t)$, (b) $\lambda_t = 0.1$. More details are given in Appendix \ref{app:experimental-details}.}
    \label{fig:experiment-gd}
\end{figure}

Moving on to the mathematical study, even with the formula for $\mathcal{R}_\lambda^<$, a full analysis of the dynamics~\eqref{eq:pgd-iterations} is difficult. For instance, the dynamics~\eqref{eq:pgd-iterations} can be formulated in terms of the five variables of $\mathcal{R}_\lambda^<$, but then one needs to study a $5$-dimensional highly nonlinear dynamical system.
In the following, we consider the case where the parameters are initialized on the submanifold of $(\mathbb{S}^{d-1})^2$
\begin{equation}    \label{eq:def-M}
    \mathcal{M} = \{(k,v) \in \mathbb{S}^{d-1} \times \mathbb{S}^{d-1}, k^\top v^\star = 0, v^\top k^\star = 0, k^\top v = 0\} \, .
\end{equation}
We introduce this manifold on the one hand owing to the observation in Figure \ref{fig:init-sphere} (right) that the dynamics converge to this manifold even when initialized on the sphere, and on the other hand because this allows to reduce the problem to a lower-dimensional subspace and to simplify the expression of the risk. Clearly, due to Assumption \ref{hyp:orthogonality}, the oracle parameters $(k^\star, v^\star)$ belong to $\cM$. A first key property of this manifold is invariance under the PGD dynamics.
\begin{lemma}
\label{lem:invariant_manifold}
 The manifold $\mathcal{M}$ is invariant under the PGD dynamics \eqref{eq:pgd-iterations}, in the sense that if $(k_t,v_t)\in \mathcal{M}$, then $(k_{t+1},v_{t+1})\in \mathcal{M}$.
 \end{lemma}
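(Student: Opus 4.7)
The plan is to first exhibit a reflection symmetry of $\mathcal{R}_\lambda$ that forces the Riemannian gradients at every point of $\mathcal{M}$ to lie in very restricted subspaces, and then to verify by linear algebra that this structure makes the PGD step \eqref{eq:pgd-iterations} preserve each of the three inner-product constraints defining $\mathcal{M}$. I expect the main obstacle to be setting up the right symmetry; once the gradient structure is in hand, the rest is bookkeeping.

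\emph{Step 1: a reflection symmetry.} Let $\sigma$ denote the orthogonal reflection of $\mathbb{R}^d$ that sends $v^\star$ to $-v^\star$ and fixes $\{v^\star\}^\perp$ pointwise. By Assumption~\ref{hyp:orthogonality} we have $\sigma k^\star = k^\star$, so the marginal law of each token $X_\ell$ is invariant under $\sigma$. Since $\xi$ enters $\mathcal{R}_\lambda$ only through its second moment $\varepsilon^2$ (the cross term $\mathbb{E}[\xi\,T_\lambda^{(k,v)}(\mathbb{X})]$ vanishes by independence of $\xi$ and $\mathbb{X}$), I may assume without loss of generality that $\xi$ is symmetric, and then $(\sigma\mathbb{X}, -Y) \stackrel{d}{=} (\mathbb{X}, Y)$. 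Combined with the identities $T_\lambda^{(k,v)}(\sigma\mathbb{X}) = T_\lambda^{(\sigma k, \sigma v)}(\mathbb{X})$ and $T_\lambda^{(\sigma k, -\sigma v)} = -T_\lambda^{(\sigma k, \sigma v)}$, this substitution yields
\[
\mathcal{R}_\lambda(k, v) \;=\; \mathcal{R}_\lambda(\sigma k, -\sigma v).
\]
A direct computation shows that $(k, v) \mapsto (\sigma k, -\sigma v)$ acts on the five scalars of Theorem~\ref{theo:optimalrisk} as $(\kappa, \nu, \theta, \eta, \rho) \mapsto (\kappa, \nu, -\theta, -\eta, -\rho)$, so $\mathcal{R}_\lambda^<$ is even under the joint sign flip of its last three arguments. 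A first-order Taylor expansion at any $(\kappa, \nu, 0, 0, 0) \in \mathcal{M}$ then forces $\partial_\theta \mathcal{R}_\lambda^< = \partial_\eta \mathcal{R}_\lambda^< = \partial_\rho \mathcal{R}_\lambda^< = 0$ everywhere on $\mathcal{M}$.

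\emph{Step 2: PGD preserves $\mathcal{M}$.} Writing $\mathcal{R}_\lambda(k,v) = \mathcal{R}_\lambda^<(k^\top k^\star, v^\top v^\star, v^\top k^\star, k^\top v^\star, k^\top v)$ and applying the chain rule, the Step 1 vanishing collapses the Riemannian gradients at $(k_t, v_t) \in \mathcal{M}$ to
\[
(I_d - k_t k_t^\top)\nabla_k \mathcal{R}_{\lambda_t}(k_t, v_t) \;=\; \partial_\kappa \mathcal{R}_{\lambda_t}^< \,(k^\star - \kappa_t k_t), \quad (I_d - v_t v_t^\top)\nabla_v \mathcal{R}_{\lambda_t}(k_t, v_t) \;=\; \partial_\nu \mathcal{R}_{\lambda_t}^< \,(v^\star - \nu_t v_t),
\]
so the pre-projection iterates satisfy $\tilde{k}_{t+1} \in \mathrm{span}(k_t, k^\star)$ and $\tilde{v}_{t+1} \in \mathrm{span}(v_t, v^\star)$. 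All four pairwise inner products between $\{k_t, k^\star\}$ and $\{v_t, v^\star\}$ vanish by the hypotheses $(k_t, v_t) \in \mathcal{M}$ and Assumption~\ref{hyp:orthogonality}; by bilinearity this immediately yields $\tilde{k}_{t+1}^\top v^\star = \tilde{v}_{t+1}^\top k^\star = \tilde{k}_{t+1}^\top \tilde{v}_{t+1} = 0$. Since $\mathrm{Proj}_{\mathbb{S}^{d-1}}$ is a positive rescaling of each vector, these scalar identities persist after normalization, giving $(k_{t+1}, v_{t+1}) \in \mathcal{M}$.
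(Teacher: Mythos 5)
Your proof is correct and follows the same two-step structure as the paper's: first the analogue of Lemma~\ref{lem:vanishing-derivatives} (vanishing of $\partial_\theta \mathcal{R}^<$, $\partial_\eta \mathcal{R}^<$, $\partial_\rho \mathcal{R}^<$ at $\theta=\eta=\rho=0$), then the chain rule plus a direct inner-product verification that all three constraints defining $\mathcal{M}$ are preserved by the pre-projection step and by the sphere projection. Your Step 1 realizes the paper's sign-flip of the Gaussian components $(G_\ell^{v^\star},G_\ell^v,\xi)$ as the geometric reflection $\sigma$ of $\mathbb{R}^d$ across $\{v^\star\}^\perp$ acting on the data distribution; this is an equivalent and arguably more transparent phrasing of the same symmetry, and the remaining bookkeeping matches the paper's.
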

This lemma shows that, if the initialization is taken on the manifold, then it is enough to understand the dynamics on the manifold to conclude. Such analysis on the manifold is tractable. This yields Theorem \ref{thm:main}, our main result, which shows that
the sequence $(k_t, v_t)_{t \geqslant 0}$ converges to the oracle values $(k^\star,v^\star)$ (up to a sign) as $t\to \infty$, for any small enough step size, and a constant inverse temperature.
\begin{theorem}
    \label{thm:main}
    Take a constant inverse temperature $\lambda_t \equiv \lambda > 0$. Then there exists $\overline{\alpha} >0$ such that, for any step size $\alpha \leq \overline{\alpha}$, and for a generic initialization $(k_0,v_0)\in \mathcal{M}$,
    $
    (k_t,v_t) \xrightarrow[t\to\infty]{} \pm (k^\star,v^\star).
    $
\end{theorem}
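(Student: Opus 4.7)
The plan is to use Lemma~\ref{lem:invariant_manifold} to confine the PGD trajectory to $\cM$, on which the three auxiliary overlaps vanish ($\theta=\eta=\rho=0$). By Theorem~\ref{theo:optimalrisk}, the risk along the trajectory then reduces to a two-variable function $\tilde{\cR}(\kappa,\nu):=\mathcal{R}_\lambda^<(\kappa,\nu,0,0,0)$ of the overlaps $\kappa_t=k_t^\top k^\star$ and $\nu_t=v_t^\top v^\star$. A direct computation of the ambient gradient $\nabla_k \mathcal{R}_\lambda$ on $\cM$, using the chain rule through the five scalar invariants, together with the invariance lemma (which forces the $v^\star$ and $v$ components of $\nabla_k\mathcal{R}_\lambda$ to vanish on $\cM$), gives $\nabla_k\mathcal{R}_\lambda = \partial_\kappa\tilde{\cR}(\kappa_t,\nu_t)\,k^\star$ and symmetrically for $\nabla_v\mathcal{R}_\lambda$. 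Applying the projection $(I_d-k_tk_t^\top)$ and taking the inner product with $k^\star$ then yields a closed, autonomous two-dimensional recursion of the form
\[
\kappa_{t+1}=\frac{\kappa_t-\alpha(1-\kappa_t^2)\,\partial_\kappa\tilde{\cR}(\kappa_t,\nu_t)}{\sqrt{1+\alpha^2(1-\kappa_t^2)(\partial_\kappa\tilde{\cR})^2}},\qquad \nu_{t+1}=\frac{\nu_t-\alpha(1-\nu_t^2)\,\partial_\nu\tilde{\cR}(\kappa_t,\nu_t)}{\sqrt{1+\alpha^2(1-\nu_t^2)(\partial_\nu\tilde{\cR})^2}},
\]
exactly the Riemannian gradient flow of $\tilde{\cR}$ on the product of two circles in $\kappa$ and $\nu$; the projection-induced factors $(1-\kappa^2)$ and $(1-\nu^2)$ automatically pin $\{\pm 1\}^2$ as fixed points.

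Next, I would establish a discrete descent lemma for $\alpha\leq\overline{\alpha}$, where $\overline{\alpha}$ is controlled by the Lipschitz constant of $\nabla\tilde{\cR}$ on the compact square $[-1,1]^2$ given by Theorem~\ref{theo:optimalrisk}. Telescoping the resulting decrements forces $(\kappa_t,\nu_t)$ to accumulate on the critical set of $\tilde{\cR}$. A case analysis of the closed form of $\mathcal{R}_\lambda^<$ should show that this critical set is contained in the nine points $\{-1,0,1\}^2$: the two aligned corners $\pm(1,1)$ are the global minima and correspond to $\pm(k^\star,v^\star)$; the two anti-aligned corners $\pm(1,-1)$ are global maxima because the $\mathrm{erf}$ factor then saturates to the wrong sign against $v$; and the remaining five critical points on the edges and the origin are strict saddles. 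The saddle structure is driven by the single linear-in-$\mathrm{erf}$ term $-2\gamma^2\mathrm{erf}(\lambda\sqrt{d/(2(1+2\lambda^2\gamma^2))})$ in the optimal-risk formula: its second derivative controls the Hessian eigenvalue in the unstable direction at each saddle.

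To upgrade accumulation-on-critical-set to convergence-to-global-minimum, I would invoke the center-stable manifold theorem for smooth local diffeomorphisms, in the spirit of Lee--Panageas--Piliouras--Simchowitz--Jordan--Recht: at each strict saddle of $\tilde{\cR}$, the local iteration map is a local diffeomorphism (the denominators are bounded away from $0$ for $\alpha\leq\overline{\alpha}$), and its stable manifold has codimension at least one in $\cM$. The union of these five embedded submanifolds is therefore Lebesgue-negligible in $\cM$, so a generic initialization avoids them, and the descent property then forces $(k_t,v_t)\to\pm(k^\star,v^\star)$, establishing Theorem~\ref{thm:main}.

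The main obstacle I foresee is verifying the strict-saddle property itself, which requires a sign analysis of the Hessian of $\tilde{\cR}$ at each non-minimizing critical point. Because $\tilde{\cR}$ is expressed through Gaussian integrals of $\mathrm{erf}$ (the function $\zeta$ of \eqref{eq:def-zeta}), the relevant Hessian eigenvalues are transcendental in the parameters $\lambda,d,L,\gamma$, and there is no closed-form diagonalisation. I expect that the cleanest route is to exploit monotonicity of $\zeta(\cdot,\cdot)$ and its derivatives together with the separation of scales $\lambda\sqrt{d}\gg 1\gg\lambda\sqrt{L}$ used in Corollary~\ref{cor:oracle-asymptotic-bayes}, which controls the dominant linear-in-$\mathrm{erf}$ term and guarantees a negative eigenvalue along the corresponding instability direction at each saddle. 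The two-timescale behaviour observed in Figure~\ref{fig:experiment-gd} suggests that an alternative quantitative route consists of solving a one-dimensional fast equation for $\nu$ as a function of $\kappa$ and then analysing the slow $\kappa$-equation, but this is only needed if one wants explicit convergence rates, which Theorem~\ref{thm:main} does not demand.
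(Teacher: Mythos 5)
Your high-level plan coincides with the paper's: confine to the invariant manifold $\cM$, reduce to an autonomous two-dimensional recursion in $(\kappa,\nu)$, prove a descent lemma, classify the critical points, and dispose of the non-minimizing limits with a center-stable manifold argument. However, there are a few concrete errors and unnecessary detours that would matter if you carried the proof out.

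First, the critical set is not $\{-1,0,1\}^2$. The fixed-point equations are $(1-\kappa^2)\,\partial_\kappa\mathcal{R}^<(\kappa,\nu)=0$ and $(1-\nu^2)\,\partial_\nu\mathcal{R}^<(\kappa,\nu)=0$. Because $\partial_\nu\mathcal{R}^<(\kappa,\nu)=-2\gamma^2\,\erf\!\big(\frac{\lambda\sqrt{d/2}\,\kappa}{\sqrt{1+2\lambda^2\gamma^2}}\big)$ vanishes only at $\kappa=0$, and at $\kappa=0$ one has $\partial_\kappa\mathcal{R}^<(0,\nu)\propto\nu$, the edge midpoints $(0,\pm1)$, $(\pm1,0)$ are not fixed points (one of the two brackets fails to vanish). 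The full set is the five points $(\pm1,\pm1)$ and $(0,0)$ — this is Proposition~\ref{prop:critical-points}. The enumeration by ``each factor vanishes independently'' overcounts; you need to solve the coupled system.

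Second, your worry about the saddle Hessian being transcendental (and the proposed fallback to the regime $\lambda\sqrt{d}\gg 1$) is unfounded and would actually weaken the result to an asymptotic statement, contrary to Theorem~\ref{thm:main} which holds for all finite $d,L,\gamma,\lambda$. The Hessian of $\mathcal{R}^<$ at $(0,0)$ is proportional to $\begin{pmatrix}c & 1\\ 1 & 0\end{pmatrix}$ with $c<0$, which has determinant $-1$; the saddle property is immediate with no integral estimates. Similarly, the maxima $(1,-1)$ and $(-1,1)$ need no center-stable manifold argument: the risk strictly decreases along PGD, and these are global maxima of $\mathcal{R}^<$ on $[-1,1]^2$, so they can only be reached if one starts there.

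Third, passing from ``accumulation on the critical set'' to ``convergence'' is not automatic and is a genuine step in the paper: one shows that successive iterates get arbitrarily close (which follows from the descent estimate), then uses connectedness of the accumulation set together with the fact that the critical set is finite. Without this, the trajectory could in principle shuttle between distinct fixed points.

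Finally, to make the center-stable manifold argument measure-theoretically rigorous (i.e., to conclude that the whole stable set $W^s$ of the saddle is Lebesgue-null, not just its local piece), the paper establishes that $g$ is globally injective on $[-1,1]^2$ and has a non-degenerate Jacobian everywhere, so that $g^{-T}$ is well-defined and differentiable on the image, and preimages of null sets are null. Local diffeomorphism at the saddle alone is not enough for the $\bigcup_{T\geq0}g^{-T}(W^{\mathrm{cs}}_{\mathrm{loc}})$ step. None of these gaps changes the overall shape of your argument, but each would need to be filled to produce a complete proof.
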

This result shows that, despite the non-convexity of the risk, the attention layer trained by PGD can recover the underlying structure of the problem. %
Convergence to $(k^\star,v^\star)$ \emph{or} $(-k^\star,-v^\star)$ is not at all problematic, since  $T_\lambda^{(k^\star,v^\star)}=T_\lambda^{(-k^\star,-v^\star)}$ by symmetry of the $\mathrm{erf}$ function. Furthermore, recovery is guaranteed for a generic initialization on $\cM$, in the sense that the pathological pairs $(k_0,v_0) \in \cM$ such that PGD fails to recover the oracle parameters are of Lebesgue measure zero.
The results of Theorem \ref{thm:main} are illustrated by Figure \ref{fig:init-manifold}. We observe that, due to roundoff errors, the dynamics are not exactly on the manifold but stay very close to the manifold.

We emphasize that the manifold $\cM$ depends on the unknown parameters $k^\star$ and $v^\star$, making it impractical to initialize directly on the manifold. If the initialization is not on $\cM$, more diverse phenomena are possible. As already pointed out in Figure \ref{fig:init-sphere}, it is possible to obtain recovery of $(k^\star,v^\star)$ and convergence to the manifold $\cM$ from a general initialization on the sphere. This suggests that our analysis on the manifold is relevant, and completing the analysis for a general initialization is left for future work. However, we note that using a decreasing inverse temperature sequence $\lambda_t$ is crucial for the recovery of $(k^\star,v^\star)$ when initialized out of $\cM$. Indeed, to the best of our experiments, an iteration-independent choice of $\lambda$ does not consistently lead to the recovery of $k^\star$ and $v^\star$ in this case (see Appendix \ref{app:experimental-details}). This contrasts with the dynamics on the manifold proven in Theorem \ref{thm:main}.

To investigate these behaviors, a fruitful direction would be to investigate the (local) stability of the manifold $\cM$ for the PGD dynamics. If the manifold is indeed stable, one can hope to transfer the analysis on the manifold to dynamics initialized close to the manifold. Furthermore, recall that, in high dimension, random vectors on the sphere are close to being orthogonal. Thus, with high probability, a uniform initialization in $(\mathbb{S}^{d-1})^2$ falls in the neighborhood of the manifold $\cM$, so that the local analysis should allow to conclude.

The proof of the theorem relies on a detailed analysis of the dynamics of the PGD algorithm on the invariant manifold $\cM$, in particular the properties of its stationary points. These arguments, which lie at the intersection of dynamical systems and topology, are of independent interest. A key idea is to reduce the problem to a two-dimensional system depending only on $\kappa = k^\top k^\star$ and $\nu = v^\top v^\star$. 

Finally, numerical experiments show that a full Transformer layer is able to solve the single-location regression task. Similarly to our simplified predictor, the weights align with the oracle parameters $k^\star$ and $v^\star$. This supports the connection drawn in Section \ref{sec:solving} between our predictor and attention layers. We refer to Appendix \ref{app:experimental-details} for details and plots, as well as experiments on multiple-location regression, a variant of single-location regression where the output depends on several tokens.

\begin{figure}[ht]
    \centering
    \includegraphics[width=\textwidth]{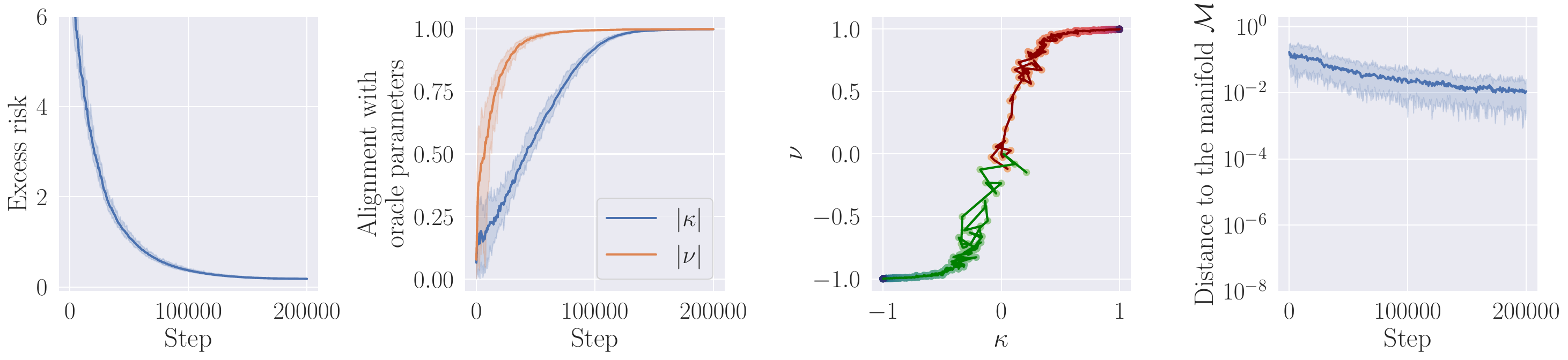}
    \caption{Convergence of online stochastic PGD to the oracle parameters from a random initialization on $(\mathbb{S}^{d-1})^2$. \textbf{Left:} Excess risk as a function of the number of steps. \textbf{Middle left:} Alignment $|\kappa| = |k^\top k^\star|$ and $|\nu| = |v^\top v^\star|$ with the oracle parameters. \textbf{Middle right:} Trajectories of $\kappa$ and $\nu$ in two repetitions of the experiment. Each repetition corresponds to a color, the trajectory starts in the middle and ends at a corner of the plot. \textbf{Right:} Distance to the invariant manifold $\cM$. In all plots except the middle right one, the experiment is repeated $30$ times with independent random initializations, and $95\%$ percentile intervals are plotted. Parameters are $d=80$, $L=10$, $\gamma = \sqrt{1/2}$, $\lambda_t = 2/(1 + 10^{-4}t)$, and a batch size of $5$. More details are given in Appendix \ref{app:experimental-details}.}
    \label{fig:experiment-sgd}
\end{figure}

\section{Conclusion}
\label{sec:conclusion}
This paper introduced \textit{single-location regression}, a novel statistical task where the relevant information in the input sequence is supported by a single token. We analyzed the statistical properties and optimization dynamics of a natural estimator for this task, which can be seen as a basic attention layer. We hope this work encourages further research into how Transformer architectures address sparsity and long-range dependencies, while simultaneously constructing internal linear representations of their input—--an aspect with significant implications for interpretability. Beyond NLP, potential applications include problems connected to sparse sequential modeling such as anomaly detection in time series. A natural extension of our framework is when relevant information is spread across a few input tokens rather than just one, which relates to multi-head attention. Future mathematical analyses should also consider extensions to general initialization schemes and stochastic dynamics. Our  experiments (Figures \ref{fig:init-sphere}, \ref{fig:experiment-sgd}, and Appendix \ref{app:experimental-details}) yield encouraging results in all these directions.

\subsubsection*{Acknowledgments}
Authors thank Peter Bartlett, Linus Bleistein, Alex Damian, Spencer Frei, and Clément Mantoux for fruitful discussions and feedback. P.M.~is supported by a Google PhD Fellowship.

\bibliography{references}
\bibliographystyle{iclr2025_conference}

\newpage

\appendix

\begin{center}
    \LARGE \bf Appendix
\end{center}

\paragraph{Organization of the Appendix.} Section \ref{sec:outline-proof-main-thm} presents the main steps of Theorem \ref{thm:main}. The intermediate results of this proof, as well as the other statements of the main text, are proven in Section \ref{app:proof_main}. Section~\ref{proof:risk_on_sphere} provides an expression of the risk $\cR$ beyond the manifold $\cM$ that extends the one provided in Lemma~\ref{lem:risk_on_M} on $\cM$. Section \ref{app:tech-results} gives some useful technical lemmas. Experimental details and additional results are in Section \ref{app:experimental-details}. Finally, Section \ref{app:related-works} discusses additional related models.

\paragraph{Notation.} In the whole Appendix, we consider a constant inverse temperature schedule $\lambda_t \equiv \lambda > 0$, as in Theorem \ref{thm:main}. For this reason, it is not necessary to make explicit the dependence of $\cR_\lambda$ and $\cR_\lambda^<$ on $\lambda$, and we use the lighter notations  $\cR$ and $\cR^<$ instead.

\section{Outline of the proof of Theorem \ref{thm:main}}    \label{sec:outline-proof-main-thm}

This section outlines the essential steps for the proof of Theorem \ref{thm:main}. For clarity, the proofs are to be found in Appendix \ref{app:proof_main}, except the proof of Proposition \ref{prop:minima_maxima_saddle}.

\paragraph{Step 1: Invariant manifold \& reparameterization.}
We first show that the risk $\mathcal{R}(k,v)$ has a simpler expression when considered on the manifold $\mathcal{M}$. 
 \begin{lemma}
     \label{lem:risk_on_M}
     The risk $\cR(k, v)$ restricted to $\mathcal M$ has the form
\begin{align*}
{\cR(k, v) }
 &= \gamma^2- 2 \gamma^2 v^\top v^\star \, \erf \left(\lambda \sqrt{\frac{d}{2(1+2\lambda^2\gamma^2)}} k^\top k^\star \right)  +  \gamma^2 \zeta\Big(\lambda \sqrt{\frac{d}{2}} \, k^\top k^\star,\lambda^2 \gamma^2\Big) \\
 &\qquad +   (\length -1)  \zeta(0, \lambda^2)  + \varepsilon^2 ,
 \end{align*}
where, for $t,\gamma\in \mathbb{R}$,
\[
        \zeta(t,\gamma^2) := \Esp \left[ \erf^2(t+G)\right] \, , \qquad  G\sim\mathcal{N}(0,\gamma^2) \, .
\]
 \end{lemma}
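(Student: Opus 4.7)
The plan is to expand the squared loss $\cR(k,v) = \Esp[Y^2] - 2\,\Esp[YT] + \Esp[T^2]$, where $T = T_\lambda^{(k,v)}(\mathbb{X}) = \sum_{\ell=1}^L \erf(\lambda X_\ell^\top k)\, X_\ell^\top v$, and to evaluate each of the three terms by conditioning on $J_0$ and exploiting the orthogonality relations that define~$\cM$. The noise term together with Assumption~\ref{hyp:orthogonality} yields immediately $\Esp[Y^2] = \gamma^2 + \varepsilon^2$, since $\Esp[X_{J_0}^\top v^\star \mid J_0] = \sqrt{d/2}\, k^{\star\top} v^\star = 0$ and the variance of $X_{J_0}^\top v^\star$ equals $\gamma^2$.

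For $\Esp[YT]$, I would first discard the contributions $\ell \neq J_0$: conditionally on $J_0$, the token $X_\ell$ is independent of $X_{J_0}$, so such a term factors as $\Esp[X_{J_0}^\top v^\star \mid J_0]\,\Esp[\erf(\lambda X_\ell^\top k)\, X_\ell^\top v \mid J_0]$, and the first factor again vanishes by Assumption~\ref{hyp:orthogonality}. The $\ell = J_0$ contribution is analyzed via the decomposition $X_{J_0} = \sqrt{d/2}\, k^\star + \gamma Z$ with $Z \sim \cN(0, I_d)$; introducing $U = Z^\top k$, $V = Z^\top v$, $W = Z^\top v^\star$ turns this into a Gaussian expectation whose covariance structure is $\Cov(U,V) = \rho$, $\Cov(U,W) = \eta$, $\Cov(V,W) = \nu$. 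On $\cM$ we have $\rho = \eta = \theta = 0$, so $U$ is independent of $(V,W)$ and $X_{J_0}^\top v = \gamma V$, giving
\begin{equation*}
\Esp[YT] = \gamma^2\, \Esp[WV]\, \Esp\bigl[\erf\bigl(\lambda \sqrt{d/2}\,\kappa + \lambda \gamma U\bigr)\bigr] = \gamma^2\, \nu\, \erf\!\left(\lambda \sqrt{\frac{d}{2(1+2\lambda^2\gamma^2)}}\, \kappa\right),
\end{equation*}
where the last step uses the classical identity $\Esp[\erf(\mu + \sigma G)] = \erf\bigl(\mu/\sqrt{1+2\sigma^2}\bigr)$ for $G \sim \cN(0,1)$.

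For $\Esp[T^2] = \sum_{\ell,\ell'} \Esp[A_\ell A_{\ell'}]$ with $A_\ell := \erf(\lambda X_\ell^\top k)\, X_\ell^\top v$, the off-diagonal terms vanish: conditionally on $J_0$, the random variables $X_\ell^\top k$ and $X_\ell^\top v$ are independent Gaussians because $\rho = 0$ (and $\theta = 0$ when $\ell = J_0$), while $\Esp[X_\ell^\top v \mid J_0] = 0$, so $\Esp[A_\ell \mid J_0] = 0$ and conditional independence of distinct tokens finishes the job. The diagonal splits into a single $\ell = J_0$ contribution equal to $\gamma^2 \zeta\bigl(\lambda \sqrt{d/2}\,\kappa,\, \lambda^2\gamma^2\bigr)$ and $L-1$ identical contributions $\zeta(0, \lambda^2)$. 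Assembling the three pieces reproduces the claimed identity. The main obstacle is simply the bookkeeping—tracking which covariance vanishes because of which of the three orthogonality constraints defining $\cM$—but no step is deep; alternatively, one can substitute $\theta = \eta = \rho = 0$ directly into the closed form of $\cR^<$ from Theorem~\ref{theo:optimalrisk}, although the direct route above makes transparent the role played by each relation.
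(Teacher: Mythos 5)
Your proof is correct and takes essentially the same route as the paper: condition on $J_0$, expand the squared loss (your split $\Esp[Y^2]-2\Esp[YT]+\Esp[T^2]$ is just the paper's expansion of the square organized slightly differently), use the Gaussian decomposition $X_{J_0}=\sqrt{d/2}\,k^\star+\gamma Z$, and invoke the orthogonality relations $k^{\star\top}v^\star=\theta=\eta=\rho=0$ to factor the expectations, concluding with the identity $\Esp[\erf(t+G)]=\erf(t/\sqrt{1+2\gamma^2})$ (the paper's Lemma~\ref{lem:technical_results}$(ii)$). The paper merely passes through the five-variable parameterization $\cR^<(\kappa,\nu,\theta,\eta,\rho)$ first and then specializes to $\cM$, whereas you specialize from the outset; the computations are the same.
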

This expression has two main consequences. First, we use it to prove that the manifold $\cM$ is invariant by PGD, according to Lemma \ref{lem:invariant_manifold}. 
Second, we observe that the risk on the manifold depends on the variables $(k,v) \in \mathbb{S}^{d-1} \times \mathbb{S}^{d-1}$ only through the two scalar quantities 
\[
\kappa = k^\top k^\star \quad \textnormal{and} \quad 
\nu = v^\top v^\star  \, .
\]
This suggests studying the dynamics in terms of the reduced variables $(\kappa, \nu) \in [-1,1]^2$. More precisely, in the following, we denote by $\mathcal{R}^<$ the risk function $\mathcal{R}$ reparameterized as a function of $(\kappa,\nu)$, i.e., we let
\[
{\cR^<(\kappa, \nu) }
 = \gamma^2- 2 \gamma^2 \nu \, \erf \left(\lambda \sqrt{\frac{d}{2(1+2\lambda^2\gamma^2)}} \kappa \right)  +  \gamma^2 \zeta\Big(\lambda \sqrt{\frac{d}{2}} \, \kappa,\lambda^2 \gamma^2\Big)  +   (\length -1)  \zeta(0, \lambda^2)  + \varepsilon^2  \, .
\]
Note that, with a slight abuse of notation, we use  $\cR^<$  to denote both the function of five variables $(\kappa, \nu, \theta, \rho, \eta)$ (as in Theorem 1) and the function of only the first two variables $(\kappa, \nu)$. There should be no confusion, as both functions coincide on the manifold $\cM$ where $\theta = \rho = \eta = 0$. 
We also denote the corresponding PGD iterates using this reparameterization by $(\kappa_t, \nu_t) := (k_t^\top k^\star, v_t^\top v^\star)$.  With this notation, the following lemma reformulates the PGD iterations as an autonomous discrete dynamical system in terms of $(\kappa_t, \nu_t)$.
\begin{lemma}   \label{lemma:2d-reformulation}
When initialized on the manifold $\cM$, the PGD iterations \eqref{eq:pgd-iterations} can be reformulated in terms of the autonomous discrete dynamical system
\begin{align}
\label{eq:dynamical_system_kappa_nu-1}
(\kappa_{t+1}, \nu_{t+1}) &= g(\kappa_t, \nu_t) \, ,
\end{align}
where the mapping $g: [-1, 1]^2 \to [-1, 1]^2$ is given by
\begin{align}
\label{eq:dynamical_system_kappa_nu-2}
g(\kappa, \nu) = \left(
   \frac{\kappa - \alpha (\partial_\kappa \mathcal{R}^<(\kappa,\nu))(1-\kappa^2)}{\sqrt{1+\alpha^2(\partial_\kappa \mathcal{R}^<(\kappa,\nu))^2 (1-\kappa^2)}} \, , \quad 
   \frac{\nu - \alpha (\partial_\nu \mathcal{R}^<(\kappa,\nu))(1-\nu^2)}{\sqrt{1+\alpha^2(\partial_\nu \mathcal{R}^<(\kappa,\nu))^2 (1-\nu^2)}}
   \right).
\end{align}    
\end{lemma}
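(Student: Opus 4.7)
The plan is to use the invariance of $\cM$ under PGD (Lemma~\ref{lem:invariant_manifold}) to simplify the ambient gradients of $\cR$, and then to carry out the projection-and-normalization step of \eqref{eq:pgd-iterations} in closed form. By invariance, all iterates satisfy $k_t^\top v^\star = v_t^\top k^\star = k_t^\top v_t = 0$ and $\|k_t\| = \|v_t\| = 1$ throughout the trajectory, so I can freely use these orthogonality relations at every step.

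By Theorem~\ref{theo:optimalrisk}, $\cR$ depends on $(k,v)$ only through the five scalar products $(\kappa, \nu, \theta, \eta, \rho)$, so the chain rule gives
\begin{align*}
\nabla_k \cR(k,v) &= (\partial_\kappa \cR^<)\,k^\star + (\partial_\eta \cR^<)\,v^\star + (\partial_\rho \cR^<)\,v, \\
\nabla_v \cR(k,v) &= (\partial_\nu \cR^<)\,v^\star + (\partial_\theta \cR^<)\,k^\star + (\partial_\rho \cR^<)\,k.
\end{align*}
The main obstacle is then to show that on $\cM$ the three partials $\partial_\theta \cR^<$, $\partial_\eta \cR^<$, $\partial_\rho \cR^<$ all vanish, which would collapse these gradients to the clean forms $(\partial_\kappa \cR^<) k^\star$ and $(\partial_\nu \cR^<) v^\star$. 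I derive the vanishing from Lemma~\ref{lem:invariant_manifold} as follows. Denote the unnormalized updates $u := k_t - \alpha(I_d - k_t k_t^\top)\nabla_k \cR$ and $w := v_t - \alpha(I_d - v_t v_t^\top)\nabla_v \cR$. Invariance forces $u^\top v^\star = 0$, $w^\top k^\star = 0$, and $u^\top w = 0$ to hold identically in $\alpha$; expanding these relations using the orthogonality identities above and extracting the $\alpha$-linear coefficients yields
\[
\partial_\eta \cR^< + \nu_t \partial_\rho \cR^< = 0, \quad \partial_\theta \cR^< + \kappa_t \partial_\rho \cR^< = 0, \quad \kappa_t \partial_\theta \cR^< + \nu_t \partial_\eta \cR^< + 2 \partial_\rho \cR^< = 0.
\]
The $3 \times 3$ coefficient matrix of this linear system has determinant $\kappa_t^2 + \nu_t^2 - 2$, which is nonzero away from the degenerate corner $\{|\kappa_t|=|\nu_t|=1\}$ (where the PGD update is anyway stationary), so the three partials must vanish.

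With the simplified gradient $\nabla_k \cR = (\partial_\kappa \cR^<) k^\star$, the numerator becomes $u = (1 + \alpha \kappa_t \partial_\kappa \cR^<)\,k_t - \alpha (\partial_\kappa \cR^<)\,k^\star$ after using $(I_d - k_t k_t^\top) k^\star = k^\star - \kappa_t k_t$. A direct expansion then gives $u^\top k^\star = \kappa_t - \alpha(1-\kappa_t^2)\partial_\kappa \cR^<$, and, since $(I_d - k_t k_t^\top)\nabla_k \cR$ is orthogonal to $k_t$, the cross term in $\|u\|^2$ drops and one finds $\|u\|^2 = 1 + \alpha^2 (1-\kappa_t^2)(\partial_\kappa \cR^<)^2$. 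Dividing produces the announced formula for $\kappa_{t+1}$; applying the identical manipulation to $v$ yields the formula for $\nu_{t+1}$, establishing the autonomous two-dimensional system \eqref{eq:dynamical_system_kappa_nu-1}--\eqref{eq:dynamical_system_kappa_nu-2}.
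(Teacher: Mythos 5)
Your proof is correct in its mechanics, but it takes a genuinely different — and more roundabout — route to the key algebraic ingredient, and it has a small gap that deserves a word.

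The paper establishes that $\partial_\theta \cR^<$, $\partial_\eta \cR^<$, and $\partial_\rho \cR^<$ all vanish on $\cM$ by an explicit symmetry argument (Lemma~\ref{lem:vanishing-derivatives}): one checks directly from the representation in \eqref{eq:risk-5-params-1}--\eqref{eq:risk-5-params-2} that $\cR^<(\kappa,\nu,\theta,\eta,\rho)=\cR^<(\kappa,\nu,-\theta,-\eta,-\rho)$, so the three odd partials vanish at $\theta=\eta=\rho=0$, with no caveats anywhere on $\cM$. This vanishing is then used \emph{to prove} Lemma~\ref{lem:invariant_manifold}, and it yields the collapsed gradients $\nabla_k\cR=(\partial_\kappa\cR^<)k^\star$ and $\nabla_v\cR=(\partial_\nu\cR^<)v^\star$ essentially for free. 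Your argument instead treats Lemma~\ref{lem:invariant_manifold} as a black box and reverses the paper's implication: you read the invariance of $\cM$ under PGD as a polynomial identity in the step size $\alpha$, extract the $\alpha$-linear coefficients from $u^\top v^\star=0$, $w^\top k^\star=0$, $u^\top w=0$, and observe that the resulting $3\times 3$ homogeneous linear system in $(\partial_\theta\cR^<,\partial_\eta\cR^<,\partial_\rho\cR^<)$ has coefficient matrix with determinant $\kappa^2+\nu^2-2$, which is nonzero away from the four corners of $[-1,1]^2$. This is a clever observation, and formally non-circular since the invariance lemma precedes this lemma in the paper. But note two caveats. First, the argument requires the invariance to hold for a whole range of step sizes, not just one; this is true (the paper's proof of Lemma~\ref{lem:invariant_manifold} is step-size-agnostic), but your write-up should say so explicitly. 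Second, at the corners $|\kappa|=|\nu|=1$ your determinant vanishes and the linear-algebra argument gives no information; the parenthetical ``the PGD update is anyway stationary'' is exactly what would follow from the vanishing you are trying to prove, so as written it is circular at those points. This is easily repaired by continuity of the partials (the corners are a measure-zero subset of $\cM$, and the partials are continuous there), but the repair needs to be stated. In short: your computational steps verifying $u^\top k^\star=\kappa_t-\alpha(1-\kappa_t^2)\partial_\kappa\cR^<$ and $\|u\|^2=1+\alpha^2(1-\kappa_t^2)(\partial_\kappa\cR^<)^2$ via the Pythagorean identity are exactly the paper's calculation; what differs is how you justify discarding the $\partial_\theta,\partial_\eta,\partial_\rho$ terms. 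The direct symmetry argument is shorter, cleaner, and has no corner cases; your reverse argument is an interesting alternative but should be patched at the corners and should make the role of $\alpha$ as a free parameter explicit.
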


\paragraph{Step 2: Analysis of the stationary points.} Regarding the dynamics restricted to the invariant manifold $\mathcal{M}$, we can characterize the limit points of the PGD iterates as follows.
\begin{proposition}
\label{prop:analysis-pgd}
For a sufficiently small step size $\alpha$ and for any $(k_0,v_0)\in \mathcal{M}$, the risk $\cR^<$ is decreasing along the PGD iterates.
Furthermore, the distance between successive PGD iterates tends to zero, and,
if $(\kappa, \nu)$ is an accumulation point of the sequence of iterates $(\kappa_t, \nu_t)_{t \geqslant 0}$, then
\begin{equation}    \label{eq:formula-accumulation-points}
(1-\kappa^2) \partial_\kappa \mathcal{R}^<(\kappa, \nu) = 0 \quad \textnormal{and} \quad (1-\nu^2) \partial_\nu \mathcal{R}^<(\kappa, \nu) = 0  \, .    
\end{equation}
\end{proposition}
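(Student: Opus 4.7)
The plan is to exploit the autonomous two-dimensional reformulation from Lemma \ref{lemma:2d-reformulation} and reduce the analysis to a descent argument for the iterates $(\kappa_t, \nu_t) \in [-1,1]^2$. Since the reparameterized risk $\mathcal{R}^<$ given in Lemma \ref{lem:risk_on_M} is $C^\infty$ on the compact set $[-1,1]^2$, there exist constants $C,L > 0$ such that the partial derivatives are bounded by $C$ and the gradient is $L$-Lipschitz. I will first derive an explicit expression for $(\kappa_{t+1}-\kappa_t, \nu_{t+1}-\nu_t)$, then obtain a descent inequality, and finally sum it to conclude.

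\textbf{Step 1 (explicit displacement).} Writing $g_\kappa = \partial_\kappa \mathcal{R}^<(\kappa_t,\nu_t)$, $\phi_t = 1-\kappa_t^2$, and $s_\kappa = \sqrt{1+\alpha^2 g_\kappa^2 \phi_t} \geq 1$, the identity $s_\kappa - 1 = \alpha^2 g_\kappa^2 \phi_t/(s_\kappa+1)$ turns \eqref{eq:dynamical_system_kappa_nu-2} into
\[
\kappa_{t+1} - \kappa_t = -\frac{\alpha \phi_t g_\kappa}{s_\kappa} - \frac{\alpha^2 g_\kappa^2 \phi_t \kappa_t}{s_\kappa(s_\kappa+1)} \, ,
\]
with an analogous formula for $\nu_{t+1}-\nu_t$. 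This isolates the leading order $-\alpha \phi_t g_\kappa$ from a uniformly $O(\alpha^2)$ correction, thanks to $|g_\kappa|,|g_\nu| \leq C$ and $\phi_t,\psi_t \leq 1$.

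\textbf{Step 2 (descent inequality).} The $L$-Lipschitz gradient of $\mathcal{R}^<$ gives the classical quadratic upper bound
\[
\mathcal{R}^<(\kappa_{t+1},\nu_{t+1}) - \mathcal{R}^<(\kappa_t,\nu_t) \leq g_\kappa(\kappa_{t+1}-\kappa_t) + g_\nu(\nu_{t+1}-\nu_t) + \tfrac{L}{2}\bigl[(\kappa_{t+1}-\kappa_t)^2 + (\nu_{t+1}-\nu_t)^2\bigr].
\]
Substituting the formulas from Step 1 and bounding the correction terms uniformly in terms of $C$, $L$ and $\alpha$, I expect that there exists $\overline{\alpha}>0$ such that, for every $\alpha \leq \overline{\alpha}$,
\[
\mathcal{R}^<(\kappa_{t+1},\nu_{t+1}) - \mathcal{R}^<(\kappa_t,\nu_t) \leq -\tfrac{\alpha}{2}\bigl[(1-\kappa_t^2)\,g_\kappa^2 + (1-\nu_t^2)\,g_\nu^2\bigr].
\]
This directly proves that $\mathcal{R}^<$ is decreasing along the iterates.

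\textbf{Step 3 (summability and accumulation points).} Since $\mathcal{R}^<$ is bounded below on the compact $[-1,1]^2$, summing the descent inequality yields
\[
\sum_{t \geq 0}\Bigl[(1-\kappa_t^2)\bigl(\partial_\kappa \mathcal{R}^<(\kappa_t,\nu_t)\bigr)^2 + (1-\nu_t^2)\bigl(\partial_\nu \mathcal{R}^<(\kappa_t,\nu_t)\bigr)^2\Bigr] < \infty,
\]
so the summand tends to zero. Combined with the bound $(\kappa_{t+1}-\kappa_t)^2 \lesssim \alpha^2 \phi_t^2 g_\kappa^2 \leq \alpha^2 \phi_t g_\kappa^2$ from Step 1 (and similarly for $\nu$), this implies $(\kappa_{t+1}-\kappa_t, \nu_{t+1}-\nu_t) \to 0$. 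Finally, along any subsequence $(\kappa_{t_n},\nu_{t_n}) \to (\kappa,\nu)$, continuity of the partial derivatives together with the vanishing of the summand yield $(1-\kappa^2)\partial_\kappa \mathcal{R}^<(\kappa,\nu) = 0$ and $(1-\nu^2)\partial_\nu \mathcal{R}^<(\kappa,\nu) = 0$.

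\textbf{Main obstacle.} The main technical point is verifying the descent inequality of Step 2 with a coefficient independent of $t$. The PGD iteration is not a pure gradient step but a normalized retracted one, so one must carefully track that the $O(\alpha^2)$ contributions coming both from the normalization factor $s_\kappa$ and from the Taylor remainder are absorbed by the leading term $-\alpha(1-\kappa_t^2) g_\kappa^2$. Once this uniform-in-$t$ descent is established, the remainder of the proof is a standard monotone-convergence argument over the compact domain $[-1,1]^2$.
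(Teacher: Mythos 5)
Your proposal is correct and follows essentially the same route as the paper's proof: use Lipschitz continuity of $\nabla \mathcal{R}^<$ on the compact $[-1,1]^2$ to write a quadratic descent upper bound, isolate the leading $-\alpha\,(1-\kappa_t^2)\,g_\kappa^2$ term (resp.~$\nu$) from the uniformly $O(\alpha^2)$ normalization corrections, absorb the latter into the former for small enough $\alpha$, and then telescope to obtain summability of the squared projected gradients, from which the vanishing of successive-iterate differences and the characterization of accumulation points follow by continuity. The only cosmetic difference is that you factor the displacement as $-\alpha \phi_t g_\kappa/s_\kappa$ plus a correction, whereas the paper isolates $-\alpha \phi_t g_\kappa$ and places the $1/s_\kappa$ deviation entirely into the remainder via the bound $|1/\sqrt{1+u}-1|\leq u$; both lead to the same uniform $O(\alpha^2)$ estimate.
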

We stress that the system \eqref{eq:formula-accumulation-points} of equations corresponds to fixed points of the dynamics \eqref{eq:dynamical_system_kappa_nu-1}--\eqref{eq:dynamical_system_kappa_nu-2}. We next solve this system of equations.
\begin{proposition} \label{prop:critical-points}
    The points $(\kappa,\nu) \in [-1,1]^2$ satisfying \eqref{eq:formula-accumulation-points} are $(\kappa,\nu)=(\pm 1, \pm 1)$\footnote{This notation is used to designate any extreme point of the square $[-1,1]^2$, i.e., $(\kappa,\nu)=(1,1)$, $(1,-1)$, $(-1,1)$, and $(-1,-1)$.} and $(\kappa,\nu)=(0, 0)$. 
\end{proposition}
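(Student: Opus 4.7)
The plan is to compute both partial derivatives of $\mathcal R^<$ explicitly and then classify the solutions of the factored system by case analysis. Set $A := \lambda\sqrt{d/(2(1+2\lambda^2\gamma^2))}$ and $B := \lambda\sqrt{d/2}$. Because $\nu$ enters $\mathcal R^<$ only linearly, through the single term $-2\gamma^2\nu\,\erf(A\kappa)$, one has
\[
\partial_\nu\mathcal R^<(\kappa,\nu) = -2\gamma^2\,\erf(A\kappa), \qquad
\partial_\kappa\mathcal R^<(\kappa,\nu) = -2\gamma^2\nu A\,\erf'(A\kappa) + \gamma^2 B\,\partial_1\zeta(B\kappa,\lambda^2\gamma^2),
\]
where $\partial_1\zeta$ denotes partial differentiation with respect to the first argument. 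Since $\erf^2$ is even and the Gaussian $G$ in the definition of $\zeta$ is symmetric, $\zeta(\cdot,\gamma^2)$ is even, hence $\partial_1\zeta(0,\gamma^2)=0$.

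The $\nu$-equation $(1-\nu^2)\,\erf(A\kappa)=0$ is equivalent to $\nu\in\{-1,1\}$ or $\kappa=0$, since $\erf$ vanishes only at $0$. A direct case analysis on the $\kappa$-equation then yields the claimed points. The four \emph{corners} $(\pm 1,\pm 1)$ satisfy both equations automatically via the prefactors. At the \emph{center} $\kappa=0$ (with $\kappa\notin\{\pm 1\}$), the $\kappa$-equation reduces, using $\erf'(0)=2/\sqrt{\pi}$ and $\partial_1\zeta(0,\cdot)=0$, to $-(4\gamma^2 A/\sqrt{\pi})\nu=0$, forcing $\nu=0$ and giving $(0,0)$. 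It remains to rule out the \emph{boundary-non-corner} configurations $\nu\in\{-1,1\}$ with $\kappa\in(-1,1)\setminus\{0\}$: this is the main obstacle.

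The parities ($\erf'$ even, $\partial_1\zeta$ odd in the first variable) yield the relation $\partial_\kappa\mathcal R^<(\kappa,-1) = -\partial_\kappa\mathcal R^<(-\kappa,1)$, so it suffices to treat $\nu=1$. I would then aim to show that
\[
h(\kappa) := \partial_\kappa\mathcal R^<(\kappa,1) = \tfrac{4\gamma^2}{\sqrt{\pi}}\Big(-A\,e^{-A^2\kappa^2} + B\,\Esp\big[\erf(B\kappa+G')\,e^{-(B\kappa+G')^2}\big]\Big), \quad G'\sim\cN(0,\lambda^2\gamma^2),
\]
is strictly negative on $(-1,1)$ (consistent with the fact that $(1,1)$ is the global minimum of $\mathcal R^<$ over $[-1,1]^2$). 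The representation $\erf(z)=2\Phi(z\sqrt{2})-1$ together with the standard Gaussian identity $\Esp[\Phi(u+\sigma Z)]=\Phi(u/\sqrt{1+\sigma^2})$ applied to the Gaussian $B\kappa+G'$ reduces the expectation to a closed-form expression in $\Phi$ and elementary functions of $\kappa$ and the model parameters, from which the sign of $h$ can be read off directly.

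The hard part is controlling the balance between the two competing terms of $h$: the first is always negative but exponentially small for large $|A\kappa|$, while the second has the same sign as $\kappa$ and also becomes small as $\kappa\to\pm 1$ (because $\erf(z)\,e^{-z^2}\to 0$ at infinity). A too-crude magnitude estimate could leave room for an accidental cancellation near the endpoints, so the proof will have to exploit the precise algebraic form obtained after the Gaussian reduction rather than generic bounds. This sign analysis is moreover the natural input for the downstream classification of these critical points as minima, maxima, or saddles (Proposition~\ref{prop:minima_maxima_saddle}).
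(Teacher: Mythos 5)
Your setup is right, and you correctly identify both the easy cases and the genuine obstacle: the $\nu$-equation forces $\nu\in\{\pm1\}$ or $\kappa=0$; $\kappa=0$ collapses to $(0,0)$; the corners are automatic via the $(1-\kappa^2)$, $(1-\nu^2)$ prefactors; so what remains is to exclude $\nu=\pm1$ with $\kappa\in(-1,1)\setminus\{0\}$. But you stop precisely at that point. The proposal sketches a Gaussian reduction of $h(\kappa)=\partial_\kappa\mathcal R^<(\kappa,1)$ and asserts the sign ``can be read off directly,'' while simultaneously admitting that ``a too-crude magnitude estimate could leave room for an accidental cancellation near the endpoints.'' That worry is never resolved; as written this is a plan for the decisive step, not a proof of it.

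The missing ingredient is the closed-form evaluation of $\Esp[\erf(t+G)\erf'(t+G)]$ (the paper's Lemma \ref{lem:technical_results}$(vi)$), which shows that $\tfrac12\partial_1\zeta(B\kappa,\lambda^2\gamma^2)=\Esp[\erf(B\kappa+G')\erf'(B\kappa+G')]$ itself factors as $\tfrac{1}{\sqrt{1+2\lambda^2\gamma^2}}\,\erf(c_1\kappa)\,\erf'(A\kappa)$, with $c_1=\lambda\sqrt{d/2}\big/\sqrt{(1+2\lambda^2\gamma^2)(1+4\lambda^2\gamma^2)}$. Plugging this back, the two terms of $\partial_\kappa\mathcal R^<$ share the common positive factor $\erf'(A\kappa)$, giving
\[
\partial_\kappa\mathcal R^<(\kappa,\nu) \;=\; -2\gamma^2 A\,\erf'(A\kappa)\,\bigl(\nu-\erf(c_1\kappa)\bigr)\, .
\]
Since $\erf'>0$ everywhere and $|\erf(c_1\kappa)|<1$ for finite $\kappa$, the factor $\nu-\erf(c_1\kappa)$ is bounded away from zero whenever $\nu=\pm1$, so $h$ never vanishes on $(-1,1)$; there is no cancellation to control. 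Your additive split of $\partial_\kappa\mathcal R^<$ into $-2\gamma^2\nu A\erf'(A\kappa)$ plus a $\zeta$-derivative term hides this product structure, which is why your case analysis stalls. If you carry out the Gaussian identity you invoke (or cite Lemma \ref{lem:technical_results}$(vi)$ directly), you recover the factored form and the argument closes in one line; until then, the boundary case is an unproved claim.
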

The identity $(\kappa,\nu)=(\pm 1,\pm 1)$ corresponds to the situation where the variables $(k,v)$ are aligned (up to sign) with the targets $(k^\star,v^\star)$. As the next proposition shows, these are the only global minima of $\cR^<$.
\begin{proposition} 
\label{prop:minima_maxima_saddle}
The fixed points of the dynamics can be classified as follows:
\begin{enumerate}%
    \item[$(i)$] The points $(\kappa,\nu)=(-1,1)$ and $(1,-1)$ are global maxima of $\mathcal{R}^<$ on $[-1, 1]^2$.
    \item[$(ii)$] The points $(\kappa,\nu)=(1,1)$ and $(-1,-1)$ are global minima of $\mathcal{R}^<$ on $[-1, 1]^2$.
    \item[$(iii)$] The point $(\kappa,\nu)=(0,0)$ is a saddle point of $\mathcal{R}^<$ on $[-1, 1]^2$.
\end{enumerate}
\end{proposition}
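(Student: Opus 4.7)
My plan is to reduce the two-variable classification to a one-dimensional monotonicity statement, and then handle the saddle point separately via a short Hessian computation. First, I would exploit the fact that $\mathcal{R}^<(\kappa,\nu)$ is affine in $\nu$ with slope $-2\gamma^2\erf(c_1\kappa)$, writing $c_1 := \lambda\sqrt{d/(2(1+2\lambda^2\gamma^2))}$ and $c_2 := \lambda\sqrt{d/2}$. Since $\mathrm{sign}(\erf(c_1\kappa)) = \mathrm{sign}(\kappa)$, the extrema over $\nu \in [-1,1]$ at fixed $\kappa$ are attained at $\nu = \pm\mathrm{sign}(\kappa)$. Combining this with the oddness of $\erf$ and the evenness of $t \mapsto \zeta(t, \lambda^2\gamma^2)$ (which follows from $G \sim -G$ for $G\sim\mathcal{N}(0,\lambda^2\gamma^2)$), one gets
\[
\min_{\nu \in [-1,1]} \mathcal{R}^<(\kappa, \nu) = \gamma^2 + h(|\kappa|) + C, \qquad \max_{\nu \in [-1,1]} \mathcal{R}^<(\kappa, \nu) = \gamma^2 + h(-|\kappa|) + C,
\]
where $h(\kappa) := -2\gamma^2 \erf(c_1\kappa) + \gamma^2 \zeta(c_2\kappa, \lambda^2\gamma^2)$ and $C := (L-1)\zeta(0,\lambda^2) + \varepsilon^2$ gathers the constant terms.

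Next, I would establish the strict monotonicity of $h$ on $\R$. The crux is to recognise that $h$ admits a clean probabilistic form: the Gaussian identity $\Esp[\erf(t+G)] = \erf(t/\sqrt{1+2\gamma_0^2})$ for $G \sim \mathcal{N}(0, \gamma_0^2)$ (applied with $\gamma_0^2 = \lambda^2\gamma^2$ and $t = c_2\kappa$) gives $\erf(c_1\kappa) = \Esp[\erf(c_2\kappa + G)]$, so expanding the square yields
\[
h(\kappa) = \gamma^2\,\Esp\big[(1-\erf(c_2\kappa + G))^2\big] - \gamma^2.
\]
Since $1-\erf(x)$ is positive and strictly decreasing in $x$, the map $x \mapsto (1-\erf(x))^2$ is strictly decreasing on $\R$; because $c_2>0$ so that the distribution of $c_2\kappa+G$ shifts to the right as $\kappa$ grows, $h$ is strictly decreasing on all of $\R$. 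Hence $h(|\kappa|) \geq h(1)$ and $h(-|\kappa|) \leq h(-1)$ for every $\kappa \in [-1,1]$, with equality only at $|\kappa|=1$. Together with the previous reduction, this immediately yields (i) and (ii): the global minima of $\mathcal{R}^<$ on $[-1,1]^2$ are $(1,1)$ and $(-1,-1)$, while the global maxima are $(1,-1)$ and $(-1,1)$.

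Finally, for the saddle claim at $(0,0)$ I would compute the Hessian directly. The identity $\erf(0) = 0$ gives $\partial_\nu\mathcal{R}^<(0,0) = 0$ and $\partial_\nu^2\mathcal{R}^<(0,0) = 0$; and $\zeta_1(0, \lambda^2\gamma^2) = 2\Esp[\erf(G)\erf'(G)] = 0$ by odd symmetry of the integrand gives $\partial_\kappa\mathcal{R}^<(0,0) = 0$, so $(0,0)$ is critical. The mixed partial evaluates to $\partial_\kappa\partial_\nu\mathcal{R}^<(0,0) = -2\gamma^2 c_1 \erf'(0) \neq 0$, so $\det(\mathrm{Hess}\,\mathcal{R}^<(0,0)) = -(\partial_\kappa\partial_\nu\mathcal{R}^<(0,0))^2 < 0$, its two eigenvalues have opposite signs, and $(0,0)$ is a saddle point. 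The only non-routine ingredient in the whole plan is the probabilistic rewriting of $h$ in the second step: expressing $h$ as $\gamma^2\Esp[(1-\erf(\cdot))^2] - \gamma^2$ converts a potentially messy calculus comparison between $\erf$ and derivatives of $\zeta$ into a one-line stochastic-dominance argument; everything else is algebra.
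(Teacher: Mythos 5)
Your proof is correct, and for parts (i)--(ii) it takes a genuinely different route from the paper's. The paper first rules out interior extrema (the only critical point $(0,0)$ is shown to be a saddle via a direct Hessian computation), then analyzes the boundary of the square by reading off the sign of $\partial_\kappa\cR^<(\kappa,1)$ and $\partial_\nu\cR^<(1,\nu)$ from the explicit gradient formulas (which themselves require the technical identity of Lemma~\ref{lem:technical_results}$(vi)$ relating $\Esp[\erf\,\erf'(t+G)]$ to products of $\erf$'s), and finally propagates these to the full boundary via evenness. You instead exploit that $\cR^<$ is affine in $\nu$, collapse the two-variable problem into the scalar function $h(\kappa) = -2\gamma^2\erf(c_1\kappa) + \gamma^2\zeta(c_2\kappa,\lambda^2\gamma^2)$, and then prove strict monotonicity of $h$ through the clean probabilistic identity
\[
h(\kappa) = \gamma^2\,\Esp\bigl[(1-\erf(c_2\kappa + G))^2\bigr] - \gamma^2,\qquad G \sim \mathcal{N}(0,\lambda^2\gamma^2),
\]
which follows from $\erf(c_1\kappa) = \Esp[\erf(c_2\kappa+G)]$ (Lemma~\ref{lem:technical_results}$(ii)$, which is the same Gaussian-smoothing ingredient the paper also uses) together with a pathwise comparison using that $x\mapsto(1-\erf x)^2$ is positive and strictly decreasing. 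This bypasses the interior/boundary case split entirely, needs fewer of the technical $\erf$ lemmas, and makes the monotonicity transparent rather than relying on sign inspection of a multi-factor derivative. For part (iii), your argument is essentially the paper's Hessian argument, stated slightly more economically: you observe $\partial_\nu^2\cR^<(0,0)=0$ (since $\cR^<$ is affine in $\nu$), so $\det(\mathrm{Hess}) = -(\partial_\kappa\partial_\nu\cR^<(0,0))^2 < 0$ with no need to evaluate $\partial_\kappa^2\cR^<(0,0)$ at all; the paper instead factors the Hessian into a scalar times $\bigl(\begin{smallmatrix}c&1\\1&0\end{smallmatrix}\bigr)$ and notes the determinant is $-1$, which is the same fact differently packaged.
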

The fixed points of the dynamics as well as the vector field 
\[
(\kappa,\nu) \mapsto -(\partial_\kappa \mathcal{R}^< (\kappa,\nu)(1-\kappa^2), \partial_\nu \mathcal{R}^< (\kappa,\nu)(1-\nu^2))
\]
are displayed in Figure \ref{fig:nu_kappa_vector_field}.

\begin{figure}
\begin{center}
\begin{tabular}{cc}
\begin{tikzpicture}[scale=1.9]
  \draw[thin,dotted] (-1.5,-1.5) grid (1.5,1.5);
    \draw[->] (-1.5,0) -- (1.5,0);
    \draw[->] (0,-1.5) -- (0,1.5);
    \draw[thick] (-1,-1) -- (-1,1)
    -- (1,1) -- (1,-1) -- cycle;
    \node at (1.8,0) {$\kappa$};
    \node at (0,1.8) {$\nu$};

    \node [Gold] at (1,1) {\textbullet};
    \node [Gold] at (-1,-1) {\textbullet};

    \node [RoyalBlue] at (0,0) {\textbullet};
    
    \node [OrangeRed] at (-1,1) {\textbullet};
    \node [OrangeRed] at (1,-1) {\textbullet};

    \foreach \Point in {(1,-1), (-1,-1), (-1,1), (1,1), (0,0)}{
    \node at \Point {$\circ$};

}
\end{tikzpicture}
&
\includegraphics[width=7cm]{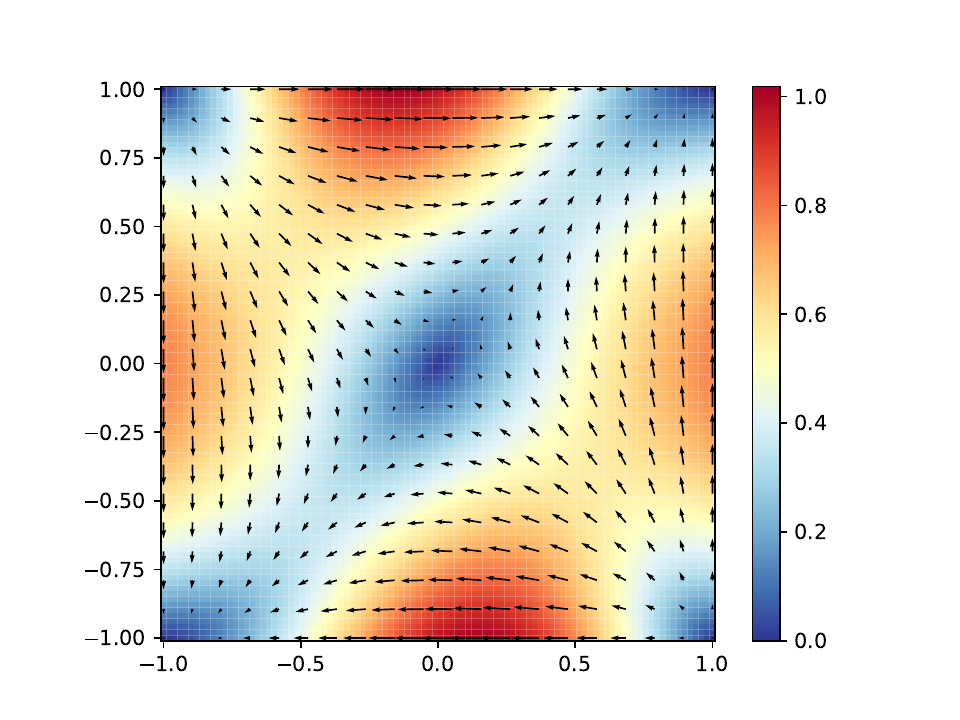} \\
{(a)} &{(b)}
\end{tabular}
\caption{Dynamics in $(\kappa, \nu)$ on the manifold $\mathcal{M}$. In (a), the fixed points of the dynamics are represented;  the  minimizers, saddle point, and maximizers are respectively depicted in yellow, blue and red. In (b), the vector field $(\kappa,\nu) \mapsto -(\partial_\kappa \mathcal{R}^< (\kappa,\nu)(1-\kappa^2), \partial_\nu \mathcal{R}^< (\kappa,\nu)(1-\nu^2))$ is displayed (the colormap corresponds to the magnitude of the vector field).}
\label{fig:nu_kappa_vector_field}
\end{center}
\end{figure}

\paragraph{Step 3: Convergence to global minima.} The convergence of the sequence of iterates $(\kappa_t,\nu_t)_{t \geqslant 0}$ to a global minimum is shown in two stages. First, we show that the iterates converge to one of the five fixed points described in Proposition \ref{prop:minima_maxima_saddle}.
\begin{proposition}
\label{prop:5points}
For a sufficiently small step size $\alpha$, the sequence of iterates $(\kappa_t, \nu_t)_{t \geqslant 0}$ converges to one of the five fixed points $\{(\pm 1, \pm 1), (0, 0)\}$. 
\end{proposition}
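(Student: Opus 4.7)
\textbf{Proof plan for Proposition \ref{prop:5points}.} The plan is to combine the three ingredients already established---risk monotonicity, vanishing increments, and the characterization of accumulation points---with the classical observation that the set of accumulation points of a bounded sequence whose increments vanish is connected. Since Proposition \ref{prop:critical-points} identifies only finitely many candidate fixed points, this connectedness will force the accumulation set to be a single point, which is the definition of convergence.

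\textbf{Step 1: setting up compactness and accumulation points.} The reduced iterates $(\kappa_t,\nu_t)$ live in the compact square $[-1,1]^2$ by Lemma \ref{lemma:2d-reformulation}, so the set $A$ of accumulation points of $(\kappa_t,\nu_t)_{t \geq 0}$ is non-empty and closed. By Proposition \ref{prop:analysis-pgd}, for a sufficiently small step size $\alpha$, every element of $A$ satisfies \eqref{eq:formula-accumulation-points}, and by Proposition \ref{prop:critical-points} this forces $A \subseteq F := \{(\pm 1, \pm 1),\, (0,0)\}$.

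\textbf{Step 2: connectedness of $A$ via vanishing increments.} Proposition \ref{prop:analysis-pgd} also gives $\| (\kappa_{t+1},\nu_{t+1}) - (\kappa_t,\nu_t) \|_2 \to 0$. The plan is to invoke the standard fact that whenever a bounded sequence in a metric space has vanishing consecutive increments, its set of accumulation points is connected. The short argument I have in mind goes by contradiction: if $A = A_1 \sqcup A_2$ were a non-trivial partition into disjoint closed sets, pick $\varepsilon>0$ with $d(A_1,A_2) > 3\varepsilon$ and consider the open $\varepsilon$-neighborhoods $U_1,U_2$. The sequence must visit both $U_1$ and $U_2$ infinitely often, and, once the increments are smaller than $\varepsilon$, every transition from $U_1$ to $U_2$ forces a visit to the compact ``corridor'' $[-1,1]^2 \setminus (U_1 \cup U_2)$. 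Infinitely many such visits yield an accumulation point outside $U_1 \cup U_2$, which contradicts $A \subseteq A_1 \cup A_2$.

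\textbf{Step 3: conclusion.} Since $A$ is a non-empty connected subset of the finite, hence totally disconnected, set $F$, it must be a singleton. A bounded sequence whose set of accumulation points is a single point converges to that point, which establishes the proposition.

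\textbf{Main obstacle.} The heavy lifting has already been done in Propositions \ref{prop:analysis-pgd} and \ref{prop:critical-points}; the only subtle point here is the connectedness of $A$, and while this is a folklore fact, I would include the short contradiction argument above for completeness and to make explicit the use of the compactness of $[-1,1]^2$ together with the vanishing of the PGD increments.
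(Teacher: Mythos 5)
Your proposal is correct and follows essentially the same route as the paper: use Proposition \ref{prop:analysis-pgd} to get vanishing increments and to constrain accumulation points via Proposition \ref{prop:critical-points}, then invoke connectedness of the accumulation set to conclude it is a singleton. The only difference is cosmetic: the paper cites a textbook reference (Lange, 2013, Prop.\ 12.4.1) for the fact that vanishing increments plus compactness imply a connected accumulation set, whereas you reprove this folklore fact by a short contradiction argument, making the proof self-contained.
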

\begin{proof}
 According to Proposition \ref{prop:analysis-pgd}, the distance between successive iterates $(\kappa_t, \nu_t)$ tends to zero. Therefore, the set of accumulation points of the sequence $(\kappa_t, \nu_t)_{t \geqslant 0}$ is connected \citep[][Proposition 12.4.1]{lange2013optimization}. Since there is a finite number of possible accumulation points (by Proposition~\ref{prop:critical-points}), 
 we deduce that the sequence has a unique accumulation point. Furthermore, the sequence belongs to a compact. Thus, it converges, and its limit is one of the five fixed points.
\end{proof}
It remains to precisely characterize the limit of the sequence $(\kappa_t,\nu_t)_{t \geqslant 0}$. To this aim, we begin by showing key properties of the gradient mapping $g$.
\begin{proposition} 
\label{prop:properties-g}
For a sufficiently small step size $\alpha$, the mapping $g$ is a local diffeomorphism around $(0,0)$, whose Jacobian matrix has one eigenvalue in $(0, 1)$ and one eigenvalue in $(1, \infty)$. Furthermore, it is injective on $[-1, 1]^2$, differentiable, and its Jacobian is non-degenerate.
\end{proposition}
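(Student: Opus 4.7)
The plan is to exploit that $g$ is a smooth near-identity perturbation on the compact domain $[-1,1]^2$ for small $\alpha$, and to carry out an explicit analysis at $(0,0)$ using the fact that it is a critical point of $\cR^<$.

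First, I would establish differentiability. Since $\cR^<$ is $C^\infty$ on $[-1,1]^2$ and the denominators in the definition of $g$ in \eqref{eq:dynamical_system_kappa_nu-2} are bounded below by $1$, each coordinate of $g$ is $C^\infty$. By direct computation, using smoothness and compactness to bound $\cR^<$ and its derivatives uniformly on $[-1,1]^2$, one obtains the uniform expansion
\[
Dg(\kappa,\nu) = I_2 - \alpha H(\kappa,\nu) + O(\alpha^2),
\]
where $H(\kappa,\nu)$ is the Hessian of $\cR^<$ at $(\kappa,\nu)$, modulo bounded correction terms, and the remainder is controlled uniformly on $[-1,1]^2$. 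The boundary factors $(1-\kappa^2)$ and $(1-\nu^2)$ do not create singularities; they simply vanish at the corners. From this expansion, $\det(Dg) = 1 + O(\alpha) > 0$ uniformly on $[-1,1]^2$ for small enough $\alpha$, which yields non-degeneracy of the Jacobian.

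Next, for injectivity on $[-1,1]^2$, I would use the classical monotonicity argument on the convex domain: for small $\alpha$, the symmetric part $\tfrac{1}{2}(Dg + Dg^\top)$ has eigenvalues uniformly close to $1$, so for any distinct $x_1, x_2 \in [-1,1]^2$,
\[
(x_1 - x_2)^\top (g(x_1) - g(x_2)) = (x_1 - x_2)^\top \left(\int_0^1 Dg(x_2 + t(x_1 - x_2))\, \mathrm{d}t\right) (x_1 - x_2) > 0,
\]
which forces $g(x_1) \neq g(x_2)$.

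For the local diffeomorphism at $(0,0)$ and the eigenvalue structure, the key observation is that $(0,0)$ is a critical point of $\cR^<$ by Proposition~\ref{prop:critical-points}, so $\partial_\kappa \cR^<(0,0) = \partial_\nu \cR^<(0,0) = 0$. Substituting into the explicit formula for $Dg$, all terms that carry a factor of the gradient vanish, leaving
\[
Dg(0,0) = I_2 - \alpha H_0, \qquad H_0 = \begin{pmatrix} a & b \\ b & 0 \end{pmatrix},
\]
with $a := \partial^2_{\kappa\kappa} \cR^<(0,0)$ and $b := \partial^2_{\kappa\nu} \cR^<(0,0)$. Differentiating the expression in Lemma~\ref{lem:risk_on_M} and using $\erf'(0) = 2/\sqrt{\pi}$, one computes
\[
b = -\frac{4\gamma^2 \lambda}{\sqrt{\pi}} \sqrt{\frac{d}{2(1+2\lambda^2\gamma^2)}} \neq 0.
\]
The eigenvalues of $H_0$ are $\mu_\pm = (a \pm \sqrt{a^2 + 4b^2})/2$, and since $b \neq 0$ we have $\mu_- < 0 < \mu_+$. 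Hence the eigenvalues of $Dg(0,0)$ are $1 - \alpha \mu_+ \in (0,1)$ for $\alpha < 1/\mu_+$ and $1 - \alpha \mu_- \in (1,\infty)$. Invertibility of $Dg(0,0)$ combined with smoothness of $g$ then yields the local diffeomorphism property via the inverse function theorem.

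The main obstacle I anticipate is the bookkeeping required to justify the uniform expansion $Dg = I_2 + O(\alpha)$ on all of $[-1,1]^2$, including the boundary where $1-\kappa^2$ or $1-\nu^2$ vanish; this is where care is needed, but it is resolved by the smoothness of $\cR^<$ and compactness of the domain. The verification that $b \neq 0$, though elementary, is essential: without it, one eigenvalue of $Dg(0,0)$ would equal exactly $1$, destroying the separation required by the statement.
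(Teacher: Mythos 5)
Your proposal is correct and follows the same overall architecture as the paper's proof: identify that $(0,0)$ is a critical point of $\cR^<$ so that $Dg(0,0)=I_2-\alpha H_{\cR^<}(0,0)$ exactly, then exploit the vanishing $(2,2)$-entry of the Hessian together with the nonzero off-diagonal entry $b$ to get $\det H_{\cR^<}(0,0)=-b^2<0$ and hence one eigenvalue of $Dg(0,0)$ on each side of $1$ for small $\alpha$, and finally use that $g$ is a near-identity $C^\infty$ perturbation uniformly on the compact square to conclude non-degeneracy of $Dg$ everywhere.

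The one place where your route genuinely diverges from the paper is the injectivity argument. You use strict monotonicity of the vector field on the convex domain: since $\tfrac12(Dg+Dg^\top)$ is uniformly positive definite for small $\alpha$, the fundamental theorem of calculus along segments in $[-1,1]^2$ gives $(x_1-x_2)^\top(g(x_1)-g(x_2))>0$, ruling out collisions. The paper instead decomposes $g=\mathrm{id}+\alpha h$, shows $h$ is $\beta$-Lipschitz with $\beta$ independent of $\alpha$ (this requires some care because of the $\alpha$-dependent normalization in the denominator), and then uses the contraction $\|x_1-x_2\|\leq\alpha\beta\|x_1-x_2\|$ for $\alpha<1/\beta$. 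Both are standard near-identity arguments; yours is arguably slightly slicker once the uniform Jacobian bound is in hand, while the paper's gives the explicit Lipschitz estimate on $h$ directly, which is also the ingredient they need to control $J_h$ for non-degeneracy.

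One small imprecision worth flagging: your displayed uniform expansion $Dg(\kappa,\nu)=I_2-\alpha H(\kappa,\nu)+O(\alpha^2)$ is not literally correct away from $(0,0)$, because differentiating $\alpha(\partial_\kappa\cR^<)(1-\kappa^2)$ produces the extra first-order term $2\alpha\kappa\,\partial_\kappa\cR^<$ (and similarly in $\nu$), so the $O(\alpha)$ part of $Dg$ is not the Hessian of $\cR^<$ but a modified expression. Your parenthetical ``modulo bounded correction terms'' acknowledges this, and it does not affect the only conclusion you extract globally, namely $Dg=I_2+O(\alpha)$ uniformly and hence $\det Dg=1+O(\alpha)>0$. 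At $(0,0)$ the extra terms vanish because $\nabla\cR^<(0,0)=0$ and $\kappa=\nu=0$, so the identity $Dg(0,0)=I_2-\alpha H_{\cR^<}(0,0)$ is exact, which is exactly what your local eigenvalue analysis requires.
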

These properties enable us to apply the Center-Stable Manifold theorem \citep[][Theorem III.7]{shub1987global}, a tool from dynamical systems theory, to deduce the next proposition.
\begin{proposition}  \label{prop:no-conv-to-saddle}
For a sufficiently small step size $\alpha$, the set of initializations such that the sequence $(\kappa_t, \nu_t)_{t \geqslant 0}$ converges to $(-1, 1)$, $(1, -1)$, or $(0, 0)$ has Lebesgue measure zero (with respect to the Lebesgue measure on the manifold $\cM$). 
\end{proposition}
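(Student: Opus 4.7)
The plan is to apply the Center--Stable Manifold theorem (Shub, Theorem III.7) at each of the three bad fixed points $(-1,1)$, $(1,-1)$, $(0,0)$, and then globalize by pulling the resulting local measure-zero stable sets back through the forward iterates of $g$, using the global injectivity and non-degeneracy of $g$ from Proposition \ref{prop:properties-g}. Finally, the conclusion is transferred from the reduced $(\kappa,\nu)$-dynamics on $[-1,1]^2$ to the manifold $\cM$.

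First, I would verify that each of the three fixed points is \emph{strictly unstable} in at least one direction, i.e., its Jacobian $Dg$ has an eigenvalue of modulus strictly greater than $1$. For the saddle $(0,0)$, this is already Proposition \ref{prop:properties-g}. For the boundary fixed points $(\pm 1, \mp 1)$, a direct chain-rule computation on \eqref{eq:dynamical_system_kappa_nu-2} exploiting that the factors $(1-\kappa^2)$ and $(1-\nu^2)$ vanish yields a diagonal Jacobian with entries
\[
\partial_\kappa g_\kappa \big|_{(\kappa,\nu)} = \bigl(1 + \alpha\,\kappa\, \partial_\kappa \mathcal{R}^<(\kappa,\nu)\bigr)^2, \qquad \partial_\nu g_\nu \big|_{(\kappa,\nu)} = \bigl(1 + \alpha\,\nu\, \partial_\nu \mathcal{R}^<(\kappa,\nu)\bigr)^2,
\]
at $(\kappa,\nu) \in \{(1,-1), (-1,1)\}$ (the off-diagonal entries vanish because the same prefactors kill the mixed derivatives of the numerator and of the quantity under the square root). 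Using the explicit formula for $\cR^<$ on $\cM$ given in Lemma \ref{lem:risk_on_M}, together with the strict monotonicity of $\erf$ and of $t \mapsto \zeta(t,\lambda^2\gamma^2)$ for positive arguments, one checks that $\kappa\,\partial_\kappa \cR^<$ and $\nu\,\partial_\nu \cR^<$ are both strictly positive at these two maxima, yielding two eigenvalues strictly larger than $1$ (so these points are in fact sources).

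Second, for each $p \in \{(-1,1), (1,-1), (0,0)\}$, the Center--Stable Manifold theorem produces a neighborhood $U_p$ of $p$ and a $C^1$ submanifold $W^{cs}_{\mathrm{loc}}(p) \subset U_p$ of dimension strictly less than $2$ (for the saddle $(0,0)$ the local stable manifold is $1$-dimensional by Proposition \ref{prop:properties-g}, while for the two sources it reduces to $\{p\}$), such that any orbit of $g$ that remains in $U_p$ and converges to $p$ must lie in $W^{cs}_{\mathrm{loc}}(p)$. In particular $W^{cs}_{\mathrm{loc}}(p)$ has two-dimensional Lebesgue measure zero. Any initialization $(\kappa_0,\nu_0) \in [-1,1]^2$ whose orbit converges to $p$ must eventually enter and stay in $U_p$, hence belongs to $B(p) := \bigcup_{T \geq 0} g^{-T}\bigl(W^{cs}_{\mathrm{loc}}(p)\bigr)$. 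By Proposition \ref{prop:properties-g}, $g$ is injective and $C^1$ with non-degenerate Jacobian on $[-1,1]^2$, so by the inverse function theorem it is an open map onto its image with a $C^1$ local inverse; hence each $g^{-T}$ sends measure-zero sets to measure-zero sets, and $B(p)$ is a countable union of null sets, so it is null in $[-1,1]^2$. Taking the finite union over the three bad fixed points gives a null set in $[-1,1]^2$.

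Finally, the transfer to $\cM$ is immediate: the smooth submersion $\pi: \cM \to [-1,1]^2$ defined by $(k,v) \mapsto (k^\top k^\star, v^\top v^\star)$ pulls back null sets of $[-1,1]^2$ to null sets of $\cM$ (by the coarea formula, or equivalently Fubini in local slice charts), and by Lemma \ref{lemma:2d-reformulation} an initialization $(k_0,v_0) \in \cM$ produces an orbit converging to a bad fixed point if and only if its image $\pi(k_0,v_0) \in [-1,1]^2$ does. The main obstacle, in my view, is the Jacobian analysis at the boundary fixed points $(\pm 1, \mp 1)$: one must handle the vanishing of $(1-\kappa^2)$ and $(1-\nu^2)$ in \eqref{eq:dynamical_system_kappa_nu-2} without losing hyperbolicity, and must rule out the degenerate case where the relevant partials of $\cR^<$ vanish---which is where the closed-form expression of Lemma \ref{lem:risk_on_M} and the strict monotonicity of $\erf$ play a decisive role.
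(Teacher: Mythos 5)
Your treatment of the saddle point $(0,0)$ — Center-Stable Manifold theorem, codimension-one local manifold, pullback through $g^{-T}$, and the use of injectivity and non-degenerate Jacobian to conclude that preimages of null sets are null — is exactly the argument in the paper. You also add a careful transfer from $[-1,1]^2$ back to $\cM$ via the coarea formula, which the paper leaves implicit; that is a welcome clarification.

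Where you diverge is at the boundary maxima $(\pm 1,\mp 1)$, and there your route is considerably heavier than necessary. You propose to apply the Center-Stable Manifold theorem again, which forces you into a Jacobian computation at the boundary of the square (where the factors $1-\kappa^2$ and $1-\nu^2$ vanish), a sign analysis of $\kappa\,\partial_\kappa\cR^<$ and $\nu\,\partial_\nu\cR^<$ there, and a tacit extension of $g$ to an open neighborhood of $[-1,1]^2$ so that the theorem even applies. Your formula $\partial_\kappa g_\kappa = (1+\alpha\kappa\,\partial_\kappa\cR^<)^2$ at the corners is correct, and the signs do come out as you claim, so the argument can be made to work — but you yourself flag this as the main obstacle, and rightly so. The paper avoids all of it with a two-line Lyapunov argument: by Proposition~\ref{prop:minima_maxima_saddle} those two points are \emph{strict global maxima} of $\cR^<$ on $[-1,1]^2$, and by Proposition~\ref{prop:analysis-pgd} the risk is non-increasing along iterates (and strictly decreasing off fixed points); hence the only initialization whose orbit can converge to one of them is that point itself, a single point and thus a null set. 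This is both shorter and more robust, since it does not need hyperbolicity at the corners, an extension of the map, or boundary versions of any manifold theorem. If you keep your route, you should state explicitly that $g$ extends smoothly past the corners and then note that a source has a zero-dimensional center-stable manifold; if you want the cleaner proof, replace that whole paragraph with the descent argument.
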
 
Combining Proposition \ref{prop:5points} and Proposition  \ref{prop:no-conv-to-saddle}, we conclude that, provided the step size $\alpha$ is chosen small enough, the sequence $(\kappa_t, \nu_t)_{t \geqslant 0}$ almost surely converges to one of the minimizers, $(1,1)$ or $(-1,-1)$. This convergence is almost sure with respect to the Lebesgue measure on the manifold $\cM$. Indeed, Proposition \ref{prop:no-conv-to-saddle} ensures that the pathological initializations converging towards a maximizer or a saddle point are of Lebesgue measure zero. This concludes the proof of Theorem \ref{thm:main}.

The use of the Center-Stable Manifold theorem is crucial to our proof. Unfortunately, this tool does not provide quantitative rates of convergence. Obtaining a rate is a challenging task as it would require quantifying the distance of the iterates to the saddle points of the risk (the dynamics is indeed slower near saddle points), which in turn requires other tools of analysis and potentially additional assumptions.

\section{Proofs of the main results}
\label{app:proof_main}

\subsection{Proof of Lemma \ref{lem:risk_on_M} and Theorem \ref{theo:optimalrisk}}
\label{proof:risk_5_parameters}

We recall the formula for the risk
\begin{equation*}
\cR(k, v) = \Esp \Big[ \Big(Y - \sum_{\ell=1}^\length \erf (\lambda X_\ell^\top k)   X_\ell^\top v   \Big)^2 \Big]
\end{equation*}
and the data model 
\begin{align*}
   Y &= X_{J_0}^\top v^\star + \xi,  
\end{align*}
where 
\[
J_0 \in \mathcal{P}(\{1, \hdots,\length\})
\quad \text{and}\quad
\left\{
\begin{array}{lll}
    X_{J_0} &\sim&\cN\Big(\sqrt{\frac{d}{2}}k^\star, \gamma^2 I_d\Big)   \\
    X_\ell &\sim& \cN(0, I_d) \quad \textnormal{for} \quad \ell \neq J_0.
\end{array}
\right.
\]
In the above expression for the risk, we can condition on the value of $J_0$. Actually, the conditioned risk is independent of $J_0$. Thus in this section, we assume without loss of generality that $J_0 = 1$ a.s.: 
\begin{equation}    \label{eq:intermediate-eq-risk}
\cR(k, v) = \Esp \Big[ \Big(X_1^\top v^\star + \xi - \erf (\lambda X_1^\top k)   X_1^\top v - \sum_{\ell=2}^\length \erf (\lambda X_\ell^\top k)   X_\ell^\top v   \Big)^2 \Big]  \, ,
\end{equation}
where 
\[
\left\{
\begin{array}{lll}
    X_{1} &\sim&\cN\Big(\sqrt{\frac{d}{2}}k^\star, \gamma^2 I_d\Big)   \\
    X_\ell &\sim& \cN(0, I_d) \quad \textnormal{for} \quad \ell \geq 2.
\end{array}
\right.
\]
We rewrite this quantity in terms of multivariate standard Gaussian random variables. Using Assumption \ref{hyp:orthogonality}, we get
\begin{align*}
\cR(k, v) &= \Esp \Big[ \Big(\gamma \widetilde{X}_1^\top v^\star + \xi - \erf \Big(\lambda \Big(\sqrt{\frac{d}{2}}k_*^\top k + \gamma \widetilde{X}_1^\top k\Big)\Big)   \Big(\sqrt{\frac{d}{2}}k_*^\top v + \gamma \widetilde{X}_1^\top v\Big) \\
&\qquad- \sum_{\ell=2}^\length \erf (\lambda X_\ell^\top k)   X_\ell^\top v   \Big)^2 \Big]  \, ,
\end{align*}
where $\widetilde{X}_1, X_2, \dots, X_\length \sim \cN(0, I_d)$. 
This can be formulated in terms of the five scalar quantities $\kappa= k^\top k^\star$, $\nu = v^\top v^\star$, $\theta = v^\top k^\star$, $\eta = k^\top v^\star$, and $\rho = k^\top v$. Indeed, we have 
\begin{align}
\begin{split}
\label{eq:risk-5-params-1}
    &\cR(k,v) = \cR^<(\kappa, \nu, \theta, \eta, \rho) \\
    &\quad:= \Esp\Big[\Big(\gamma G_1^{v^\star} + \xi - \Big(\sqrt{\frac{d}{2}}\theta + \gamma G_1^v\Big)\erf\Big(\lambda\Big(\sqrt{\frac{d}{2}}\kappa + \gamma G_1^k\Big)\Big)-\sum_{\ell=2}^L G_{\ell}^v \erf(\lambda G_{\ell}^k)\Big)^2\Big] \, ,
    \end{split}
\end{align}
where
\begin{align}
\label{eq:risk-5-params-2}
    \begin{pmatrix}
        G_1^{v^\star} \\ G_1^v \\ G_1^k
    \end{pmatrix}
    , \dots, 
    \begin{pmatrix}
        G_L^{v^\star} \\ G_L^v \\ G_L^k
    \end{pmatrix}
    \underset{\text{i.i.d.}}{\sim} \cN\left(0, \begin{pmatrix}
        1 & \nu & \eta \\
        \nu & 1 & \rho \\
        \eta & \rho & 1 
    \end{pmatrix}\right) \, .
\end{align}
This last expression only involves the five parameters $\kappa, \nu, \theta, \eta, \rho$, which play a role either explicitly in the function or as parameters of the covariance of the random variables. This proves the first statement of Theorem \ref{theo:optimalrisk}. A computation of a closed-form formula for this expectation is given in Appendix \ref{proof:risk_on_sphere}.

On the manifold $\cM$ defined by $\theta = \eta = \rho = 0$, we can simplify the expressions \eqref{eq:risk-5-params-1}--\eqref{eq:risk-5-params-2}
\begin{align*}
    \cR^<(\kappa, \nu, 0,0,0) = \Esp\Big[\Big(\gamma G_1^{v^\star} + \xi - \gamma G_1^v\erf\Big(\lambda\Big(\sqrt{\frac{d}{2}}\kappa + \gamma G_1^k\Big)\Big)-\sum_{\ell=2}^L G_{\ell}^v \erf(\lambda G_{\ell}^k)\Big)^2\Big] 
\end{align*}
where
$\begin{pmatrix}
        G_1^{v^\star} \\ G_1^v 
    \end{pmatrix}
    , \dots, 
    \begin{pmatrix}
        G_L^{v^\star} \\ G_L^v 
    \end{pmatrix}
    \underset{\text{i.i.d.}}{\sim} \cN\left(0, \begin{pmatrix}
        1 & \nu  \\
        \nu & 1  \\
    \end{pmatrix}\right)$, $G_1^k, \dots, G_L^k \underset{\text{i.i.d.}}{\sim} \cN(0,1)$, and $\xi \sim \cN(0, \varepsilon^2)$ are independent. 

    We first expand in $\xi$ and obtain
    \begin{align*}
    \cR^<(\kappa, \nu, 0,0,0) = \varepsilon^2 + \Esp\Big[\Big(\gamma G_1^{v^\star} - \gamma G_1^v\erf\Big(\lambda\Big(\sqrt{\frac{d}{2}}\kappa + \gamma G_1^k\Big)\Big)-\sum_{\ell=2}^L G_{\ell}^v \erf(\lambda G_{\ell}^k)\Big)^2\Big]  \, .
\end{align*}
We now expand the square, as follows:
\begin{align*}
    \cR^<(\kappa, \nu, 0,0,0) &= \varepsilon^2 + \gamma^2 \Esp \left[(G_1^{v^\star})^2\right] - 2 \gamma^2\Esp \Big[G_1^{v^\star} G_1^v \erf \Big(\lambda \Big(\sqrt{\frac{d}{2}}\kappa + \gamma G_1^k\Big)\Big)\Big] \\
    &\quad+ \gamma^2\Esp \Big[ (G_1^v)^2 \erf^2 \Big(\lambda \Big(\sqrt{\frac{d}{2}}\kappa + \gamma G_1^k\Big)\Big)\Big] 
\\
&\quad-2\sum_{\ell=2}^L\gamma \Esp\Big[\Big(G_1^{v^\star}-G_1^v \erf \Big(\lambda \Big(\sqrt{\frac{d}{2}}\kappa + \gamma G_1^k\Big)\Big)\Big)G_{\ell}^v \erf(\lambda G_{\ell}^k)\Big] \\
&\quad+ \sum_{\ell,m =2}^L \Esp[G_{\ell}^v \erf(\lambda G_{\ell}^k) G_m^v \erf(\lambda G_m^k)] \, .
\end{align*}
We address each term in this sum separately. 
\begin{itemize}
    \item Since $G_1^{v^\star} \sim \cN(0,1)$, $\gamma^2 \Esp \left[(G_1^{v^\star})^2\right] = \gamma^2$. 
    \item Since $\begin{pmatrix}
        G_1^{v^\star} \\ G_1^v 
    \end{pmatrix}
    \sim \cN\left(0, \begin{pmatrix}
        1 & \nu  \\
        \nu & 1  \\
    \end{pmatrix}\right)$ is independent from $G_1^k \sim \cN(0,1)$, we have 
    \begin{align*}
         - 2 \gamma^2\Esp \Big[G_1^{v^\star} G_1^v \erf \Big(\lambda \Big(\sqrt{\frac{d}{2}}\kappa + \gamma G_1^k\Big)\Big)\Big] &=  - 2 \gamma^2\Esp \Big[G_1^{v^\star} G_1^v \Big]\Esp \Big[\erf \Big(\lambda \Big(\sqrt{\frac{d}{2}}\kappa + \gamma G_1^k\Big)\Big)\Big] \\
         &= - 2 \gamma^2 \nu \Esp \Big[\erf \Big(\lambda \Big(\sqrt{\frac{d}{2}}\kappa + \gamma G_1^k\Big)\Big)\Big] \ .
    \end{align*}
    Finally, using Lemma \ref{lem:technical_results}$(ii)$, we obtain 
    \begin{align*}
         - 2 \gamma^2\Esp \Big[G_1^{v^\star} G_1^v \erf \Big(\lambda \Big(\sqrt{\frac{d}{2}}\kappa + \gamma G_1^k\Big)\Big)\Big] = - 2 \gamma^2 \nu \erf\Big(\lambda \sqrt{\frac{d}{2}} \frac{\kappa}{\sqrt{1 +2\lambda^2 \gamma^2}}\Big) \ .
         \end{align*}
         \item Since $G_1^v, G_1^k \sim_{\text{i.i.d.}} \cN(0,1)$, we have 
         \begin{align*}
             \gamma^2\Esp \Big[ (G_1^v)^2 \erf^2 \Big(\lambda \Big(\sqrt{\frac{d}{2}}\kappa + \gamma G_1^k\Big)\Big)\Big] &= \gamma^2\Esp \Big[ (G_1^v)^2 \Big] \Esp \Big[ \erf^2 \Big(\lambda \Big(\sqrt{\frac{d}{2}}\kappa + \gamma G_1^k\Big)\Big)\Big] \\
             &=\gamma^2 \Esp \Big[ \erf^2 \Big(\lambda \Big(\sqrt{\frac{d}{2}}\kappa + \gamma G_1^k\Big)\Big)\Big] \, .
         \end{align*}
         Using the definition of $\zeta$ in Eq.~\eqref{eq:def-zeta}, we have 
         \begin{align*}
             \gamma^2\Esp \Big[ (G_1^v)^2 \erf^2 \Big(\lambda \Big(\sqrt{\frac{d}{2}}\kappa + \gamma G_1^k\Big)\Big)\Big] &= \gamma^2\Esp \Big[ (G_1^v)^2 \Big] \Esp \Big[ \erf^2 \Big(\lambda \Big(\sqrt{\frac{d}{2}}\kappa + \gamma G_1^k\Big)\Big)\Big] \\
             &=\gamma^2 \zeta\Big(\lambda \sqrt{\frac{d}{2}}\kappa, \lambda^2 \gamma^2 \Big) \, .
         \end{align*}
         \item For $\ell=2, \dots, L$, $(G_1^{v^\star}, G_1^v, G_1^k)$, $G_{\ell}^v$, and $G_{\ell}^k$ are independent. Thus 
         \begin{align*}
             &\Esp\Big[\Big(G_1^{v^\star}-G_1^v \erf \Big(\lambda \Big(\sqrt{\frac{d}{2}}\kappa + \gamma G_1^k\Big)\Big)\Big)G_{\ell}^v \erf(\lambda G_{\ell}^k)\Big] \\
             &\qquad= \Esp\Big[\Big(G_1^{v^\star}-G_1^v \erf \Big(\lambda \Big(\sqrt{\frac{d}{2}}\kappa + \gamma G_1^k\Big)\Big)\Big) \Big] \Esp\Big[G_{\ell}^v\Big] \Esp \Big[ \erf(\lambda G_{\ell}^k)\Big] = 0 \, ,
         \end{align*}
         where in the last step we use $\Esp[G_{\ell}^v] = 0$. 
         \item Finally, to tackle the last term, we address the cases $\ell \neq m$ and $\ell = m$ separately. If $\ell \neq m$, as $G_{\ell}^v, G_{\ell}^k, G_m^v$, and $G_m^k$ are independent, we have 
         \begin{align*}
              \Esp[G_{\ell}^v \erf(\lambda G_{\ell}^k) G_m^v \erf(\lambda G_m^k)] =  \Esp[G_{\ell}^v] \Esp[ \erf(\lambda G_{\ell}^k)]\Esp[ G_m^v] \Esp[ \erf(\lambda G_m^k)] = 0 \, . 
         \end{align*}
         If $\ell = m$, as $G_{\ell}^v, G_{\ell}^k \sim_{\text{i.i.d.}} \cN(0,1)$, we have
         \begin{align*}
              \Esp[(G_{\ell}^v)^2 \erf^2(\lambda G_{\ell}^k) ] = \Esp[(G_{\ell}^v)^2] \Esp[ \erf^2(\lambda G_{\ell}^k) ] = \zeta(0, \lambda^2) \, . 
         \end{align*}
\end{itemize}
Putting together these computations, we obtain
\begin{align*}
    \cR^<(\kappa, \nu, 0,0,0) &= \varepsilon^2 + \gamma^2  - 2 \gamma^2 \nu \erf\Big(\lambda \sqrt{\frac{d}{2}} \frac{\kappa}{\sqrt{1 +2\lambda^2 \gamma^2}}\Big) + \gamma^2 \zeta\Big(\lambda \sqrt{\frac{d}{2}}\kappa, \lambda^2 \gamma^2 \Big)
\\
&\quad+ (L-1)\zeta(0, \lambda^2) \, .
\end{align*}
This proves Lemma \ref{lem:risk_on_M}. Taking $\kappa=\nu=1$ proves Theorem \ref{theo:optimalrisk}.

\subsection{Proof of Corollary \ref{cor:oracle-asymptotic-bayes}}

Recall that, according to Theorem \ref{theo:optimalrisk},
\begin{align*}
    \mathcal{R}_\lambda(k^\star, v^\star) &= \gamma^2- 2 \gamma^2 \, \erf \bigg(\lambda \sqrt{\frac{d}{2(1+2\lambda^2\gamma^2)}} \bigg)  +  \gamma^2 \zeta\Big(\lambda \sqrt{\frac{d}{2}} ,\lambda^2 \gamma^2\Big)   +   (\length -1)  \zeta(0, \lambda^2)  + \varepsilon^2 \, ,
\end{align*}
where, for $t,\gamma\in \mathbb{R}$,
\begin{equation*}
        \zeta(t,\gamma^2) := \Esp \left[ \erf^2(t+\gamma G)\right] \, , \qquad  G\sim\mathcal{N}(0, 1) \, .
\end{equation*}
We compute the limit of each term separately. First, we have
\begin{equation}    \label{eq:tech-1}
\lambda \sqrt{\frac{d}{2(1+2\lambda^2\gamma^2)}} \sim \frac{\lambda \sqrt{d}}{\sqrt{2}} \xrightarrow{d \to \infty} \infty \, .    
\end{equation}
Therefore, the second term of $\mathcal{R}_\lambda(k^\star, v^\star)$ tends to $-2\gamma^2$. To handle the third term, note by Jensen's inequality that
\[
1\geq \zeta\Big(\lambda \sqrt{\frac{d}{2}} ,\lambda^2 \gamma^2\Big) = \Esp \left[ \erf^2\Big(\lambda \sqrt{\frac{d}{2}}+\lambda\gamma G\Big)\right] \geq \Esp \left[ \erf\Big(\lambda \sqrt{\frac{d}{2}}+\lambda\gamma G\Big)\right]^2 \, .
\]
Thus, by Lemma \ref{lem:technical_results}$(ii)$,
\[
1 \geq \zeta\Big(\lambda \sqrt{\frac{d}{2}} ,\lambda^2 \gamma^2\Big) \geq \erf^2 \Big( \lambda \sqrt{\frac{d}{2(1+2\lambda^2\gamma^2)}} \Big) \to 1 \, ,
\]
where we used \eqref{eq:tech-1}.
Thus the third term of $\mathcal{R}_\lambda(k^\star, v^\star)$ converges to $\gamma^2$.
As for the fourth term, observe by Lemma~\ref{lem:erf} that
\[
\erf^2(u) \leq \frac{4}{\pi} u^2 \, ,
\]
hence
\[
0 \leq \zeta(0, \lambda^2) \leq \frac{4}{\pi} \lambda^2 \Esp[G^2] = \frac{4}{\pi} \lambda^2 \, .
\]
Since $\lambda \sqrt{L} \to 0$, we get
\[
(\length -1)  \zeta(0, \lambda^2) = \mathcal{O}(\lambda^2 L) = o(1) \, .
\]
Putting everything together, we obtain
\[
\mathcal{R}_\lambda(k^\star, v^\star) \xrightarrow{d \to \infty} \gamma^2 - 2\gamma^2 + \gamma^2 + 0 + \varepsilon^2 = \varepsilon^2 \, .
\]
Since we already know by \eqref{eq:bayes-min} that the Bayes risk is lower-bounded by $\varepsilon^2$, this proves that the Bayes risk is asymptotically \textit{equal} to $\varepsilon^2$, and that the oracle predictor is asymptotically Bayes optimal.

\subsection{Proof of Proposition \ref{prop:bayes_risk_linear_model}}

Let us first introduce a useful notation for the proof. If $M$ is a block matrix, we denote by $M_{[ij]}$ its $(i,j)$-th block, and likewise, if $u$ is a block vector, we denote by $u_{[j]}$ its $j$-th block. 
Next, note that 
\begin{align*}
\Esp [Y^2] &= \varepsilon^2 + \Esp \left[ ((v^\star)^\top X_{J_0})^2 \right] \\
&= \varepsilon^2 + \gamma^2 \|v^\star \|_2^2 \\
&= \varepsilon^2 + \gamma^2,
\end{align*}
since $\|v^\star \|_2^2=1$. Recall that
\[
\beta^\star \in \argmin_{\beta \in \R^{dL}}\Esp\Big[(Y-(X_1^\top, \hdots , X_L^\top)\beta)^2\Big]
\]
is the optimal linear predictor. The classical formula for linear regression shows that
\begin{align*}
    \beta^\star &= \left(\Esp \left[ 
    \begin{pmatrix}
        X_1 \\
        \vdots \\
        X_\length
    \end{pmatrix} (X_1^\top,  \hdots, X_\length^\top)
    \right] \right)^{-1} \Esp\left[ ((v^\star)^\top X_{J_0} + \xi)
    \begin{pmatrix}
        X_1 \\
        \vdots \\
        X_\length
    \end{pmatrix}
    \right].
\end{align*}
On the one hand, let 
\[
M = \begin{pmatrix}
        X_1 \\
        \vdots \\
        X_\length
    \end{pmatrix} (X_1^\top,  \hdots, X_\length^\top) \, .
\]
Then $\Esp[M] = \Esp[\Esp[M | J_0]]$, and $\Esp[M | J_0]$ is a block-diagonal matrix, where, for $j,j' \in \{1, \dots, L\}$,
\[
\Esp[M | J_0=j]_{[j', j']} = \delta_{j \neq j'} I_d + \delta_{j = j'} (\gamma^2 I_d + \frac{d}{2} k^\star (k^\star)^\top) \, .
\]
Thus
\[
\Esp[M]_{[j', j']} = (1- p_{j'}) I_d + p_{j'} (\gamma^2 I_d + \frac{d}{2} k^\star (k^\star)^\top) = I_d + p_{j'}(\gamma^2 - 1) I_d  + p_{j'} \frac{d}{2} k^\star (k^\star)^\top \, .
\]
On the other hand, let
\[
u = ((v^\star)^\top X_{J_0} + \xi)
    \begin{pmatrix}
        X_1 \\
        \vdots \\
        X_\length 
    \end{pmatrix} \, .
\]
Then
\begin{align*}
    \Esp\left[u\right] &= \Esp\left[ 
    \begin{pmatrix}
        X_1 \\
        \vdots \\
        X_\length 
    \end{pmatrix} X_{J_0}^\top 
    \right] v^\star = \Esp\left[ \Esp\left[
    \begin{pmatrix}
        X_1 \\
        \vdots \\
        X_\length 
    \end{pmatrix} X_{J_0}^\top 
    \right] \middle\vert J_0 \right]v^\star \\
    &= \begin{pmatrix}
        p_1 (\gamma^2    I_d &+ & \frac{d}{2} k^\star (k^\star)^\top) \\
        &\vdots & \\
        p_L (\gamma^2    I_d &+& \frac{d}{2} k^\star (k^\star)^\top)
    \end{pmatrix} v^\star  = \gamma^2 \begin{pmatrix}
        p_1 v^\star \\
        \vdots \\
        p_L v^\star
    \end{pmatrix},
\end{align*}
since, by Assumption \ref{hyp:orthogonality}, $k^{\star \top}v^{\star}=0$.

Since $\Esp[M]$ is a block-diagonal matrix and $\Esp[u]$ is a block vector, we get by standard computation rules for block matrices
\[
\beta^\star_{[j]} = (\Esp[M]^{-1} \Esp[u])_{[j]} = \Esp[M]_{[j, j]}^{-1} \Esp[u]_{[j]}  = \Big(I_d + p_{j}(\gamma^2 - 1) I_d  + p_{j} \frac{d}{2} k^\star (k^\star)^\top\Big)^{-1} \gamma^2 p_j v^\star \, .
\]
Recall the Sherman-Morrison formula \citep[][Section 2.7.1]{press2007numerical}, which states that for any vectors $u,v \in \mathbb R^d$, $(I_d+uu^\top)^{-1}v = \left(I_d-uu^\top/(1+u^\top u) \right)v$.
Applying this formula with orthogonal vectors, we obtain
\begin{align*}
    \beta^\star_{[j]} &= \left(1 + p_{j}(\gamma^2 - 1) \right)^{-1} \gamma^2 p_j v^\star
    = \frac{\gamma^2 p_j}{1 + p_{j}(\gamma^2 - 1)} v^\star \, ,
\end{align*}
which shows the first formula of the proposition.
Finally, the risk associated with the optimal linear predictor $(X_1^\top , \hdots , X_L^\top) \mapsto (X_1^\top , \hdots , X_L^\top) \beta^{\star}$ is given by
\begin{align}
    \mathcal{R}(\beta^\star) &=
    \Esp [Y^2] - \Esp[Y(X_1^\top \hdots X_L^\top) \beta^\star] \nonumber \\ 
    &= \varepsilon^2 + \gamma^2 - \gamma^2 \cdot  \begin{pmatrix}
        p_1 (v^\star)^\top, &
        \dots, &
        p_L (v^\star)^\top
    \end{pmatrix} 
    \begin{pmatrix}
        \beta^\star_{[1]} \\
        \vdots \\
        \beta^\star_{[L]}
    \end{pmatrix} \nonumber \\
    &= \varepsilon^2 + \gamma^2 - \gamma^4 \sum_{j=1}^L \frac{p_j^2}{1 + p_{j}(\gamma^2 - 1)} \, . \label{eq:aux1} %
\end{align}
This shows the formula for the risk given in the Proposition. To obtain the last bound, observe that, if $\gamma^2 \geq 1$, we have $1 + p_j(\gamma^2 -1) \geq 1$. If $\gamma^2 \leq 1$, since $p_j \leq 1$, we have $1 + p_j(\gamma^2 -1) \geq 1 + (\gamma^2 -1) = \gamma^2$. Thus we obtain $1 + p_j(\gamma^2 -1) \geq \min(1, \gamma^2)$. Therefore,
\begin{align*}
    \mathcal{R}(\beta^\star) &\geq \varepsilon^2 + \gamma^2 - \max(\gamma^4, \gamma^2) \sum_{j=1}^L {p_j^2} \\
    &\geq \varepsilon^2 + \gamma^2 - \max(\gamma^4, \gamma^2) \sum_{j=1}^L {p_j} \cdot  \max_{j=1, \dots, L} {p_j} \\
    &\geq \varepsilon^2 + \gamma^2 - (\gamma^4 + \gamma^2) \max_{j=1, \dots, L} {p_j} \\
    &\geq \varepsilon^2 + \gamma^2 - \gamma^2 (\gamma^2 + 1) \max_{j=1, \dots, L} {p_j} \, .
\end{align*}

When all $p_j$ are equal to $1/L$, all terms in the sum are equal, and Eq.~\eqref{eq:aux1} simplifies to 
\[
\mathcal{R}(\beta^\star) = \varepsilon^2 + \gamma^2 - L \gamma^4 \frac{\frac{1}{L^2}}{1 + \frac{1}{L}(\gamma^2 - 1)} = \varepsilon^2 + \gamma^2 - \frac{\gamma^4}{L + \gamma^2 - 1} \, .
\]

\subsection{Proof of Lemma \ref{lem:invariant_manifold}}

As a first step in the proof, we prove the next lemma, which is the key towards the invariance property we are aiming at, in that it shows that, for a point on the manifold $\cM$ (defined by $\theta = \eta = \rho = 0$), the gradient of the risk does not ``push'' the point outside of the manifold.
Its proof leverages the expression of the risk as a function of five parameters derived in the previous section

\begin{lemma}
\label{lem:vanishing-derivatives}
    At any point $(\kappa, \nu, \theta, \eta, \rho)$ such that $\theta = \eta = \rho = 0$, we have $\partial_\theta \cR^< = \partial_\eta \cR^< = \partial_\rho \cR^< = 0$. 
\end{lemma}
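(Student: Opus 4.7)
The plan is to start from the five-parameter representation of the risk derived in Appendix~\ref{proof:risk_5_parameters}, namely $\cR^<(\kappa,\nu,\theta,\eta,\rho) = \Esp[A^2]$ with
\begin{align*}
A = \gamma G_1^{v^\star} + \xi - \Big(\sqrt{\tfrac{d}{2}}\,\theta + \gamma G_1^v\Big)\erf\Big(\lambda\Big(\sqrt{\tfrac{d}{2}}\,\kappa + \gamma G_1^k\Big)\Big) - \sum_{\ell=2}^L G_\ell^v \,\erf(\lambda G_\ell^k),
\end{align*}
where $(G_1^{v^\star}, G_1^v, G_1^k)$ is jointly Gaussian with covariance matrix~\eqref{eq:risk-5-params-2}, the pairs $(G_\ell^v, G_\ell^k)$ for $\ell \geq 2$ are i.i.d.\ with the same joint marginal and independent of the $\ell = 1$ block, and $\xi$ is independent with variance $\varepsilon^2$. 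The three parameters of interest enter in two qualitatively different ways: $\theta$ appears explicitly inside $A$, while $\eta$ and $\rho$ enter only through the covariance of the triple $(G_1^{v^\star}, G_1^v, G_1^k)$. We handle the two cases separately.

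For $\partial_\theta \cR^<$, we differentiate inside the expectation to obtain $\partial_\theta \cR^< = -\sqrt{2d}\,\Esp\bigl[A\,\erf(\lambda(\sqrt{d/2}\,\kappa + \gamma G_1^k))\bigr]$. At $\theta = 0$, $A$ is the sum of four terms: $\gamma G_1^{v^\star}$, $\xi$, $-\gamma G_1^v \erf(\cdot)$, and $-\sum_{\ell \geq 2} G_\ell^v \erf(\lambda G_\ell^k)$. Imposing in addition $\eta = \rho = 0$ makes $G_1^k$ independent of $G_1^{v^\star}$ and of $G_1^v$, and decouples each pair $(G_\ell^v, G_\ell^k)$ so that $G_\ell^v \perp G_\ell^k$. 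Each of the four resulting product expectations then factorizes and contains a zero-mean Gaussian factor ($G_1^{v^\star}$, $\xi$, $G_1^v$, or $G_\ell^v$), so the total vanishes.

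For $\partial_\eta \cR^<$ and $\partial_\rho \cR^<$, we invoke the Gaussian covariance-derivative identity (a version of Price's theorem): for $X \sim \cN(0,\Sigma)$ and smooth $h$, $\partial_{\Sigma_{ij}}\Esp[h(X)] = \Esp[\partial_i \partial_j h(X)]$ whenever $i \neq j$. Applied to $h = A^2$, this yields
\[
\partial_\eta \cR^< = \Esp\bigl[\partial_{G_1^{v^\star}} \partial_{G_1^k}(A^2)\bigr], \qquad \partial_\rho \cR^< = \Esp\bigl[\partial_{G_1^v} \partial_{G_1^k}(A^2)\bigr].
\]
A direct computation of each mixed partial at $\theta = 0$ produces a handful of terms, each of the form $\erf'(\cdot)$ (possibly multiplied by an additional $\erf(\cdot)$) times one of the four zero-mean Gaussian variables listed above. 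By the same independence argument, every such term has zero expectation at $\eta = \rho = 0$, and the conclusion follows.

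The main obstacle is the $\partial_\rho$ case, in which $G_1^v$ plays a double role---as the coefficient in front of $\erf(\lambda(\sqrt{d/2}\,\kappa + \gamma G_1^k))$ inside $A$, and as the variable whose correlation with $G_1^k$ is being varied. The mixed partial $\partial_{G_1^v}\partial_{G_1^k}(A^2)$ therefore generates several cross terms, including a quadratic-in-$\erf$ contribution from the fact that both derivatives hit the same factor of $A$. Careful bookkeeping of these terms is the delicate part of the argument, but each one still factorizes at $\eta = \rho = 0$ into a product containing a zero-mean independent factor, and the lemma follows.
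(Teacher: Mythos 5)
Your approach is a valid alternative to the paper's, but it is the more computational of the two, and there is a gap in one of the intermediate formulas. The paper instead proves a \emph{symmetry} of the risk: by replacing each $G_\ell^{v^\star}, G_\ell^v$ by their negatives (and $\xi$ by $-\xi$) inside the square, and noticing that the resulting triples have the same law as the original ones with $(\eta,\rho)$ replaced by $(-\eta,-\rho)$, one obtains
\begin{equation*}
\cR^<(\kappa,\nu,\theta,\eta,\rho) = \cR^<(\kappa,\nu,-\theta,-\eta,-\rho)\,.
\end{equation*}
Differentiating this identity in $\theta$, $\eta$, or $\rho$ at the origin gives the lemma at once, without any need to exchange differentiation and expectation or to track individual cross terms. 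This is shorter and cleaner than what you do.

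Your proof has one concrete issue. In the Price's-theorem step you write $\partial_\rho \cR^< = \Esp\bigl[\partial_{G_1^v}\partial_{G_1^k}(A^2)\bigr]$, but $\rho$ is the covariance of \emph{every} pair $(G_\ell^v,G_\ell^k)$ for $\ell=1,\dots,L$, not only the first one (see Eq.~\eqref{eq:risk-5-params-2}). The chain rule through the block-diagonal covariance therefore gives
\begin{equation*}
\partial_\rho \cR^< = \sum_{\ell=1}^L \Esp\bigl[\partial_{G_\ell^v}\partial_{G_\ell^k}(A^2)\bigr]\,,
\end{equation*}
and the $\ell\ge 2$ terms are not trivially zero because both $G_\ell^v$ and $G_\ell^k$ appear in $A$. (By contrast, your formula for $\partial_\eta$ is fine, since $G_\ell^{v^\star}$ does not enter $A$ for $\ell\geq 2$, so only the $\ell=1$ block survives.) Fortunately each of the missing $\ell\ge 2$ terms still vanishes at $\eta=\rho=0$ by the same independence-and-zero-mean factorization you invoke, so the conclusion holds; you would just need to add the summation and check those terms explicitly. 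You should also state, at least briefly, why Price's theorem applies here (smoothness of $A^2$ plus $\erf$ bounded, giving at most quadratic growth, ensures the required integrability). With these two repairs the argument is correct, but you may want to compare with the paper's evenness argument, which sidesteps all of this bookkeeping.
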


\begin{proof}
    We use Eq.~\eqref{eq:risk-5-params-1}--\eqref{eq:risk-5-params-2} and change signs in the square function: 
    \begin{align*}
    &\cR^<(\kappa, \nu, \theta, \eta, \rho) \\
    &\quad= \Esp\Big[\Big(\gamma G_1^{v^\star} + \xi - \Big(\sqrt{\frac{d}{2}}\theta + \gamma G_1^v\Big)\erf\Big(\lambda\Big(\sqrt{\frac{d}{2}}\kappa + \gamma G_1^k\Big)\Big)-\sum_{\ell=2}^L G_{\ell}^v \erf(\lambda G_{\ell}^k)\Big)^2\Big] \\
    &\quad= \Esp\Big[\Big(\gamma (-G_1^{v^\star}) - \xi - \Big(\sqrt{\frac{d}{2}}(-\theta) + \gamma (-G_1^v)\Big)\erf\Big(\lambda\Big(\sqrt{\frac{d}{2}}\kappa + \gamma G_1^k\Big)\Big)\\
    &\qquad-\sum_{\ell=2}^L (-G_{\ell}^v) \erf(\lambda G_{\ell}^k)\Big)^2\Big] \, ,
\end{align*}
where
\begin{align*}
    \begin{pmatrix}
        G_1^{v^\star} \\ G_1^v \\ G_1^k
    \end{pmatrix}
    , \dots, 
    \begin{pmatrix}
        G_L^{v^\star} \\ G_L^v \\ G_L^k
    \end{pmatrix}
    \underset{\text{i.i.d.}}{\sim} \cN\left(0, \begin{pmatrix}
        1 & \nu & \eta \\
        \nu & 1 & \rho \\
        \eta & \rho & 1 
    \end{pmatrix}\right) \, , \qquad \xi \sim \cN(0, \varepsilon^2) \, .
\end{align*}
Thus 
\begin{align*}
    \begin{pmatrix}
        -G_1^{v^\star} \\ -G_1^v \\ G_1^k
    \end{pmatrix}
    , \dots, 
    \begin{pmatrix}
        -G_L^{v^\star} \\ -G_L^v \\ G_L^k
    \end{pmatrix}
    \underset{\text{i.i.d.}}{\sim} \cN\left(0, \begin{pmatrix}
        1 & \nu & -\eta \\
        \nu & 1 & -\rho \\
        -\eta & -\rho & 1 
    \end{pmatrix}\right) \, , \qquad -\xi \sim \cN(0, \varepsilon^2) \, .
    \end{align*}
    As a consequence,
    \begin{align*}
        \cR^<(\kappa, \nu, \theta, \eta, \rho) = \cR^<(\kappa, \nu, -\theta, -\eta, -\rho) \, .
    \end{align*}
    Taking the partial derivative in $\theta$, we are led to
    \begin{equation*}
        \partial_\theta \cR^<(\kappa, \nu, \theta, \eta, \rho) = - \partial_\theta \cR^<(\kappa, \nu, -\theta, -\eta, -\rho) \, .
    \end{equation*}
    At a point such that $\theta = \eta = \rho = 0$, this gives $\partial_\theta \cR^<(\kappa, \nu, 0,0,0) = - \partial_\theta \cR^<(\kappa, \nu, 0,0,0)$
    and thus $\partial_\theta \cR^<(\kappa, \nu, 0,0,0) = 0$. The proof for the other two derivatives $\partial_\eta \cR$, $\partial_\rho \cR$ is identical. 
\end{proof}

We now complete the proof of Lemma \ref{lem:invariant_manifold}. By the chain rule for total derivatives applied to $R(k,v) = \cR^<(\kappa, \nu, \theta, \eta, \rho)$, and then by Lemma~\ref{lem:vanishing-derivatives}, on the manifold $\cM$, we have
\begin{align}   \label{eq:tech-2}
    \nabla_k \mathcal{R}  &= (\partial_\kappa \mathcal{R}^< ) k^\star +  (\partial_\eta \mathcal{R}^< ) v^\star + (\partial_\rho \mathcal{R}^< ) v = (\partial_\kappa \mathcal{R}^< ) k^\star \, ,
\end{align}
and, similarly,
\begin{align}   \label{eq:tech-3}
    \nabla_v \mathcal{R} &= (\partial_\nu \mathcal{R}^<) v^\star + (\partial_\theta \mathcal{R}^<) k^\star + (\partial_\rho \mathcal{R}) k = (\partial_\nu \mathcal{R}^<) v^\star \, .
\end{align}
Recall the formulas for the PGD updates
    \begin{align*}
         k_{t+1} &= \mathrm{Proj}_{\mathbb{S}^{d-1}}(k_t - \alpha (I-k_t k_t^\top) \nabla_k \mathcal{R}(k_t,v_t)) = \frac{k_t - \alpha (I-k_t k_t^\top) \nabla_k \mathcal{R}(k_t,v_t)}{\left\| k_t - \alpha (I-k_t k_t^\top) \nabla_k \mathcal{R}(k_t,v_t) \right\|_2}, \\
        v_{t+1} &= \mathrm{Proj}_{\mathbb{S}^{d-1}}(v_t - \alpha (I-v_t v_t^\top) \nabla_v \mathcal{R}(k_t,v_t)) = \frac{v_t - \alpha (I-v_t v_t^\top) \nabla_v \mathcal{R}(k_t,v_t)}{\left\|v_t - \alpha (I-v_t v_t^\top) \nabla_v \mathcal{R}(k_t,v_t) \right\|_2} \, .
\end{align*}
Let $c_k = \left\| k_t - \alpha (I-k_t k_t^\top) \nabla_k \mathcal{R}(k_t,v_t) \right\|_2$ and $c_v = \left\|v_t - \alpha (I-v_t v_t^\top) \nabla_v \mathcal{R}(k_t,v_t) \right\|_2$. 
Then, if $(k_t, v_t) \in \mathcal{M}$,
\begin{align*}
(v^\star)^\top k_{t+1} &= \frac{(v^\star)^\top k_t - \alpha  (v^\star)^\top (I-k_t k_t^\top) (\partial_\kappa \mathcal{R}^<(\kappa_t,\nu_t))  k^\star}{c_k} = 0, \\
    (k^\star)^\top v_{t+1} &= \frac{(k^\star)^\top v_t - \alpha  (k^\star)^\top (I-v_t v_t^\top) (\partial_\nu \mathcal{R}^<(\kappa_t,\nu_t)) v^\star}{c_v} = 0,
    \end{align*}
    and
    \begin{align*}
    v_{t+1}^\top k_{t+1} &= \frac{v_t^\top k_t - \alpha (\partial_\nu \mathcal{R}^<) ((I-v_tv_t^\top) v^\star)^\top k_t - \alpha (\partial_\kappa \mathcal{R}^< ) ((I-k_tk_t^\top)k^\star)^\top v_t}{c_v c_k}\\
    &\qquad + \frac{\alpha^2 (\partial_\kappa \mathcal{R}^<) (\partial_\nu \mathcal{R}^<) ((I-k_tk_t^\top)k^\star)^\top (I-v_tv_t^\top)v^\star}{c_v c_k} = 0 \, ,
\end{align*}
where we have omitted the dependence of $(\partial_\kappa \mathcal{R}^<)$ and $(\partial_\nu \mathcal{R}^<)$ in $(\kappa_t,\nu_t)$ in the last expression for the ease of readability.
Note that the last term is equal to zero since
\[
((I-k_tk_t^\top)k^\star)^\top (I-v_tv_t^\top)v^\star  = 
(k^\star -\kappa_t k_t)^\top (v^\star -\nu_t v_t) = 0 \, .
\]
This shows that $(k_{t+1}, v_{t+1}) \in \mathcal{M}$.

\subsection{Proof of Lemma \ref{lemma:2d-reformulation}}

By definition of the PGD iterates and by \eqref{eq:tech-2}--\eqref{eq:tech-3}, one has
\begin{align*}
    \kappa_{t+1} &= k_{t+1}^\top k^\star  = \frac{\kappa_t - \alpha \partial_\kappa \mathcal{R}^<(\kappa_t,\nu_t) (k^\star)^\top (I-k_t k_t^\top) k^\star}{\sqrt{1+\alpha^2 (\partial_\kappa \mathcal{R}^<)^2 \| (I-k_t k_t^\top) k^\star \|_2^2}}= \frac{\kappa_t - \alpha (\partial_\kappa \mathcal{R}^<)(1-\kappa_t^2)}{\sqrt{1+\alpha^2(\partial_\kappa \mathcal{R}^<)^2 (1-\kappa_t^2)}},\\
    \nu_{t+1} &= v_{t+1}^\top v^\star   = \frac{\nu_t - \alpha \partial_\nu \mathcal{R}^<(\kappa_t,\nu_t) (v^\star)^\top (I-v_t v_t^\top) v^\star}{\sqrt{1+\alpha^2 (\partial_\nu \mathcal{R}^<)^2 \| (I-v_tv_t^\top) v^\star \|_2^2}}= \frac{\nu_t - \alpha (\partial_\nu \mathcal{R}^<)(1-\nu_t^2)}{\sqrt{1+\alpha^2(\partial_\nu \mathcal{R}^<)^2 (1-\nu_t^2)}},
\end{align*}
where we have used the Pythagorean theorem and the idempotent property of projection matrices for the denominator.

\subsection{Proof of Proposition \ref{prop:analysis-pgd}}

In this proof, $C$ denotes a constant that does not depend on the step $t$ nor on the step size $\alpha$, and which may vary from line to line.
First note that the risk $\mathcal{R}^<$ is $C^\infty$ on the compact set $[-1, 1]^2$. In particular, it is a $\Lambda$-smooth function for some $\Lambda>0$, in the sense that its gradient is $\Lambda$-Lipschitz continuous. Thus
\begin{align*}
    \mathcal{R}^< (\kappa_{t+1}, \nu_{t+1})  &\leq \mathcal{R}^< (\kappa_{t}, \nu_{t}) + (\nabla \mathcal{R}^<(\kappa_{t}, \nu_{t}))^\top 
    \begin{pmatrix}
        \kappa_{t+1} - \kappa_t \\
        \nu_{t+1} - \nu_t
     \end{pmatrix} + \frac{\Lambda}{2} \left\|
     \begin{pmatrix}
        \kappa_{t+1} - \kappa_t \\
        \nu_{t+1} - \nu_t
     \end{pmatrix}\right\|_2^2 \, ,
\end{align*}
i.e.,
\begin{align}
    \notag
    \mathcal{R}^< &(\kappa_{t+1}, \nu_{t+1}) - \mathcal{R}^< (\kappa_{t}, \nu_{t}) \\ 
    &\leq (\partial_\kappa \mathcal{R}^<) (\kappa_{t+1}-\kappa_t) + (\partial_\nu\mathcal{R}^<) (\nu_{t+1}-\nu_t) + \frac{\Lambda}{2} \left[(\kappa_{t+1}-\kappa_t)^2   +  (\nu_{t+1}-\nu_t)^2\right]. \label{eq:risk_inequality_aux}
\end{align}
Our goal in the following computations is to derive an inequality of the form
\begin{align*}
    \mathcal{R}^< (\kappa_{t+1}, \nu_{t+1}) - \mathcal{R}^< (\kappa_{t}, \nu_{t}) 
    &\leq - \alpha (\partial_\kappa \mathcal{R}^<)^2 (1-\kappa_t^2) - \alpha (\partial_\nu \mathcal{R}^<)^2 (1-\nu_t^2) \\
    &\qquad + C  \alpha^2 (\partial_\kappa \mathcal{R}^<)^2 (1-\kappa_t^2) + C \alpha^2 (\partial_\nu \mathcal{R}^<)^2 (1-\nu_t^2) \, ,
\end{align*}
which shall give us a descent lemma for $\alpha$ small enough.
To this aim, observe that, by definition of the iterates $(\kappa_t,\nu_t)$ given by \eqref{eq:dynamical_system_kappa_nu-1}--\eqref{eq:dynamical_system_kappa_nu-2}, one has
\begin{align}
    \kappa_{t+1}-\kappa_t &= \left[ \frac{1}{\sqrt{1+\alpha^2(\partial_\kappa \mathcal{R}^<)^2 (1-\kappa_t^2)}} -1 \right] \kappa_t -\frac{\alpha (\partial_\kappa \mathcal{R}^<) (1-\kappa_t^2)}{\sqrt{1+\alpha^2(\partial_\kappa \mathcal{R}^<)^2 (1-\kappa_t^2)}} \label{eq:formula-kappa-t+1-kappa-t} \\
    &= - \alpha (\partial_\kappa \mathcal{R}^<) (1-\kappa_t^2) \nonumber \\
    &\qquad + \left[ \frac{1}{\sqrt{1+\alpha^2(\partial_\kappa \mathcal{R}^<)^2 (1-\kappa_t^2)}} -1 \right] (\kappa_t - \alpha (\partial_\kappa \mathcal{R}^<) (1-\kappa_t^2)) \nonumber \, . %
\end{align}
As a consequence,
\begin{align}
    |\kappa_{t+1}-\kappa_t + \alpha (\partial_\kappa \mathcal{R}^<) (1-\kappa_t^2)| &\leq \Big|\frac{1}{\sqrt{1+\alpha^2(\partial_\kappa \mathcal{R}^<)^2 (1-\kappa_t^2)}} -1 \Big| |\kappa_t - \alpha (\partial_\kappa \mathcal{R}^<) (1-\kappa_t^2)| \nonumber \\
    &\leq \alpha^2 (\partial_\kappa \mathcal{R}^<)^2 (1-\kappa_t^2) |\kappa_t - \alpha (\partial_\kappa \mathcal{R}^<) (1-\kappa_t^2)| \nonumber \\
    &\leq C \alpha^2 (\partial_\kappa \mathcal{R}^<)^2 (1-\kappa_t^2) \, , \label{eq:tech-4}
\end{align}
where the second inequality holds by Lemma \ref{lemma:simple-bounds} and the last bound holds since the function $(\kappa, \nu) \mapsto |\kappa - \alpha (\partial_\kappa \mathcal{R}^<(\kappa, \nu)) (1-\kappa^2)|$ is uniformly bounded for all $\alpha \leq 1$.
This bound has two implications. First, 
\begin{align}
(\partial_\kappa \mathcal{R}^<) (\kappa_{t+1}-\kappa_t) + \alpha (\partial_\kappa \mathcal{R}^<)^2 (1-\kappa_t^2)
&= (\partial_\kappa \mathcal{R}^<) ((\kappa_{t+1}-\kappa_t) + \alpha (\partial_\kappa \mathcal{R}^<) (1-\kappa_t^2)) \nonumber \\
&\leq |\partial_\kappa \mathcal{R}^<| |\kappa_{t+1}-\kappa_t + \alpha (\partial_\kappa \mathcal{R}^<) (1-\kappa_t^2)|   \nonumber \\
&\leq C \alpha^2 (\partial_\kappa \mathcal{R}^<)^2 (1-\kappa_t^2) \, , \label{eq:estimate-1}
\end{align}
where we use the fact that $|\partial_\kappa \mathcal{R}^<|$ is bounded, and the bound \eqref{eq:tech-4}.
Second, since the square function is Lipschitz on compact sets, we have
\begin{align*}
|(\kappa_{t+1}-\kappa_t)^2 - (\alpha (\partial_\kappa \mathcal{R}^<) (1-\kappa_t^2))^2| \leq C  \alpha^2 (\partial_\kappa \mathcal{R}^<)^2 (1-\kappa_t^2) \, .
\end{align*}
Thus
\begin{align}   
(\kappa_{t+1}-\kappa_t)^2 &\leq \alpha^2 (\partial_\kappa \mathcal{R}^<)^2 (1-\kappa_t^2)^2 + C  \alpha^2 (\partial_\kappa \mathcal{R}^<)^2 (1-\kappa_t^2) \nonumber \\
&\leq C  \alpha^2 (\partial_\kappa \mathcal{R}^<)^2 (1-\kappa_t^2) \, . \label{eq:estimate-2}
\end{align}
We also obtain analogous bounds to \eqref{eq:estimate-1}–\eqref{eq:estimate-2} for $\nu$, namely
\begin{align} \label{eq:estimate-3}
    (\partial_\nu \mathcal{R}^<) (\nu_{t+1}-\nu_t) + \alpha (\partial_\nu \mathcal{R}^<) (1-\nu_t^2) \leq C \alpha^2 (\partial_\nu \mathcal{R}^<)^2 (1-\nu_t^2) \, ,
\end{align}
and
\begin{equation}    \label{eq:estimate-4}
    (\nu_{t+1}-\nu_t)^2 \leq C  \alpha^2 (\partial_\nu \mathcal{R}^<)^2 (1-\nu_t^2) \, .
\end{equation}
Plugging the bounds \eqref{eq:estimate-1}--\eqref{eq:estimate-4} into Eq.~\eqref{eq:risk_inequality_aux}, we obtain the desired inequality
\begin{align*}
    \mathcal{R}^< (\kappa_{t+1}, \nu_{t+1}) - \mathcal{R}^< (\kappa_{t}, \nu_{t}) 
    &\leq - \alpha (\partial_\kappa \mathcal{R}^<)^2 (1-\kappa_t^2) - \alpha (\partial_\nu \mathcal{R}^<)^2 (1-\nu_t^2) \\
    &\qquad + C  \alpha^2 (\partial_\kappa \mathcal{R}^<)^2 (1-\kappa_t^2) + C \alpha^2 (\partial_\nu \mathcal{R}^<)^2 (1-\nu_t^2) \, .
\end{align*}
By choosing the step size $\alpha \leq \frac{1}{2C}$, this ensures that
\begin{align*}
    \mathcal{R}^< (\kappa_{t+1}, \nu_{t+1}) - \mathcal{R}^< (\kappa_{t}, \nu_{t}) 
    &\leq - \frac{\alpha}{2} (\partial_\kappa \mathcal{R}^<)^2 (1-\kappa_t^2) - \frac{\alpha}{2} (\partial_\nu \mathcal{R}^<)^2 (1-\nu_t^2).
\end{align*}
This shows that the risk is decreasing along the PGD iterates. 
Next, introducing $\cR_{\min}^<=\min_{(\kappa,\nu)\in [0,1]^2} \cR^<(\kappa,\nu)$ and using a telescopic sum, we have,
for all $T\geqslant 0$,
\begin{align*}
    \mathcal{R}^< (\kappa_{0}, \nu_{0}) - \cR^<_{\min} 
    &\geq \mathcal{R}^< (\kappa_{0}, \nu_{0}) - \mathcal{R}^< (\kappa_T, \nu_T) \\
    &\geq \frac{\alpha}{2} \sum_{t=0}^{T-1} \left[(\partial_\kappa \mathcal{R}^<)^2 (1-\kappa_t^2) +(\partial_\nu \mathcal{R}^<)^2 (1-\nu_t^2)\right].
\end{align*}
Since the left-hand side is finite, and the terms of the sum are nonnegative, we conclude that the series converges as $T \to \infty$. In particular, the generic term $(\partial_\kappa \mathcal{R}^<)^2 (1-\kappa_t^2) +(\partial_\nu \mathcal{R}^<)^2 (1-\nu_t^2)$ of the series converges to 0 as $t \to \infty$. Therefore, the accumulation points $(\kappa_\infty, \nu_\infty)$ satisfy
\[
\left\{
\begin{array}{l}
\partial_\kappa \mathcal{R}^<(\kappa_\infty, \nu_\infty) =0 \quad \text{or} \quad \kappa_\infty^2=1 \\
\partial_\nu \mathcal{R}^<(\kappa_\infty, \nu_\infty) =0 \quad \text{or} \quad \nu_\infty^2=1.
\end{array}
\right.
\]
Inspecting identity \eqref{eq:formula-kappa-t+1-kappa-t}, we observe that the convergence of the general term also implies $\kappa_{t+1} - \kappa_t \to 0$. We obtain similarly that $\nu_{t+1} - \nu_t \to 0$.

\subsection{Proof of Proposition \ref{prop:critical-points}}

Recall that the risk in terms of $(\kappa, \nu)$ is given by
\[
{\cR^<(\kappa, \nu) }
 = \gamma^2- 2 \gamma^2 \nu \, \erf \left(\frac{\lambda \sqrt{d/2} \kappa}{\sqrt{1+2\lambda^2 \gamma^2}} \right)  +  \gamma^2 \zeta\Big(\lambda \sqrt{\frac{d}{2}} \, \kappa,\lambda^2 \gamma^2\Big)  +   (\length -1)  \zeta(0, \lambda^2)  + \varepsilon^2  \, .
\]
Then the gradients of $\cR^<$ are given by
\begin{align*}
\partial_\kappa \mathcal{R}^<&(\kappa, \nu) \\
   & = -2\gamma^2 \lambda \sqrt{\frac{d}{2(1+2\lambda^2\gamma^2)}} \erf' \bigg(\frac{\lambda \sqrt{d/2} \kappa}{\sqrt{1+2\lambda^2 \gamma^2}} \bigg)  
    \bigg(
    \nu - 
    \erf\bigg( 
\frac{\lambda \sqrt{d/2} \kappa}{\sqrt{(1+2\lambda^2\gamma^2)(1+ 4\lambda^2\gamma^2)}}
\bigg)
    \bigg) 
\end{align*}
and
\[
   \partial_\nu \mathcal{R}^<(\kappa, \nu) = -2 \gamma^2  \erf \bigg( \frac{\lambda \sqrt{d/2} \kappa}{\sqrt{1+2\lambda^2 \gamma^2}} \bigg) \, . \\
\]
Therefore, the solutions of the system \eqref{eq:formula-accumulation-points} satisfy
\[
\begin{cases}
   -\nu + \erf(c_1 \kappa) = 0 \textnormal{ or } \kappa = \pm 1 \\
   \kappa = 0 \textnormal{ or } \nu = \pm 1 \, , \\
\end{cases}
\]
with $c_1=\frac{\lambda}{\sqrt{4\lambda^2\gamma^2+1}} \sqrt{\frac{d}{2(1+2\lambda^2\gamma^2)}}$.
The solutions of this system are
\[
(\kappa, \nu) = (0,0) \text{ or } (\kappa, \nu) = (\pm 1,\pm 1) \, .
\]

\subsection{Proof of Proposition \ref{prop:minima_maxima_saddle}}

Since $\cR^<$ is a smooth function, the extrema of this function on $[-1,1]^2$ are either critical points (admitting null derivatives) or points on the boundary of the square $[-1,1]^2$. Starting with critical points, the only critical point is $(0, 0)$, and it is a saddle point. Indeed, the Hessian of $\cR^<$ at (0,0) is
\[
    H_{\cR^<}(0,0)
    = - \frac{4}{\sqrt{\pi}} \gamma^2 \lambda \sqrt{\frac{d}{2(1+2\lambda^2\gamma^2)}} \underbrace{\begin{pmatrix}
        c & 1 \\
        1 & 0
    \end{pmatrix}}_{:=M}\,
\]
where $c=-\frac{2\lambda}{\sqrt{\pi(4\lambda^2\gamma^2+1)}}\sqrt{\frac{d}{2(1+2\lambda^2\gamma^2)}}<0$. Then, as $\mathrm{det}(M)=-1$, the two eigenvalues of  $H_{\cR^<}(0,0)$ have opposite signs, $(0,0)$ is thus a saddle point. The extrema of $\cR^<$ must therefore be on the boundary of the square, which we examine next.

For any $(\kappa, \nu) \in (-1, 1)^2$, one has, by inspecting the signs of the gradients given in the proof of Proposition \ref{prop:critical-points},
\[
\cR^<(1,1) < \cR^<(\kappa, 1) < \cR^<(-1, 1) \,  \quad \textnormal{and}  \quad \cR^<(1,1) < \cR^<(1, \nu) < \cR^<(1, -1) \, .
\]
This shows that the minimum of $\cR^<$ on $\{(\kappa, 1), \kappa\in [-1,1]\} \cup \{(1, \nu), \nu\in[-1,1]\}$ is reached at $(1, 1)$, and the maximum is reached both at $(1,-1)$ and $(-1,1)$, since $\cR^<$ is even. Using again evenness of $\cR^<$, we conclude that the extrema of $\cR^<$ on the whole boundary of the square, and thus on the whole square, are the minimizers $(1, 1)$ and $(-1, -1)$, and the maximizers $(1,-1)$ and $(-1,1)$.

\subsection{Proof of Proposition \ref{prop:properties-g}}

We prove the statements of the proposition one by one.

\paragraph{The mapping $g$ is a local diffeomorphism around $(0,0)$, whose Jacobian matrix has one eigenvalue in $(0, 1)$ and one eigenvalue in $(1, \infty)$.} Consider the Taylor expansion of the first component $g(\kappa,\nu)_1$  of $g(\kappa,\nu)$. Since $\partial_\kappa \mathcal{R}^<(0, 0) = 0$, and $\mathcal{R}^<$ is smooth, letting $x = (\kappa, \nu)$, we have $(\partial_\kappa \mathcal{R}^<(\kappa,\nu))^2 = O(\|x\|^2)$. Thus,
\begin{align*}
g(\kappa,\nu)_1 &=
\frac{\kappa - \alpha (\partial_\kappa \mathcal{R}^<(\kappa,\nu))(1-\kappa^2)}{\sqrt{1+\alpha^2(\partial_\kappa \mathcal{R}^<(\kappa,\nu))^2 (1-\kappa^2)}} \\
&=
\frac{\kappa - \alpha (\partial_\kappa \mathcal{R}^<(\kappa,\nu))(1-\kappa^2)}{\sqrt{1+O(\|x\|^2) }} \\
&= (\kappa - \alpha (\partial_\kappa \mathcal{R}^<(\kappa,\nu))(1-\kappa^2)) \left( 1+O(\|x\|^2)\right) \\
&= \kappa - \alpha \partial_\kappa \mathcal{R}^<(\kappa,\nu) + O(\|x\|^2) \, .
\end{align*}
Proceeding similarly with the second component of $g$, we obtain that the Jacobian of $g$ at $(0,0)$ is given by
\[
J_g(0,0) = I_2 - \alpha H_{\cR^<}(0,0) = I_2 + \alpha \cdot \frac{4}{\sqrt{\pi}} \gamma^2 \lambda \sqrt{\frac{d}{2(1+2\lambda^2\gamma^2)}} \underbrace{\begin{pmatrix}
        c & 1 \\
        1 & 0
    \end{pmatrix}}_{=:M} \, ,
\]
where $c=-\frac{2\lambda}{\sqrt{\pi(4\lambda^2\gamma^2+1)}}\sqrt{\frac{d}{2(1+2\lambda^2\gamma^2)}}<0$. Since $\mathrm{det}(M)=-1$, one can choose $\alpha$ small enough so that one eigenvalue of $J_g(0, 0)$ is strictly between $0$ and $1$ and the other one is strictly larger than~$1$. Therefore, $J_g(0, 0)$ is invertible, showing that $g$ is a local diffeomorphism around $(0,0)$.

\paragraph{The mapping $g$ is differentiable on $[-1,1]^2$, and its Jacobian is not degenerate.}
The mapping $g$ is clearly differentiable as a composition of differentiable function. The more delicate part is to show that its Jacobian cannot be degenerate. 
To show this statement, observe first that, for $x \in [-1, 1]^2$, we may write $g(x) = x + \alpha h(x)$, where the first component of $h$ is given by
\begin{align*}
h(\kappa, \nu)_1 &= \frac{1}{\alpha} (g(\kappa, \nu)_1 - \kappa)     \\
&= \frac{1}{\alpha} \Big( \frac{\kappa - \alpha (\partial_\kappa \mathcal{R}^<(\kappa,\nu))(1-\kappa^2)}{\sqrt{1+\alpha^2(\partial_\kappa \mathcal{R}^<(\kappa,\nu))^2 (1-\kappa^2)}} - \kappa\Big)  \\
&= \underbrace{\frac{\kappa}{\alpha} \Big( \frac{1}{\sqrt{1+\alpha^2(\partial_\kappa \mathcal{R}^<(\kappa,\nu))^2 (1-\kappa^2)}} - 1\Big)}_{=:f^{(1)}_\alpha(\kappa, \nu)} - \underbrace{\frac{(\partial_\kappa \mathcal{R}^<(\kappa,\nu))(1-\kappa^2)}{\sqrt{1+\alpha^2(\partial_\kappa \mathcal{R}^<(\kappa,\nu))^2 (1-\kappa^2)}}}_{=:f^{(2)}_\alpha(\kappa, \nu)} \, .
\end{align*}
Let us prove that the gradient of $h(\kappa, \nu)_1$ is bounded uniformly over $\alpha \leq 1$. The uniform boundedness is clear for the gradient of $f^{(2)}_\alpha$, which writes as a composition of functions with uniformly bounded gradients for $\alpha \leq 1$.
Moving on to $f^{(1)}_\alpha$ and letting 
\begin{align*}
g: \left\lbrace
  \begin{array}{ll}
    [-1, 1] \times [0, B] &\to \R \\
    (a, b) &\mapsto \frac{a}{\alpha} \big( \frac{1}{\sqrt{1+\alpha^2 b}} - 1\big) \\
  \end{array}
  \right. \, , \quad
  B = \sup_{(\kappa, \nu) \in [-1, 1]^2} (\partial_\kappa \mathcal{R}^<(\kappa,\nu))^2 (1-\kappa^2) \, ,
\end{align*}
we observe that $f^{(1)}_\alpha$ is the composition of $g$ with a smooth function independent of $\alpha$. In particular, it suffices to show the uniform boundedness of $\nabla g$ to deduce the one of $\nabla f^{(1)}_\alpha$. 
We further have, by Lemma \ref{lemma:simple-bounds}, and for $\alpha \leq 1$,
\[
\big|\partial_a g(a, b)\big| = \frac{1}{\alpha} \Big| \frac{1}{\sqrt{1+\alpha^2 b}} - 1\Big| \leq \alpha b \leq B
\]
and
\[
\big|\partial_b g(a, b)\big| = \Big| - \frac{\alpha a}{2 (1+\alpha^2 b)^{3/2}} \Big| \leq \frac{\alpha}{2} \leq \frac{1}{2} \, .
\]
Therefore, the gradient of $h(\kappa, \nu)_1$ is bounded uniformly over $\alpha \leq 1$. Proceeding similarly with the gradient of $h(\kappa, \nu)_2$, we obtain that the Jacobian of $h(\kappa, \nu)$ is uniformly bounded over $\alpha \leq 1$. Recall now that $J_g(\kappa, \nu) = I_2 + \alpha J_h(\kappa, \nu)$. Therefore, taking $\alpha$ small enough, we obtain that the eigenvalues of $J_g$ have to be bounded away from zero.
 
\paragraph{The mapping $g$ is injective.}
The computation above shows that $h$ is $\beta$-Lipschitz continuous with $\beta$ independent of $\alpha$ (for $\alpha$ small enough). 
In particular we can choose $\alpha$ such that $\alpha<1/\beta$. Now,
 let $x \neq  y \in [-1, 1]^2$ be such that $g(x) = g(y)$. Then
    \[
    \|x-y\| \leq \alpha \|h(x) - h(y) \| \leq \alpha \beta \|x-y\| < \|x-y\| \, .
    \]
    This is a contradiction, showing that $g$ is injective.

\subsection{Proof of Proposition \ref{prop:no-conv-to-saddle}}

Recall that $(1, -1)$ and $(-1, 1)$ are maxima of the risk $\mathcal R^<$ on $[-1, 1]^2$ by Proposition \ref{prop:minima_maxima_saddle}, and that the value of the risk decreases along the iterates of PGD by Proposition \ref{prop:analysis-pgd}. Thus the only possible way to converge to these points is to start the dynamics from them. 

The case of the point $(0, 0)$ is more delicate. We apply the Center-Stable Manifold theorem \citep[][Theorem III.7]{shub1987global} to $g$, which is a local diffeomorphism  around $(0, 0)$ by Proposition \ref{prop:properties-g}. This guarantees the existence of a local center-stable manifold $W_{\text{loc}}^{\text{cs}}$, which verifies the following properties. First, its codimension is equal to the number of eigenvalues of $J_g(0, 0)$ of magnitude larger than $1$, that is, $1$, by Proposition \ref{prop:properties-g}. Hence it has Lebesgue measure zero. Second, there exists a neighborhood $B$ of $0$ such that $\bigcap_{t=0}^\infty g^{-t}(B) \subset W_{\text{loc}}^{\text{cs}}$. Then, let $W^s$ be the set of all $x$ which converge to $(0, 0)$ under the gradient map $g$, and take $x \in W^s$. Then there exists a $T$ such that $g^t(x) \in B$ for all $t \geq T$. This means that $g^T(x) \in \bigcap_{s=0}^\infty g^{-s}(B)$, and thus $g^T(x) \in W_{\text{loc}}^{\text{cs}}$. So, $x \in g^{-T}(W_{\text{loc}}^{\text{cs}})$. We have just shown that
\[
W^s \subset \bigcup_{T \geq 0} g^{-T}(W_{\text{loc}}^{\text{cs}}) \, .
\]
Finally, we prove that the pre-image of sets of measure zero by $g^T$ has measure zero for any $T \geq 0$. This shall conclude the proof of the result since countable unions of sets of measure zero have measure zero. To show this, note that $g$ is injective by Proposition \ref{prop:properties-g}, and therefore $g^T$ is injective too. This allows to define an inverse $g^{-T}$ of $g^T$ defined on the image of $g^T$, and the pre-image by $g^T$ of $W_{\text{loc}}^{\text{cs}}$ is exactly the image by $g^{-T}$ of $W_{\text{loc}}^{\text{cs}}$ (intersected with the domain of definition of $g^{-T}$). Furthermore, by Proposition \ref{prop:properties-g}, the Jacobian of $g^T$ is invertible. This guarantees that $g^{-T}$ is differentiable by the inverse function theorem. The conclusion follows by recalling that differentiable functions map sets of measure zero to sets of measure zero.

\section{Expression of the risk beyond the invariant manifold}
\label{proof:risk_on_sphere}

In this appendix, we provide an expression of the risk $\cR$ beyond the manifold $\cM$ that extends the one provided in Lemma \ref{lem:risk_on_M}. This result is not needed to prove Theorem \ref{thm:main}, and its proof is more involved that the one of Lemma \ref{lem:risk_on_M}. However, we provide it since it might be relevant to follow-up works that would study the dynamics if not initialized on the invariant manifold~$\cM$. It is also useful for the numerical simulations (see Appendix \ref{app:experimental-details}).

\begin{proposition} \label{prop:full-risk-expression}
We have the closed-form expression
\begin{align*} 
&\mathcal{R}_\lambda^<(\kappa, \nu, \theta, \eta, \rho) =\varepsilon^2 +\gamma^2 \\
& - 2 \gamma^2 \nu \erf \bigg(\frac{\lambda \sqrt{d/2} \kappa}{\sqrt{1+2\lambda^2 \gamma^2}} \bigg) -2 \lambda \gamma^2 \sqrt{\frac{d}{2}}  \eta \theta  \frac{1}{\sqrt{1+2\lambda^2\gamma^2}}\erf'\bigg( \frac{\lambda \sqrt{d/2} \kappa}{\sqrt{1+2\lambda^2 \gamma^2}}\bigg) \\
 & - \frac{2\lambda^2 \gamma^4 \eta \rho}{1+2\lambda^2\gamma^2}\erf''\bigg( \frac{\lambda \sqrt{d/2} \kappa}{\sqrt{1+2\lambda^2 \gamma^2}}\bigg)  + (\frac{d}{2} \theta^2 + \gamma^2) \zeta\Big(\lambda\sqrt{\frac{d}{2}} \, \kappa,\lambda^2\gamma^2\Big)  \\
 & + \sqrt{\frac{d}{2}} \left(\theta \rho - \frac{\lambda^2 \gamma^2 \rho^2 \kappa}{1 + 2 \lambda^2 \gamma^2} \right)  \frac{4 \lambda \gamma^2}{\sqrt{1+2\lambda^2\gamma^2}}    \erf\bigg( \frac{\lambda \sqrt{d/2} \, \kappa}{\sqrt{(1+4\lambda^2\gamma^2)(1+2\lambda^2\gamma^2)}}\bigg)   \erf'\bigg( \frac{\lambda \sqrt{d/2} \kappa}{\sqrt{1+2\lambda^2 \gamma^2}}\bigg) \\
& +  \frac{4 \lambda^2 \gamma^4 \rho^2}{\sqrt{\pi} \sqrt{1+4\lambda^2\gamma^2} (1+2\lambda^2\gamma^2)} \erf'\bigg(-\frac{\lambda\sqrt{d}\kappa}{\sqrt{1+4\lambda^2\gamma^2}} \bigg) \\
 &+ (\length -1) \left[\zeta(0, \lambda^2) + \frac{8 \lambda^2}{\pi\sqrt{1+4\lambda^2} (1 + 2 \lambda^2)}  \rho^2 \right] + \frac{4 \lambda^2}{(1+2\lambda^2)\pi} (\length -1)(\length -2) \rho^2\\
 &+  \frac{4\lambda (\length -1) \rho}{\sqrt{(1+2\lambda^2)\pi}} 
 \left(\sqrt{\frac{d}{2}} \theta  \, \erf\bigg(\frac{\lambda \sqrt{d/2} \kappa}{\sqrt{1+2\lambda^2 \gamma^2}}\bigg) +  \frac{\lambda  \gamma^2 \rho}{\sqrt{1+2\lambda^2 \gamma^2}}\, \erf'\bigg( \frac{\lambda \sqrt{d/2} \kappa}{\sqrt{1+2\lambda^2 \gamma^2}} \bigg)\right)  \, .
\end{align*}
\end{proposition}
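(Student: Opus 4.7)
The plan is to start from the five-parameter expression \eqref{eq:risk-5-params-1}--\eqref{eq:risk-5-params-2} established at the beginning of Appendix \ref{proof:risk_5_parameters}, and compute the expectation in closed form by systematically expanding the square and handling each summand via Gaussian integration by parts. Setting
\[
A = \gamma G_1^{v^\star} - \Big(\sqrt{\tfrac{d}{2}}\theta + \gamma G_1^v\Big)\erf\Big(\lambda\Big(\sqrt{\tfrac{d}{2}}\kappa + \gamma G_1^k\Big)\Big),\qquad B = \sum_{\ell=2}^{L} G_\ell^v\,\erf(\lambda G_\ell^k),
\]
I would first factor out the noise to get $\cR^< = \varepsilon^2 + \Esp[(A-B)^2]$, and then use that $A$ (depending only on the informative token) is independent of $B$ (depending only on non-informative tokens) to split the computation into $\Esp[A^2] - 2\Esp[A]\Esp[B] + \Esp[B^2]$ plus cross terms with $\xi$ that vanish by independence.

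For the non-informative part, I would use the joint distribution of $(G_\ell^v,G_\ell^k)$ with correlation $\rho$ and apply Stein's lemma repeatedly: $\Esp[\erf(\lambda G)]=0$ by symmetry; $\Esp[G_\ell^v\erf(\lambda G_\ell^k)] = \rho \Esp[\lambda\erf'(\lambda G_\ell^k)] = 2\lambda\rho/\sqrt{\pi(1+2\lambda^2)}$; and for the diagonal variance contribution $\Esp[(G_\ell^v)^2\erf^2(\lambda G_\ell^k)]$ I would decompose $G_\ell^v = \rho G_\ell^k + \sqrt{1-\rho^2}\,Z$ with $Z\perp G_\ell^k$, yielding $(1-\rho^2)\zeta(0,\lambda^2) + \rho^2\,\Esp[(G_\ell^k)^2\erf^2(\lambda G_\ell^k)]$, and then compute the remaining expectation by a further integration by parts. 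Separating the diagonal and off-diagonal terms of $\Esp[B^2]$ produces both a $(L-1)$ piece and a $(L-1)(L-2)\rho^2$ piece, matching the formula.

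For the informative token, I would expand $\Esp[A^2]$ into terms involving $\erf$, $\erf\cdot G^{v^\star}$, $\erf\cdot G^v$, $\erf^2$, $G^v\cdot\erf^2$, and $(G^v)^2\erf^2$, each evaluated at the Gaussian argument $\lambda\sqrt{d/2}\kappa + \lambda\gamma G_1^k$. The key identities are Lemma \ref{lem:technical_results}(ii), giving $\Esp[\erf(a+bG)] = \erf(a/\sqrt{1+2b^2})$, together with its derivatives in $a$ obtained by Stein's lemma: $\Esp[G\,\erf(a+bG)] = b\,\Esp[\erf'(a+bG)]$ and similar for $G^2$, producing the $\erf'$ and $\erf''$ evaluated at $\lambda\sqrt{d/2}\kappa/\sqrt{1+2\lambda^2\gamma^2}$. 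Cross-correlations among $(G_1^{v^\star},G_1^v,G_1^k)$ through $\nu,\eta,\rho$ are handled by conditioning on $G_1^k$ so that $(G_1^{v^\star},G_1^v)\mid G_1^k$ has a closed-form conditional mean linear in $G_1^k$. The main obstacle lies precisely here: the term $\Esp[(\sqrt{d/2}\theta+\gamma G_1^v)^2 \erf^2(\lambda\sqrt{d/2}\kappa+\lambda\gamma G_1^k)]$ couples a quadratic in $G_1^v$ with $\erf^2$ whose argument also depends on $G_1^k$, and yields the $\zeta$ term, the $\erf^2\cdot\erf'$ cross term, and the $\erf'$ squared-type term $\frac{4\lambda^2\gamma^4\rho^2}{\sqrt{\pi}\sqrt{1+4\lambda^2\gamma^2}(1+2\lambda^2\gamma^2)}\erf'(-\lambda\sqrt{d}\kappa/\sqrt{1+4\lambda^2\gamma^2})$ through a careful application of Stein twice and evaluation of $\Esp[e^{-2(\lambda\sqrt{d/2}\kappa+\lambda\gamma G_1^k)^2}]$ by a Gaussian convolution.

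Finally, the $-2\Esp[A]\Esp[B]$ cross contribution produces the last line of the formula, combining $\Esp[B] = 2\lambda(L-1)\rho/\sqrt{\pi(1+2\lambda^2)}$ with $\Esp[A] = -\sqrt{d/2}\theta\,\erf(\lambda\sqrt{d/2}\kappa/\sqrt{1+2\lambda^2\gamma^2}) - \lambda\gamma^2\rho\,\erf'(\cdot)/\sqrt{1+2\lambda^2\gamma^2}$ obtained by an additional application of Stein's lemma to handle $\Esp[G_1^v\erf(\cdot)]$. Collecting all contributions reproduces the announced closed form. The computation is long but essentially mechanical once the Stein framework is set up; the bookkeeping of the $\erf$, $\erf'$, $\erf''$ terms with their exact prefactors is where the main risk of error lies, and a helpful sanity check is that setting $\theta=\eta=\rho=0$ must recover the formula of Lemma \ref{lem:risk_on_M}.
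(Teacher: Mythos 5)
Your plan matches the paper's proof essentially term by term: you decompose the risk into the contribution of the informative token (your $\Esp[A^2]$, the paper's $R_1$), the diagonal and off-diagonal contributions of the non-informative tokens (your diagonal/off-diagonal split of $\Esp[B^2]$, the paper's $R_2$ and $R_3$), and the cross term exploiting independence (your $-2\Esp[A]\Esp[B]$, the paper's $R_4$), then evaluate each piece via Stein's lemma together with the closed-form Gaussian $\erf$ expectations of Lemma~\ref{lem:technical_results}. The only cosmetic variation is that you occasionally propose an orthogonal decomposition $G_\ell^v = \rho G_\ell^k + \sqrt{1-\rho^2}Z$ or conditioning on $G_1^k$ where the paper applies multivariate Stein's lemma directly, but these are equivalent Gaussian-calculus manipulations leading to the same identities.
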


\begin{proof}
We first recall the notations for the five scalar products that are used throughout this proof.
\begin{align*}
    \nu = v^\top v^\star \, , \quad
    \kappa = k^\top k^\star \, , \quad
    \theta = v^\top k^\star \, , \quad
    \eta = k^\top v^\star \, , \quad
    \rho = k^\top v \, .
\end{align*}

\paragraph{A first decomposition.}
We start back from the expression \eqref{eq:intermediate-eq-risk} obtained for the risk. By expanding in $\xi$, then expanding the square, we obtain
\begin{align*}
\cR(k, v) &= \Esp \Big( \Big(X_1^\top v^\star -  \sum_{\ell=1}^\length  X_\ell^\top v \, \erf(\lambda X_\ell^\top k) \Big)^2 \Big)  + \varepsilon^2 \\
 &= \Esp \Big( \Big(X_1^\top v^\star -   X_1^\top v \, \erf(\lambda X_1^\top k) -   \sum_{\ell=2}^\length  X_\ell^\top v \, \erf(\lambda X_\ell^\top k) \Big)^2 \Big)  + \varepsilon^2 \\
 &= \underbrace{\vphantom{\sum_{\ell=2}^\length }\Esp \Big( \big(X_1^\top v^\star -   X_1^\top v \, \erf(\lambda X_1^\top k) \big)^2 \Big)}_{=:\color{RoyalBlue}R_1} +    \underbrace{\sum_{\ell=2}^\length  \Esp \Big( \big( X_\ell^\top v \, \erf(\lambda X_\ell^\top k) \big)^2 \Big)}_{=: \color{PineGreen} R_2} \\
 &\quad + \underbrace{\sum_{\ell \neq j \geq 2}^\length  \Esp \Big( X_\ell^\top v \, \erf(\lambda X_\ell^\top k) X_j^\top v \, \erf(\lambda X_j^\top k)  \Big)}_{=:\color{MediumOrchid}R_3} \\
 &\quad \underbrace{- 2 \sum_{\ell=2}^\length  \Esp \Big( \big(X_1^\top v^\star -   X_1^\top v \, \erf(\lambda X_1^\top k) \big) X_\ell^\top v \, \erf(\lambda X_\ell^\top k) \Big)}_{=: \color{Salmon}R_4} + \,  \varepsilon^2 \, .
\end{align*}

\paragraph{Computation of $R_1$.} By expanding the square,
\begin{align*}
    \Esp \Big( \big(X_1^\top v^\star &-   X_1^\top v \, \erf(\lambda X_1^\top k) \big)^2 \Big)  \\
    &= \Esp  \Big( \big(X_1^\top v^\star\big)^2 \Big)
    - 2   \Esp  \Big( X_1^\top v^\star X_1^\top v \, \erf(\lambda X_1^\top k)   \Big)
    +   \Esp \Big( \big( X_1^\top v \, \erf(\lambda X_1^\top k) \big)^2 \Big) \, .
\end{align*}
These three terms are computed hereafter. First we have
\[
\Esp  \Big( \big(X_1^\top v^\star\big)^2 \Big)
    = \Big(\Esp   \big(X_1^\top v^\star\big) \Big)^2 + \Var\big(X_1^\top v^\star\big)  = (\sqrt{\frac{d}{2}}(k^\star)^\top v^\star)^2 + \gamma^2 = {\color{RoyalBlue} \gamma^2} \, .
\]
Second, 
\begin{align*}
    \Esp \Big( &X_1^\top v^\star X_1^\top v \, \erf(\lambda X_1^\top k)   \Big) \\
    &= \Esp \left[ \left(\sqrt{\frac{d}{2}}(k^\star)^\top v^\star + Z_1\right) \left(\sqrt{\frac{d}{2}}(k^\star)^\top v + Z_2\right)\, \erf \left(\lambda \sqrt{\frac{d}{2}} (k^\star)^\top k + \lambda Z_3\right) \right] \, , \\
    &= \Esp \left[ Z_1 \left(\sqrt{\frac{d}{2}}\theta + Z_2\right)\, \erf \left(\lambda \sqrt{\frac{d}{2}} \kappa + \lambda Z_3\right) \right] \, ,
\end{align*}
with
\begin{align*}
    \begin{pmatrix}
        Z_1 \\
        Z_2 \\
        Z_3
    \end{pmatrix} &\sim 
    \mathcal{N} \left( 0 , \gamma^2 \begin{pmatrix}
        1 & v^\top v^\star & k^\top v^\star \\
        v^\top v^\star & 1 & v^\top k \\
        k^\top v^\star & v^\top k & 1
    \end{pmatrix} \right)
    =
    \mathcal{N} \left( 0 , \gamma^2 \begin{pmatrix}
        1 & \nu & \eta \\
        \nu & 1 & \rho \\
        \eta & \rho & 1
    \end{pmatrix} \right)
    \, .
\end{align*}
Recall the multivariate version of Stein's lemma \citep{stein1981estimation}, which states that, when $Z, G_1, \hdots , G_p$ are centered and jointly Gaussian, and $\sigma: \R^p \to \R$,
\begin{align*}
    \Esp \left[  Z \sigma (G_1, \hdots , G_p)\right] = \sum_{i=1}^p \Cov (Z,G_i) \Esp \left[ \partial_{i} \sigma (G_1, \hdots , G_p) \right] \, .
\end{align*}
Therefore, 
\begin{align*}
    &\Esp  \Big( X_1^\top v^\star X_1^\top v \erf(\lambda X_1^\top k)   \Big) \\
    &= \gamma^2 \nu \Esp\left[ \, \erf\left(\lambda \sqrt{\frac{d}{2}} \kappa+\lambda Z_3\right)\right] + \lambda \gamma^2 \eta \Esp \left[ \left(\sqrt{\frac{d}{2}} \theta +Z_2\right) \, \erf' \left(\lambda \sqrt{\frac{d}{2}} \kappa + \lambda Z_3\right)\right] \\
    &= \gamma^2 \nu \Esp\left[ \, \erf\left(\lambda \sqrt{\frac{d}{2}} \kappa+ \lambda Z_3\right)\right]
    + \lambda \gamma^2 \sqrt{\frac{d}{2}} \eta \theta \Esp\left[ \, \erf'\left(\lambda \sqrt{\frac{d}{2}}\kappa + \lambda Z_3\right)\right] \\
    &\qquad + \lambda^2 \gamma^4 \eta \rho \Esp\left[ \erf''\left(\lambda \sqrt{\frac{d}{2}} \kappa +\lambda Z_3\right)\right] \\
    &= {\color{RoyalBlue} 
    \gamma^2 \nu \erf \left(\frac{\lambda \sqrt{d/2} \kappa}{\sqrt{1+2\lambda^2 \gamma^2}} \right) + \sqrt{\frac{d}{2}}    \frac{\lambda \gamma^2 \eta \theta}{\sqrt{1+2\lambda^2\gamma^2}}\erf'\left(\frac{\lambda \sqrt{d/2} \kappa}{\sqrt{1+2\lambda^2 \gamma^2}}\right) } \\
    &\qquad {\color{RoyalBlue} + \frac{\lambda^2 \gamma^4 \eta \rho}{1+2\gamma^2 \lambda^2}\erf''\left(\frac{\lambda \sqrt{d/2} \kappa}{\sqrt{1+2\lambda^2 \gamma^2}}\right)}
\end{align*}
by using Lemma \ref{lem:technical_results}$(i)-(iii)$.
Finally, using again Stein's lemma and Lemma \ref{lem:technical_results}$(iv)-(vi)$, the computation of the last term is as follows:
\begin{align*}
&{\Esp \Big[ \big( X_1^\top v \, \erf(\lambda X_1^\top k) \big)^2 \Big]} \\
&= \Esp \left[ \bigg(\sqrt{\frac{d}{2}} (k^\star)^\top v + Z_2\bigg)^2 \, \erf\bigg(\lambda \sqrt{\frac{d}{2}} k^\top k^\star +\lambda Z_3\bigg)^2 \right] \\
&= \Esp \left[ \frac{d}{2} \theta^2 \, \erf^2\bigg(\lambda \sqrt{\frac{d}{2}} \kappa +\lambda Z_3\bigg) \right] + 2 \Esp \left[ \sqrt{\frac{d}{2}} \theta Z_2 \, \erf^2\bigg(\lambda \sqrt{\frac{d}{2}} \kappa + \lambda Z_3\bigg) \right] \\
&\; + \Esp \left[Z_2^2 \, \erf^2\bigg(\lambda \sqrt{\frac{d}{2}} \kappa + \lambda Z_3 \bigg) \right] \\
&= \frac{d}{2} \theta^2 \Esp \left[\erf^2\bigg(\lambda \sqrt{\frac{d}{2}} \kappa +\lambda Z_3\bigg)  \right] \\
&\; + 4 \lambda \gamma^2 \sqrt{\frac{d}{2}} \theta \rho \Esp \left[ \, \erf\bigg(\lambda \sqrt{\frac{d}{2}} \kappa + \lambda Z_3\bigg) \, \erf'\bigg(\lambda \sqrt{\frac{d}{2}} \kappa + \lambda Z_3\bigg) \right] +\gamma^2 \Esp \left[ \, \erf^2\bigg(\lambda \sqrt{\frac{d}{2}} \kappa +\lambda Z_3\bigg) \right] \\
&\; + 2 \lambda \gamma^2 \rho \Esp \left[Z_2 \, \erf\bigg(\lambda \sqrt{\frac{d}{2}} \kappa + \lambda Z_3\bigg) \erf'\bigg(\lambda \sqrt{\frac{d}{2}} \kappa + \lambda Z_3\bigg) \right] \\
&= (\frac{d}{2} \theta^2 + \gamma^2) \Esp \left[\erf^2\bigg(\lambda \sqrt{\frac{d}{2}} \kappa + \lambda Z_3\bigg)  \right] \\
&\; + 4 \lambda \gamma^2 \sqrt{\frac{d}{2}} \theta \rho \Esp \left[ \, \erf\bigg(\lambda \sqrt{\frac{d}{2}} \kappa + \lambda Z_3\bigg) \, \erf'\bigg(\lambda \sqrt{\frac{d}{2}} \kappa + \lambda Z_3\bigg) \right] \\
&\; + 2 \lambda^2 \gamma^4 \rho^2 \left(\Esp \left[\erf\bigg(\lambda \sqrt{\frac{d}{2}} \kappa + \lambda Z_3\bigg) \erf''\bigg(\lambda \sqrt{\frac{d}{2}} \kappa +\lambda Z_3\bigg) \right] + \Esp \left[(\erf')^2\bigg(\lambda \sqrt{\frac{d}{2}} \kappa +\lambda Z_3\bigg) \right]\right) \\
&= (\frac{d}{2} \theta^2 + \gamma^2) \zeta\Big(\lambda \sqrt{\frac{d}{2}} \, \kappa, \lambda^2 \gamma^2\Big) \\
&\; + 4 \lambda \gamma^2 \sqrt{\frac{d}{2}} \theta \rho \Esp \left[ \, \erf\left(\lambda \sqrt{\frac{d}{2}} \kappa + \lambda Z_3\right) \, \erf'\left(\lambda \sqrt{\frac{d}{2}} \kappa + \lambda Z_3\right) \right] \\
&\; + \frac{2 \lambda^2 \gamma^4 \rho^2}{1 + 2 \lambda^2 \gamma^2} \left(-2 \lambda \sqrt{\frac{d}{2}} \kappa\,  \Esp\left[\erf\left(\lambda \sqrt{\frac{d}{2}} \kappa+\lambda Z_3\right)\erf'\left(\lambda \sqrt{\frac{d}{2}} \kappa+ \lambda Z_3\right)\right] \right. \\
&\; \left. + \Esp\left[(\erf')^2\left(\lambda \sqrt{\frac{d}{2}} \kappa+ \lambda Z_3\right)\right] \right) \\
&= (\frac{d}{2} \theta^2 + \gamma^2) \zeta\Big(\lambda \sqrt{\frac{d}{2}} \, \kappa, \lambda^2 \gamma^2\Big) \\
&\; + 4 \lambda \gamma^2 \sqrt{\frac{d}{2}} \left(\theta \rho - \frac{\lambda^2 \gamma^2 \rho^2 \kappa}{1 + 2 \lambda^2 \gamma^2} \right) \Esp \left[ \, \erf\left(\lambda \sqrt{\frac{d}{2}} \kappa + \lambda Z_3\right) \, \erf'\left(\lambda \sqrt{\frac{d}{2}} \kappa + \lambda Z_3\right) \right] \\
&\; + \frac{2 \lambda^2 \gamma^4 \rho^2}{1 + 2 \lambda^2 \gamma^2} \Esp\left[(\erf')^2\left(\lambda \sqrt{\frac{d}{2}} \kappa+\lambda Z_3\right)\right] \\
&= {\color{RoyalBlue}(\frac{d}{2} \theta^2 + \gamma^2) \zeta\Big(\lambda \sqrt{\frac{d}{2}} \, \kappa, \lambda^2 \gamma^2\Big)} \\
&{\color{RoyalBlue}+ \sqrt{\frac{d}{2}} \left(\theta \rho - \frac{\lambda^2 \gamma^2 \rho^2 \kappa}{1 + 2 \lambda^2 \gamma^2} \right)  \frac{4 \lambda \gamma^2}{\sqrt{1+2\lambda^2\gamma^2}}    \erf\bigg( \frac{\lambda \sqrt{d/2} \, \kappa}{\sqrt{(1+4\lambda^2\gamma^2)(1+2\lambda^2\gamma^2)}}\bigg)   \erf'\bigg( \frac{\lambda \sqrt{d/2} \, \kappa}{\sqrt{1+2\lambda^2 \gamma^2}}\bigg)} \\
&{\color{RoyalBlue}+ \frac{2 \lambda^2 \gamma^4 \rho^2}{1 + 2 \lambda^2 \gamma^2}  \left(\frac{2}{\sqrt{\pi}\sqrt{1+4\lambda^2 \gamma^2}} \erf'\bigg(-\frac{\lambda \sqrt{d}\kappa}{\sqrt{1+4\lambda^2\gamma^2}} \bigg)\right)}
\end{align*}
by Lemma \ref{lem:technical_results}$(iv)-(vi)$.

\paragraph{Computation of $R_2$.}
We have
\begin{align*}
    R_2 = \sum_{\ell=2}^\length \Esp \Big( \big( X_\ell^\top v \, \erf(\lambda X_\ell^\top k) \big)^2 \Big) &=   (\length -1) \Esp \Big( \big( X_2^\top v \, \erf(\lambda X_2^\top k) \big)^2 \Big) \, . 
\end{align*}
Thus, using previous calculations with $\gamma^2=1$, $\theta=0$, and $\kappa=0$, we obtain
\begin{align*}
R_2 &= (\length -1) \left[\zeta(0, \lambda^2) + \frac{4 \lambda^2}{\sqrt{\pi}\sqrt{4\lambda^2+1} (1 + 2 \lambda^2)} \rho^2 \erf'\left(0\right) \right] \\
&= {\color{PineGreen}(\length -1) \left[\zeta(0, \lambda^2) + \frac{8 \lambda^2}{\pi\sqrt{4\lambda^2+1} (1 + 2 \lambda^2)} \rho^2 \right]} \, .
\end{align*}

\paragraph{Computation of $R_3$.} Regarding the cross-product terms, by independence of the $(X_\ell)$'s and Stein's lemma, one gets 
\begin{align*}
    \Esp \Big( X_\ell^\top v \, \erf(\lambda X_\ell^\top k) X_j^\top v \, \erf(\lambda X_j^\top k)  \Big) &= \Esp \Big( X_\ell^\top v \, \erf(\lambda X_\ell^\top k) \Big) \Esp \Big( X_j^\top v \, \erf(\lambda X_j^\top k)  \Big) = C^2 \rho^2 \, ,
\end{align*}
{with $C := \lambda \Esp(\erf'(\lambda X_\ell^\top k)) = 2\lambda/\sqrt{(1+2\lambda^2)\pi}$ by Lemma \ref{lem:technical_results}$(i)$}. This leads to $${\color{MediumOrchid}R_3 = \frac{4 \lambda^2}{(1+2\lambda^2)\pi} (\length -1)(\length -2) \rho^2}.$$

\paragraph{Computation  of $R_4$.} We have, again by independence and Stein's lemma,
\begin{align*}
    \Esp \Big( \big(X_1^\top v^\star -   X_1^\top v \,\erf(\lambda X_1^\top k) \big) &X_\ell^\top v \,\erf(\lambda X_\ell^\top k) \Big) \\
    &= \Esp \Big( X_1^\top v^\star -   X_1^\top v \,\erf(\lambda X_1^\top k) \Big) \Esp \Big( X_\ell^\top v \,\erf(\lambda X_\ell^\top k) \Big) \\
    &= \Big( \sqrt{\frac{d}{2}} (k^\star)^\top v^\star -   \Esp (X_1^\top v \,\erf(\lambda X_1^\top k))  \Big)\Esp \Big( X_\ell^\top v \,\erf(\lambda X_\ell^\top k) \Big)\\
    &=  - \Esp (X_1^\top v \,\erf(\lambda X_1^\top k))  \cdot C \rho \\
    &= - \frac{2\lambda \rho}{\sqrt{(1+2\lambda^2)\pi}} \Esp (X_1^\top v \,\erf(\lambda X_1^\top k)) \, .
\end{align*}
Note that, still using Stein's lemma,
\begin{align*}
  -   \Esp (&X_1^\top v \,\erf(\lambda X_1^\top k)  ) \\
 &= 
 -   \Esp (\sqrt{\frac{d}{2}} (k^\star)^\top v \, \erf(\lambda X_1^\top k)  ) 
 -   \Esp ((X_1^\top v - \sqrt{\frac{d}{2}} (k^\star)^\top v) \, \erf(\lambda X_1^\top k)  ) \\
 &=  -   \Esp (\sqrt{\frac{d}{2}} \theta \, \erf(\lambda X_1^\top k)  )
 -   \mathrm{Cov}\Big( X_1^\top v , \, \erf(\lambda X_1^\top k) \Big) \\
 &= -  \sqrt{\frac{d}{2}} \theta \, \Esp (\erf(\lambda X_1^\top k)  )
 -  \lambda \mathrm{Cov}\Big( X_1^\top v , X_1^\top k\Big)  \Esp\Big(\, \erf'(\lambda X_1^\top k) \Big) \\
 &= -  \sqrt{\frac{d}{2}} \theta \, \erf\left( \frac{\lambda \sqrt{d/2} \kappa}{\sqrt{1+2\lambda^2 \gamma^2}}\right)  - \lambda  \gamma^2 (k^\top v) {{\frac{1}{\sqrt{1+2\gamma^2\lambda^2}}\, \erf'\left( \frac{\lambda \sqrt{d/2} \kappa}{\sqrt{1+2\lambda^2 \gamma^2}} \right)}},
\end{align*}
where we used that $\lambda X_1^\top k \overset{\mathcal{L}}{=} \lambda \sqrt{d/2} \kappa + G$ with $G\sim \mathcal{N}(0,\lambda^2\gamma^2)$, in combination with Lemma~\ref{lem:technical_results}$(i)-(ii)$.  
Thus
\begin{align*}
{\color{Salmon}R_4 = \frac{4\lambda (\length -1) \rho}{\sqrt{(1+2\lambda^2)\pi}} 
 \left( \sqrt{\frac{d}{2}} \theta  \, \erf\Big( \frac{\lambda \sqrt{d/2} \kappa}{\sqrt{1+2\lambda^2 \gamma^2}}\Big)
 +  \frac{\lambda  \gamma^2 \rho}{\sqrt{1+2\gamma^2\lambda^2}}\, \erf'\Big( \frac{\lambda \sqrt{d/2} \kappa}{\sqrt{1+2\lambda^2 \gamma^2}} \Big)
 \right)} .
\end{align*}

\paragraph{All in all.} Putting everything together, we obtain
\begin{align*}
&{\cR(k, v) } = \varepsilon^2 \\
&+{\color{RoyalBlue}\gamma^2- 2 \gamma^2 \nu \erf \bigg(\frac{\lambda \sqrt{d/2} \kappa}{\sqrt{1+2\lambda^2 \gamma^2}}\bigg) -2 \lambda \gamma^2 \sqrt{\frac{d}{2}}  \eta \theta  \frac{1}{\sqrt{1+2\lambda^2\gamma^2}}\erf'\bigg(\frac{\lambda \sqrt{d/2} \kappa}{\sqrt{1+2\lambda^2 \gamma^2}}\bigg)  } \\
 & {\color{RoyalBlue}- \frac{2\lambda^2 \gamma^4 \eta \rho}{1+2\lambda^2\gamma^2}\erf''\bigg( \frac{\lambda \sqrt{d/2} \kappa}{\sqrt{1+2\lambda^2 \gamma^2}}\bigg)  + (\frac{d}{2} \theta^2 + \gamma^2) \zeta\Big(\lambda\sqrt{\frac{d}{2}} \, \kappa,\lambda^2\gamma^2\Big) } \\
 & {\color{RoyalBlue}  + \sqrt{\frac{d}{2}} \left(\theta \rho - \frac{\lambda^2 \gamma^2 \rho^2 \kappa}{1 + 2 \lambda^2 \gamma^2} \right)  \frac{4 \lambda \gamma^2}{\sqrt{1+2\lambda^2\gamma^2}}    \erf\bigg( \frac{\lambda \sqrt{d/2} \, \kappa}{\sqrt{(1+4\lambda^2\gamma^2)(1+2\lambda^2\gamma^2)}}\bigg)   \erf'\bigg( \frac{\lambda \sqrt{d/2} \, \kappa}{\sqrt{1+2\lambda^2\gamma^2}}\bigg)} \\
& {\color{RoyalBlue} +  \frac{4 \lambda^2 \gamma^4 \rho^2}{\sqrt{\pi} \sqrt{1+4\lambda^2\gamma^2} (1+2\lambda^2\gamma^2)} \erf'\bigg(-\frac{\lambda\sqrt{d}\kappa}{\sqrt{1+4\lambda^2\gamma^2}} \bigg)} \\
 &+  {\color{PineGreen} (\length -1) \left[\zeta(0, \lambda^2) + \frac{8 \lambda^2}{\pi\sqrt{1+4\lambda^2} (1 + 2 \lambda^2)}  \rho^2 \right]} + {\color{MediumOrchid} \frac{4 \lambda^2}{(1+2\lambda^2)\pi} (\length -1)(\length -2) \rho^2}\\
 &+ {\color{Salmon} \frac{4\lambda (\length -1) \rho}{\sqrt{(1+2\lambda^2)\pi}} 
 \left(\sqrt{\frac{d}{2}} \theta  \, \erf\bigg(\frac{\lambda \sqrt{d/2} \kappa}{\sqrt{1+2\lambda^2 \gamma^2}}\bigg) +  \frac{\lambda  \gamma^2 \rho}{\sqrt{1+2\lambda^2 \gamma^2}}\, \erf'\bigg(\frac{\lambda \sqrt{d/2} \kappa}{\sqrt{1+2\lambda^2 \gamma^2}}\right) } \, .
\end{align*}

This concludes the proof.
\end{proof}

\section{Technical results} \label{app:tech-results}

This section gathers formulas that are useful in the proofs, in particular regarding expectation of functions of Gaussian random variables involving $\erf$.

\begin{lemma}   \label{lemma:simple-bounds}
    For $u \geq 0$,
    \[
    \Big|\frac{1}{\sqrt{1+u}} - 1\Big| \leq u \, .
    \]
\end{lemma}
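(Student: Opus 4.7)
The inequality is elementary, so my plan is short. First, I would observe that for $u \geq 0$ we have $\sqrt{1+u} \geq 1$, hence $\frac{1}{\sqrt{1+u}} \leq 1$, so the absolute value can be removed as
\[
\Bigl|\frac{1}{\sqrt{1+u}} - 1\Bigr| = 1 - \frac{1}{\sqrt{1+u}} = \frac{\sqrt{1+u} - 1}{\sqrt{1+u}} \leq \sqrt{1+u} - 1,
\]
where in the last step I used $\sqrt{1+u} \geq 1$ again to drop the denominator.

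It then suffices to show $\sqrt{1+u} - 1 \leq u$, i.e., $\sqrt{1+u} \leq 1 + u$. Since both sides are non-negative, this is equivalent to $1+u \leq (1+u)^2 = 1 + 2u + u^2$, which reduces to $0 \leq u + u^2$, obviously true for $u \geq 0$.

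There is no genuine obstacle here; the only thing to be careful about is handling the sign so that the absolute value unfolds correctly, and chaining the two simple bounds $(\sqrt{1+u}-1)/\sqrt{1+u} \leq \sqrt{1+u}-1 \leq u$ in the right order. Alternatively, one could give a one-line proof by writing $1 - \frac{1}{\sqrt{1+u}} = \frac{u}{\sqrt{1+u}(\sqrt{1+u}+1)} \leq u$, since the denominator is at least $1$ for $u \geq 0$; I would include this as a cleaner variant if space permits.
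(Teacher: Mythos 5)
Your proof is correct, and it takes a different (purely algebraic) route from the paper's. The paper also removes the absolute value by noting the expression inside is non-positive, but then argues via calculus: it defines $f(u) = 1 - \frac{1}{\sqrt{1+u}} - u$, checks $f(0)=0$, and shows $f'(u) = \frac{1}{(1+u)^{3/2}} - 1 \leq 0$ on $[0,\infty)$, so $f$ stays non-positive. You instead chain two elementary algebraic bounds, $\frac{\sqrt{1+u}-1}{\sqrt{1+u}} \leq \sqrt{1+u}-1 \leq u$, the second of which reduces to $0 \leq u + u^2$ after squaring. Both are perfectly sound; your version avoids differentiation entirely and is a bit more self-contained, while the paper's derivative argument is a one-computation template that generalizes easily to other inequalities of the form $g(u) \leq u$. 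Your one-line variant, $1 - \frac{1}{\sqrt{1+u}} = \frac{u}{\sqrt{1+u}\left(\sqrt{1+u}+1\right)} \leq u$, is the cleanest of all and even gives the sharper constant $u/2$ since the denominator is at least $2$.
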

\begin{proof}
The argument of the absolute value is non-positive for $u \geq 0$, hence we need to show that
\[
f(u) := 1 - \frac{1}{\sqrt{1+u}} - u
\]
is non-positive for $u \geq 0$. Just note that
\[
f(0) = 0 \quad \textnormal{and} \quad f'(u) = \frac{1}{(1+u)^{3+2}} - 1 \leq 0 \, .
\]
\end{proof}
Recall that the $\erf$ function is defined on $\mathbb R$ as 
    \begin{equation*}
        \erf(u) = \frac{2}{\sqrt{\pi}} \int_0^u e^{-t^2} dt \, .
    \end{equation*}
\begin{lemma}[Properties of the $\erf$ function]
\label{lem:erf}
    We have 
    \begin{align*}
        \erf'(u) &= \frac{2}{\sqrt{\pi}} e^{-u^2}\, , \\
        \erf''(u) &= -\frac{4}{\sqrt{\pi}} u e^{-u^2} = -2u \erf'(u)  \, ,  \\
        |\erf(u)| &\leq \frac{2}{\sqrt{\pi}} |u| \, .
    \end{align*}
\end{lemma}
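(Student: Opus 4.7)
The plan is to derive all three statements directly from the integral definition
\[
\erf(u) = \frac{2}{\sqrt{\pi}} \int_0^u e^{-t^2}\, dt \, ,
\]
without appealing to any auxiliary results.

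First, I would apply the fundamental theorem of calculus to the integral definition. Since $t \mapsto e^{-t^2}$ is continuous on $\mathbb{R}$, its antiderivative $u \mapsto \int_0^u e^{-t^2} dt$ is differentiable with derivative $e^{-u^2}$. Multiplying by the constant prefactor $2/\sqrt{\pi}$ yields $\erf'(u) = \frac{2}{\sqrt{\pi}} e^{-u^2}$. For the second derivative, I would differentiate this expression using the chain rule: $\frac{d}{du}\bigl(e^{-u^2}\bigr) = -2u\, e^{-u^2}$, which immediately gives
\[
\erf''(u) = -\frac{4}{\sqrt{\pi}}\, u\, e^{-u^2} = -2u\, \erf'(u) \, .
\]

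For the third inequality, I would use the elementary bound $e^{-t^2} \leq 1$ for all $t \in \mathbb{R}$. This gives, for $u \geq 0$,
\[
|\erf(u)| = \frac{2}{\sqrt{\pi}} \int_0^u e^{-t^2}\, dt \leq \frac{2}{\sqrt{\pi}} \int_0^u 1\, dt = \frac{2}{\sqrt{\pi}} u \, .
\]
For $u < 0$, the same bound follows since $\erf$ is odd (a consequence of the definition by substitution $t \mapsto -t$). Combining the two cases gives $|\erf(u)| \leq \frac{2}{\sqrt{\pi}} |u|$.

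No step here poses any real obstacle; the lemma is a collection of elementary calculus facts, and the proof is essentially a direct application of the fundamental theorem of calculus, the chain rule, and the pointwise bound $e^{-t^2} \leq 1$.
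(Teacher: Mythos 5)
Your proof of the first two identities is essentially the same as the paper's (the paper simply states they follow by ``usual differentiation rules,'' which is exactly the fundamental theorem of calculus plus the chain rule that you spelled out). For the third inequality, however, you take a genuinely different route: you bound the integrand pointwise by $e^{-t^2}\leq 1$ and integrate, whereas the paper first reduces to $u\geq 0$ by oddness (as you also do) and then invokes concavity of $\erf$ on $[0,\infty)$ together with the tangent-line bound $\erf(u)-\erf(0)\leq \erf'(0)\,u$. Both arguments are correct and ultimately rest on the same fact --- the integrand $e^{-t^2}$ is maximized at $t=0$ --- but your version is more elementary, as it avoids invoking concavity and works directly with the integral; the paper's version is slightly more structural and would transfer verbatim to any function that is concave and vanishes at the origin. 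Either is perfectly acceptable here.
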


\begin{proof}
   The first two statements are clear by usual differentiation rules. Regarding the last statement, since $\erf$ is an odd function, it is sufficient to prove the statement for $u \geq 0$. Moreover, $\erf$ is concave on $[0, \infty)$, so we get, for $u \geq 0$,
   \[
   |\erf(u)| = |\erf(u) - \erf(0)| \leq \erf'(0) u = \frac{2}{\sqrt{\pi}} u \, , 
   \]
   which concludes the proof.
\end{proof}

\begin{lemma}
\label{lem:technical_results}
    Let $G \sim \mathcal{N}(0,\gamma^2)$. For $t\in\mathbb{R}$,
    \begin{enumerate}[label=(\roman*)]
        \item $\Esp \left[ \erf'(t+G) \right]= \frac{1}{\sqrt{1+2\gamma^2}}\erf'\Big( \frac{t}{\sqrt{1+2\gamma^2}}\Big)$.
        \item $\Esp \left[ \erf(t+G) \right] = \erf \Big( \frac{t}{\sqrt{1+2\gamma^2}} \Big)$.
        \item $\Esp \left[ \erf''(t+G) \right]= \frac{1}{1+2\gamma^2}\erf''\Big( \frac{t}{\sqrt{1+2\gamma^2}}\Big)$.
        \item $\Esp \left[ (\erf')^2(t+G) \right] =  \frac{2}{\sqrt{\pi}\sqrt{1+4\gamma^2}} \erf'\Big(-\frac{\sqrt{2}t}{\sqrt{1+4\gamma^2}} \Big)$.
        \item $(1+2\gamma^2) \Esp[\erf(t+G)\erf''(t+G)] = -2t\,  \Esp[\erf(t+G)\erf'(t+G)]-2\gamma^2 \Esp[(\erf'(t+G))^2]$.
        \item $\Esp \left[ \erf(t+G) \erf'(t+G) \right] = \frac{1}{\sqrt{1+2\gamma^2}}    \erf\Big( \frac{t}{\sqrt{(1+4\gamma^2)(1+2\gamma^2)}}\Big)   \erf'\Big( \frac{t}{\sqrt{1+2\gamma^2}}\Big)$.
    \end{enumerate}
\end{lemma}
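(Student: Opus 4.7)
The plan is to prove the six identities essentially in the order stated, as later identities build on earlier ones. The common thread is completing the square in Gaussian integrals, together with Stein's lemma and the two elementary identities $\erf'(u)=(2/\sqrt\pi)e^{-u^2}$ and $\erf''(u)=-2u\,\erf'(u)$ from Lemma \ref{lem:erf}.

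First I would handle (i) by writing $\erf'(t+G)=(2/\sqrt\pi)e^{-(t+G)^2}$ and computing $\Esp[e^{-(t+G)^2}]$ via completion of the square: the combined exponent $-g^2/(2\gamma^2)-(t+g)^2$ rearranges as $-(1+2\gamma^2)(g+\mu)^2/(2\gamma^2)-t^2/(1+2\gamma^2)$ for an explicit $\mu$, so the $g$-integral gives $(1+2\gamma^2)^{-1/2}e^{-t^2/(1+2\gamma^2)}$, which is exactly $\erf'(t/\sqrt{1+2\gamma^2})/\sqrt{1+2\gamma^2}$ up to the $2/\sqrt\pi$ prefactor. Identity (ii) then follows by observing that both sides of (ii) have the same derivative in $t$ (by (i)) and agree at $t=0$, where $\Esp[\erf(G)]=0$ by the oddness of $\erf$ and the symmetry of $G$. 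Identity (iii) is obtained by differentiating (i) in $t$ under the expectation (justified by dominated convergence, since $\erf''$ is bounded). For (iv), $(\erf')^2(u)=(4/\pi)e^{-2u^2}$ is amenable to exactly the same completion-of-the-square argument as in (i), now producing $1+4\gamma^2$ in place of $1+2\gamma^2$; matching the resulting exponential against $\erf'(-\sqrt{2}\,t/\sqrt{1+4\gamma^2})=(2/\sqrt\pi)e^{-2t^2/(1+4\gamma^2)}$ gives the stated expression.

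For (v), I would start from $\erf''(t+G)=-2(t+G)\erf'(t+G)$ to write
\[
\Esp[\erf(t+G)\erf''(t+G)]=-2t\,\Esp[\erf(t+G)\erf'(t+G)]-2\,\Esp[G\,\erf(t+G)\erf'(t+G)].
\]
Stein's lemma applied with $h(g)=\erf(t+g)\erf'(t+g)$, whose derivative is $(\erf')^2(t+g)+\erf(t+g)\erf''(t+g)$, rewrites the last expectation as $\gamma^2\,\Esp[(\erf')^2(t+G)]+\gamma^2\,\Esp[\erf(t+G)\erf''(t+G)]$. Moving the term involving $\erf''$ to the left-hand side yields the announced linear combination with coefficient $1+2\gamma^2$ in front.

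Identity (vi) is the main obstacle, because it mixes $\erf$ and $\erf'$ multiplicatively and so cannot be reduced to a pure Gaussian MGF as in (i)–(iv). I would again substitute $\erf'(t+G)=(2/\sqrt\pi)e^{-(t+G)^2}$ and merge $e^{-(t+g)^2}$ with the Gaussian density of $G$ by completing the square exactly as in (i). This produces a new Gaussian measure on $g$ with mean $\mu=-2\gamma^2 t/(1+2\gamma^2)$ and variance $\sigma^2=\gamma^2/(1+2\gamma^2)$, multiplied by the global prefactor $(1+2\gamma^2)^{-1/2}e^{-t^2/(1+2\gamma^2)}$. The remaining expectation is $\Esp_{G'\sim\mathcal{N}(\mu,\sigma^2)}[\erf(t+G')]$, in which $t+G'$ is Gaussian with mean $t/(1+2\gamma^2)$ and variance $\sigma^2$. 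Applying (ii) to this shifted Gaussian, together with the identity $1+2\sigma^2=(1+4\gamma^2)/(1+2\gamma^2)$, collapses this expectation to $\erf(t/\sqrt{(1+2\gamma^2)(1+4\gamma^2)})$. Recognizing $(2/\sqrt\pi)\cdot e^{-t^2/(1+2\gamma^2)}=\erf'(t/\sqrt{1+2\gamma^2})$ in the prefactor produces the announced product formula and closes the lemma.
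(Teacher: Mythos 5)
Your proposal is correct, and for parts (i), (ii), (iv), (v) it follows essentially the same lines as the paper: completing the square for (i) and (iv), integrating (i) together with oddness at $t=0$ for (ii), and combining $\erf''(u)=-2u\,\erf'(u)$ with Stein's lemma for (v). Part (iii) is a small variant: you differentiate (i) under the expectation, whereas the paper redoes the Gaussian integral directly; both are one line and equivalent, yours is perhaps marginally tidier.

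The genuinely different step is (vi). The paper introduces $f(t)=\Esp[\erf(t+G)\erf'(t+G)]$, uses (v) to derive a linear first-order ODE $f'(t)=\tfrac{1}{1+2\gamma^2}\Esp[(\erf')^2(t+G)]-\tfrac{2t}{1+2\gamma^2}f(t)$, and solves it by variation of parameters with $f(0)=0$. You instead absorb $\erf'(t+G)\propto e^{-(t+G)^2}$ into the Gaussian density of $G$ by completing the square, obtaining a tilted Gaussian $G'\sim\mathcal N\bigl(-\tfrac{2\gamma^2 t}{1+2\gamma^2},\tfrac{\gamma^2}{1+2\gamma^2}\bigr)$ together with the prefactor $\tfrac{1}{\sqrt{1+2\gamma^2}}\erf'\bigl(\tfrac{t}{\sqrt{1+2\gamma^2}}\bigr)$, and then simply invoke (ii) for the shifted Gaussian $t+G'\sim\mathcal N\bigl(\tfrac{t}{1+2\gamma^2},\tfrac{\gamma^2}{1+2\gamma^2}\bigr)$, which collapses to $\erf\bigl(\tfrac{t}{\sqrt{(1+2\gamma^2)(1+4\gamma^2)}}\bigr)$ since $1+2\sigma^2=\tfrac{1+4\gamma^2}{1+2\gamma^2}$. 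This is a clean, purely algebraic alternative: it avoids the ODE machinery entirely, does not rely on (v) (which in your organization becomes logically independent of (vi), though it is still needed elsewhere in the paper), and reuses (ii) as a black box. The paper's ODE route has the modest advantage that it naturally explains why (v) enters the picture, but in terms of effort your completion-of-the-square argument for (vi) is the more economical of the two.
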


This lemma reveals the importance of choosing the $\erf$ function as the component-wise nonlinearity: there are closed-form formulas for the expectation of $\erf$ and its derivatives applied to Gaussian random variables. Extending the results to any nonlinear, bounded, increasing, equal to 0 at 0, and differentiable activation function is an interesting next step.

\begin{proof}
    \begin{enumerate}[label=\textit{(\roman*)}]
    \item By Lemma \ref{lem:erf},
     \begin{align*}
        \Esp \left[ \erf'(t+G) \right]
        &= \frac{\sqrt{2}}{\pi \gamma} \int e^{-(t+g)^2} e^{-\frac{g^2}{2\gamma^2}} \diff g \\
        &= \frac{\sqrt{2}}{\pi \gamma} \int e^{-\frac{g^2}{c}} e^{-2 gt} e^{-t^2} \diff g \qquad  \text{for} \quad c:= \frac{2\gamma^2}{1+2\gamma^2} \\
        &= \frac{\sqrt{2}}{\pi \gamma} \int e^{-\frac{(g+ct)^2}{c} + ct^2-t^2}\diff g \\
        &= \frac{\sqrt{2}}{\pi \gamma} e^{-t^2(1-c)} \underbrace{\int e^{-\frac{(g+ct)^2}{c}} \diff g}_{=\sqrt{\pi c}} \\
        &= \frac{2}{\sqrt{\pi (1 + 2\gamma^2)}} \exp\left(-t^2 \left(1-\frac{2\gamma^2}{1+2\gamma^2} \right)\right) \\
        &= \frac{2}{\sqrt{\pi}\sqrt{1+2\gamma^2}} \exp\left(-\frac{t^2}{1+2\gamma^2} \right) \, .
    \end{align*}
    \item By $(i)$,
    \begin{align*}
        \Esp \left[ \erf(t+G) \right]
        &= \int_{-\infty}^t \Esp \left[ \erf'(s+G) \right] ds \\
        &= \int_{-\infty}^t \frac{2}{\sqrt{\pi}\sqrt{1+2\gamma^2}} \exp\left(-\frac{s^2}{1+2\gamma^2} \right) ds \\
        &= \int_{-\infty}^{t/\sqrt{1+2\gamma^2}} \frac{2}{\sqrt{\pi}} \exp\left(-u^2  \right) ds \\
         &= \erf\Big(\frac{t}{\sqrt{1+2\gamma^2}}\Big) \, .
    \end{align*}
    \item By Lemma \ref{lem:erf}, and following the same steps as in $(i)$,
    \begin{align*}
        \Esp \left[ \erf''(t+G) \right] &= - \frac{2\sqrt{2}}{\sqrt{\pi}\gamma} \int (t+g) e^{-(t+g)^2} e^{-\frac{g^2}{2\gamma^2}} dg \\
        &= -  \frac{2\sqrt{2}}{\pi \gamma} e^{-t^2(1-c)} \int (t+g) e^{-\frac{(g+ct)^2}{c}} dg \\
        &= -  \frac{2\sqrt{2}}{\pi \gamma} e^{-t^2(1-c)} \Big(t \sqrt{\pi c} + \sqrt{\pi c} \Esp(\cN(-ct, \frac{c}{2})) \Big) \\
        &= -  \frac{2\sqrt{2c}}{\sqrt{\pi} \gamma} e^{-t^2(1-c)} (t - ct) \\
        &= -  \frac{4}{\sqrt{\pi (1+2 \gamma^2)}} e^{-t^2(1-c)} \frac{1}{1+2 \gamma^2} t \\
        &= -  \frac{4 t}{\sqrt{\pi} (1+2 \gamma^2)^{3/2}} \exp\left(-\frac{t^2}{1+2\gamma^2} \right) \, .
       \end{align*}
    \item By Lemma \ref{lem:erf},
    \begin{align*}
        \Esp \left[ (\erf')^2(t+G) \right] &= \frac{1}{\sqrt{2\pi} \gamma} \int (\erf')^2(t+g) e^{-\frac{g^2}{2\gamma^2}} \, \diff g \\
        &= \frac{2\sqrt{2}}{\gamma \pi^{3/2}} \int e^{-2 (t+g)^2} e^{-\frac{g^2}{2\gamma^2}}\, \diff g \\
        &= \frac{2\sqrt{2}}{\gamma \pi^{3/2}} \int e^{-\frac{g^2}{2\Gamma^2}} e^{-4gt} e^{-2 t^2}\, \diff g \qquad \text{with} \quad \Gamma^2 := \gamma^2/(1+4\gamma^2)\\
        &= \frac{2\sqrt{2}}{\gamma \pi^{3/2}} \int  e^{-\frac{(g+ 4\Gamma^2 t)^2}{2\Gamma^2}} e^{8\Gamma^2 t^2} e^{-2 t^2}\, \diff g \\
        &= \frac{2\sqrt{2}}{\gamma \pi^{3/2}} e^{-2 t^2(1-4\Gamma^2)}\int  e^{-\frac{(g+ 4\Gamma^2 t)^2}{2\Gamma^2}}\, \diff g \\
        &= \frac{2\sqrt{2}}{\gamma \pi^{3/2}} e^{-2t^2(1-4\Gamma^2)} \sqrt{2\pi} \Gamma \\
        &= \frac{4}{\pi \sqrt{1+4\gamma^2}} \exp\left(-\frac{2 t^2}{1+4\gamma^2} \right) \, .
    \end{align*}
    \item We use Lemma \ref{lem:erf} and then Stein's lemma: 
    \begin{align*}
        \Esp[\erf(t+G)&\erf''(t+G)] \\
        &= -2\Esp\left[(t+G)\erf(t+G)\erf'(t+G)\right] \\
        &= -2t\Esp\left[\erf(t+G)\erf'(t+G)\right]  -2\Esp\left[G\erf(t+G)\erf'(t+G)\right] \\
        &= -2t\Esp\left[\erf(t+G)\erf'(t+G)\right]  \\
        &\qquad -2\gamma^2 \left(\Esp\left[\erf'(t+G)^2\right] + \Esp\left[\erf(t+G)\erf''(t+G)\right]\right) \, .
    \end{align*}
    Reordering terms, this gives the desired equation.
    \item We define the function 
    \begin{align*}
        f(t) = \Esp\left[\erf(t+G)\erf'(t+G)\right] \, .
    \end{align*}
    Then, using Lemma \ref{lem:technical_results}$(v)$, we have 
    \begin{align*}
        f'(t) &= \Esp\left[\erf'(t+G)^2\right] + \Esp\left[\erf(t+G)\erf''(t+G) \right] \\
        &= \Esp\left[\erf'(t+G)^2\right] -\frac{2t}{1+2\gamma^2}\,  \Esp\left[\erf(t+G)\erf'(t+G)\right] \\
        &\qquad-\frac{2\gamma^2}{1+2\gamma^2} \Esp\left[(\erf'(t+G))^2\right] \\
        &= \frac{1}{1+2\gamma^2} \Esp\left[(\erf'(t+G))^2\right] - \frac{2t}{1+2\gamma^2} f(t) \, .
    \end{align*}
    We solve this differential equation by the method of variation of parameters: we have
    \begin{align*}
        \frac{\diff}{\diff t}\left(f(t)e^{t^2/(1+2\gamma^2)}\right) = \frac{1}{1+2\gamma^2} \Esp\left[(\erf'(t+G))^2\right] e^{t^2/(1+2\gamma^2)} \, .
    \end{align*}
    We use Lemmas \ref{lem:erf} and \ref{lem:technical_results}$(iv)$: 
    \begin{align*}
        \frac{\diff}{\diff t}\left(f(t)e^{t^2/(1+2\gamma^2)}\right) &= \frac{2}{\sqrt{\pi}}\frac{1}{(1+2\gamma^2)\sqrt{1+4\gamma^2}} \erf'\left(-\frac{\sqrt{2}t}{\sqrt{1+4\gamma^2}}\right)  e^{t^2/(1+2\gamma^2)} \\
        &= \frac{4}{\pi} \frac{1}{(1+2\gamma^2)\sqrt{1+4\gamma^2}} e^{-2t^2/(1+4\gamma^2) } e^{t^2/(1+2\gamma^2)} \\
        &=  \frac{4}{\pi} \frac{1}{(1+2\gamma^2)\sqrt{1+4\gamma^2}} \exp\left(-\frac{t^2}{(1+2\gamma^2)(1+4\gamma^2)}\right) \\
        &= \frac{2}{\sqrt{\pi}} \frac{1}{(1+2\gamma^2)\sqrt{1+4\gamma^2}} \erf'\left(\frac{t}{\sqrt{(1+2\gamma^2)(1+4\gamma^2)}}\right) \, .
    \end{align*}
    As the distribution of $G$ is symmetric and $\erf$ is an odd function, we have that $f(0) = \Esp\left[\erf(G)\erf'(G)\right] = 0$. Thus integrating the above derivative, we obtain 
    \begin{align*}
        f(t)e^{t^2/(1+2\gamma^2)} &= \frac{2}{\sqrt{\pi}} \frac{1}{(1+2\gamma^2)\sqrt{1+4\gamma^2}} \int_0^t \diff s \,\erf'\left(\frac{s}{\sqrt{(1+2\gamma^2)(1+4\gamma^2)}}\right) \\
        &= \frac{2}{\sqrt{\pi}} \frac{1}{\sqrt{1+2\gamma^2}} \erf\left(\frac{t}{\sqrt{(1+2\gamma^2)(1+4\gamma^2)}}\right) \, .
    \end{align*}
    Using again Lemma \ref{lem:erf}, we obtain the claimed result:
    \begin{equation*}
        f(t) = \frac{1}{\sqrt{1+2\gamma^2}} \erf'\left(\frac{t}{\sqrt{1+2\gamma^2}}\right)\erf\left(\frac{t}{\sqrt{(1+2\gamma^2)(1+4\gamma^2)}}\right) \, .
    \end{equation*}
    \end{enumerate}
\end{proof}

\section{Experimental details and additional results}  \label{app:experimental-details}

Our code is available at 
\begin{quote}
\url{https://github.com/PierreMarion23/single-location-regression}
\end{quote} 

We use the Transformers \citep{wolf2020transformers} and scikit-learn \citep{pedregosa2011scikitlearn} libraries for the experiment of Section \ref{sec:regressiontask}, and JAX \citep{jax2018github} for the experiment of Section \ref{sec:optim}.
All experiments run in a short time (less than one hour) on a standard laptop.

\subsection{Experiment of Section \ref{sec:regressiontask} (NLP motivations)}

\paragraph{Data generation.} We use synthetically-generated data for this experiment. To create our train set, we generate sentences according to the patterns
\begin{center}
\texttt{The city is [SENTIMENT ADJ]. [PRONOUN] [COLOR ADJ] [ANIMAL] is [ADV] [SENTIMENT ADJ].}
\end{center}
and
\begin{center}
\texttt{The city is [SENTIMENT ADJ]. [PRONOUN] [SENTIMENT ADJ] [ANIMAL] is [ADV] [COLOR ADJ].}
\end{center}
where \texttt{ADJ} stands for adjective and \texttt{ADV} for adverb. 
Note that the difference between the two patterns is that the locations of the sentiment and of the color adjectives are swapped. Each element between brackets corresponds to a word, which can take a few different values that are chosen manually. For instance, some possible sentiment adjectives are \texttt{nice}, \texttt{clean}, \texttt{cute}, \texttt{delightful}, \texttt{mean}, \texttt{dirty}, or \texttt{nasty}. A possible value for some words is $\emptyset$, meaning that we remove the word from the sentence, which creates more variety in sentence length. By doing the Cartesian product over the possible values of each word in brackets, we generate in this way a large number of examples. Then, the label associated to each example depends solely on the sentiment adjective appearing in the \textit{second} sentence. For instance, the words \texttt{nice}, \texttt{clean}, \texttt{cute}, or \texttt{delightful} are associated to a label $+1$, while the words \texttt{delightful}, \texttt{mean}, and \texttt{dirty} are associated to a label $-1$. 

We now explain how the test sets are generated. We generate four test sets in order to assess the robustness of the model to various out-of-distribution changes. The baseline test set uses the same sentence patterns and the same sentiment adjectives as in the training set, but other words in the example (e.g., animals, adverb) are different. In particular, a given sentence cannot appear both in the train set and in the test set. Then, we generate another test set by using sentiment adjectives that are not present in the training set. We emphasize that the sentiment adjective fully determines the label, so using unseen adjectives at test time makes the task significantly harder. The third test set uses the same adjectives as in the train set, but another sentence pattern, namely
\begin{center}
\texttt{Hello, how are you? Good evening, [PRONOUN] [COLOR ADJ] [ANIMAL] is [ADV] [SENTIMENT ADJ].}
\end{center}
Finally, the fourth test set combines a different sentence pattern and unseen adjectives. The size of the datasets is given in the table below. All datasets have the same number of $+1$ and $-1$ labels.

\begin{table}[ht]
   \centering
   \begin{tabular}{cc}
   \toprule
   {\bf Name} & {\bf Number of examples} \\
   \midrule
   Train set & $15552$ \\
   Test set & $4608$ \\
   Test w.~OOD tokens & $3072$ \\
   Test w.~OOD structure & $144$ \\
   Test w.~OOD structure+tokens & $96$ \\
   \bottomrule
   \end{tabular}
   \medskip
   \caption{Size of the generated datasets.}
\end{table}

\paragraph{Model.} We recall that there exists several families of Transformer architectures, which in particular are not all best suited for sequence classification. An appropriate family is called encoder-only Transformer, and a foremost example is BERT \citep{devlin2019bert}. We refer to \citet{phuong2022formal} for an introductory discussion of Transformer architectures and associated algorithms. Here, we use a pretrained BERT model from the Hugging Face Transformers library \citep{wolf2020transformers}, with the default configuration, namely \texttt{bert-base-uncased}. The model has $110$M parameters, $12$ layers, the tokens have dimension $d=768$, and each attention layer has $12$ heads. It was pretrained by masked language modeling, namely some tokens in the input are hidden, and the model learns to predict the missing tokens. We refer to \citet{devlin2019bert} for details on the architecture and pretraining procedure. We do not perform any fine-tuning on the model.

\paragraph{Experiment design.} Our experiment consists in performing logistic regression on embeddings of [CLS] tokens in the hidden layers of the pretrained BERT model, where we recall that the [CLS] token is a special token added to the beginning of each input sequence. This is a particular case of the so-called \textit{linear probing}, which is a common technique in the field of LLMs interpretability. More precisely, let~$\ell$ denote a layer index between $0$ and $12$, where the index $0$ corresponds to the input to the model (after tokenization and embedding in $\R^d$). Then, for each value of $\ell \in \{0, \dots, 12\}$, we train a logistic regression classifier, where, for each example, the input to the classifier is the embedding of the [CLS] token at layer $\ell$ (that is, a $d$-dimensional vector), and the label is simply the label of the sentence as described above. 

\paragraph{Results.}
For $\ell=0$ (blue bar in Figure \ref{fig:exp-transformers}), the embedding of [CLS] is a fixed vector that does not depend on the rest of the sequence, so the  classifier has a pure-chance accuracy of $50\%$. However, as soon as $\ell>0$, thanks to the attention mechanism, the [CLS] token contains information about the sequence. We report in Figure \ref{fig:exp-transformers} the average accuracy over $\ell \in \{1, \dots, 12\}$ for the train set (in orange) and the test sets (in green). We observe that the information contained in the [CLS] token is actually very rich, since logistic regression achieves a perfect accuracy of $100\%$ in the train set. In other words, the data fed to the classifier is linearly separable. We emphasize that the size of the train set is significantly larger than the ambient dimension $d$, so it is far from trivial that this procedure would yield a linearly-separable dataset. Therefore, obtaining linearly-separable data demonstrates that \textit{the model constructs a linear representations of the input inside the [CLS] token}. Moving on to the test sets, the accuracy on the baseline test set is very good ($95\%$), which suggests some generalization abilities of the model. The accuracy on the out-of-distribution test sets degrades (between $64\%$ and $75\%$), but remains largely superior to pure-chance performance. This suggests that the internal representation built by the Transformer model is to some extent universal, in the sense that it is robust to the specifics of the sentence structure and of the word choice.

\subsection{Experiment of Section \ref{sec:optim} (Gradient descent recovers the oracle predictor)}

We begin by providing additional results before giving experimental details.

\paragraph{PGD with an initialization on the sphere and constant inverse temperature schedule.} 
As emphasized in Section \ref{sec:optim}, the dynamics of PGD with a general initialization on $(\mathbb{S}^{d-1})^2$ depend on the choice of the inverse temperature schedule $\lambda_t$. The experiment presented in the main text in Figure~\ref{fig:init-sphere} is for a decreasing schedule $\lambda_t = 1/(1 + 10^{-4}t)$. We report in Figure \ref{fig:experiment-gd-constant-lambda-sphere} results when taking a constant inverse temperature. We observe distinct patterns depending on the value of this parameter. With a large inverse temperature (Figure \ref{fig:init-sphere-constant-large-lambda}), we observe that the dynamics in $(\kappa, \nu)$ always escape the neighborhood of $0$. Furthermore, the direction $v^\star$ is almost perfectly recovered, i.e., $\nu \approx 1$. However, the value of $k^\star$ is only partially recovered: the dynamics stabilize around $\kappa \approx 0.3$. Moreover, the excess risk plateaus at a high value, while the dynamics stay far away from the manifold $\cM$. In the case of a smaller inverse temperature (Figure \ref{fig:init-sphere-constant-small-lambda}), the situation is different. We observe that some initializations lead to a convergence to the point $(\kappa, \nu)=(0, 0)$, in which case the dynamics stay far from the manifold $\cM$. In other words, there is no recovery of $k^\star$ and $v^\star$. Other initializations lead to perfect recovery of $k^\star$ and $v^\star$. In all cases, the final excess risk is low. 
Theoretical study of these observations is left for future work.

\begin{figure}[ht]
    \begin{subfigure}[b]{0.99\textwidth}
    \centering
    \includegraphics[width=\textwidth]{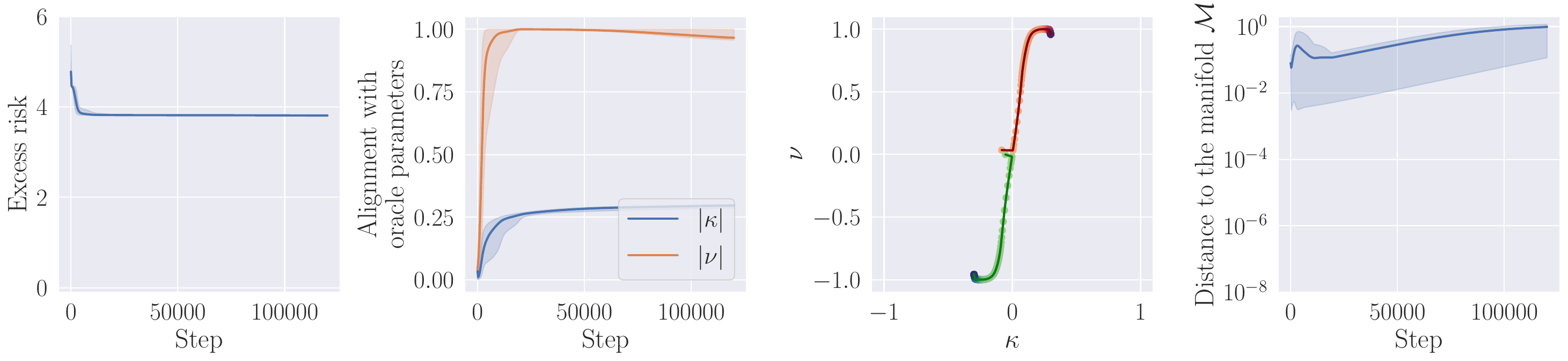}
    \caption{For $\lambda_t = 0.9$.}
    \label{fig:init-sphere-constant-large-lambda}
    \end{subfigure}
     \hfill
    \begin{subfigure}[b]{0.99\textwidth}
     \centering
    \includegraphics[width=\textwidth]{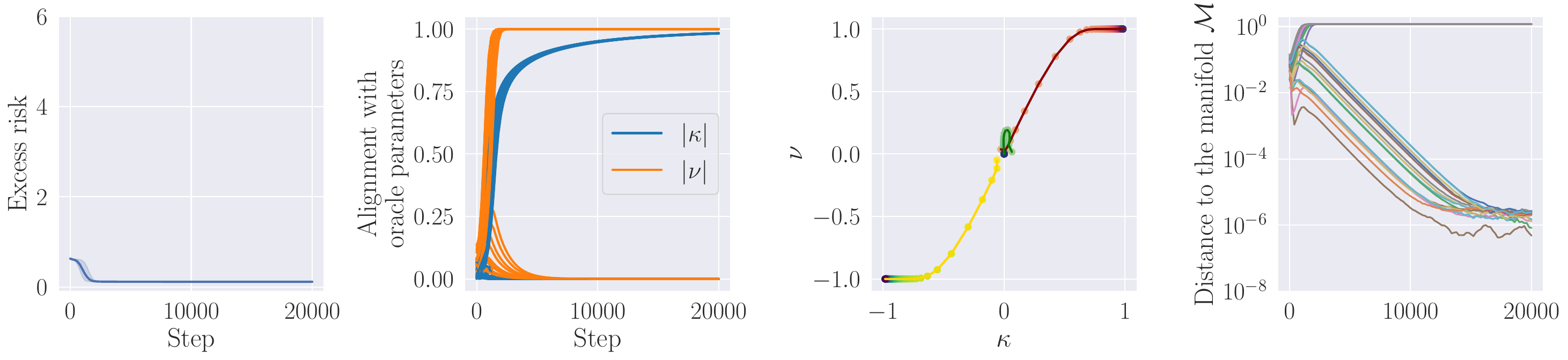}
    \caption{For $\lambda_t = 0.1$.}
    \label{fig:init-sphere-constant-small-lambda}
    \end{subfigure}
    \caption{Dynamics of PGD from a random initialization on $(\mathbb{S}^{d-1})^2$, for two iteration-independent values of $\lambda_t$. \textbf{Left:} Excess risk as a function of the number of steps. \textbf{Middle left:} Alignment $|\kappa| = |k^\top k^\star|$ and $|\nu| = |v^\top v^\star|$ with the oracle parameters. \textbf{Middle right:} Trajectories of $\kappa$ and $\nu$ in a few repetitions of the experiments. Each repetition corresponds to a color, the end point of each trajectory is in blue. \textbf{Right:} Distance to the invariant manifold $\cM$. In all plots except the middle right ones, the experiment is repeated $30$ times with independent random initializations, and either $95\%$ percentile intervals are plotted or all the curves are plotted. Parameters are $d=400$, $L=10$, and $\gamma = \sqrt{1/2}$.}
    \label{fig:experiment-gd-constant-lambda-sphere}
\end{figure}

\paragraph{Implementation details.} The implementation of the PGD algorithm \eqref{eq:pgd-iterations} requires to compute the gradient of the risk. To this aim, we use the formula for the risk given by Proposition~\ref{prop:full-risk-expression}. Note that all quantities appearing in this expression have explicit derivatives. The only quantity for which this is not directly clear is the function $\zeta$, which needs to be differentiated with respect to its first variable to compute the derivative of the risk with respect to $\kappa$. However, recall that $\zeta(t,\gamma^2) := \Esp \left[ \erf^2(t+G)\right]$. Then, by Lemma \eqref{lem:technical_results},
\begin{align*}
\partial_t\zeta(t,\gamma^2) &= 2 \Esp \left[ (\erf \, \erf')(t+G) \right] \\
&= \frac{2}{\sqrt{1+2\gamma^2}}    \erf\left( \frac{t}{\sqrt{(1+4\gamma^2)(1+2\gamma^2)}}\right)   \erf'\left( \frac{t}{\sqrt{1+2\gamma^2}}\right) \, .
\end{align*}
Evaluating $\zeta$ itself (and not its derivative) is not required to simulate the dynamics, but is useful for reporting the value of the risk. For this, we also use the formula above, and use numerical quadrature to compute the value of 
\[
\zeta(t,\gamma^2) = \int_{-\infty}^t \partial_s\zeta(s,\gamma^2) ds \, .
\]
We report in the figures the value of the excess risk, i.e., the risk $\cR_\lambda(k,v) - \varepsilon^2$. 
To compute the distance to the manifold $\cM$, recall that it is defined by
\[
\mathcal{M} = \{(k,v) \in \mathbb{S}^{d-1} \times \mathbb{S}^{d-1}, k^\top v^\star = 0, v^\top k^\star = 0, k^\top v = 0\} \, .
\]
For a point $(k,v) \in \mathbb{S}^{d-1} \times \mathbb{S}^{d-1}$, its distance to $\cM$ is therefore computed as
\[
d_\cM((k,v)) = \sqrt{(k^\top v^\star)^2 + (v^\top k^\star)^2 + (k^\top v)^2} \, .
\]

\paragraph{Parameter values.} The following table summarizes the value of the parameters in our experiments.

\begin{table}[ht]
   \centering
   \begin{tabular}{cccccc}
   \toprule
   {\bf Name} & {\bf Figure \ref{fig:init-sphere}} & {\bf Figure \ref{fig:init-manifold}} & {\bf Figure \ref{fig:experiment-sgd}} & {\bf Figure \ref{fig:init-sphere-constant-large-lambda}} & {\bf Figure \ref{fig:init-sphere-constant-small-lambda}} \\
   \midrule
   $d$ & $400$ & $400$ & $80$ & $400$ & $400$ \\
   $L$ & $10$ & $10$ & $10$ & $10$ & $10$ \\
   $\gamma$ & $1/\sqrt{2}$ & $1/\sqrt{2}$ & $1/\sqrt{2}$ & $1/\sqrt{2}$ & $1/\sqrt{2}$ \\
   $\lambda_t$ & $1/(1 + 10^{-4}t)$ & $0.1$ & $2/(1 + 10^{-4}t)$ & $0.9$ & $0.1$ \\
   $\alpha$ & $4 \cdot 10^{-3}$ & $4 \cdot 10^{-3}$ & $10^{-3}$ & $10^{-3}$ & $4 \cdot 10^{-3}$ \\
   Number of steps & $120$k & $20$k & $200$k & $120$k & $20$k \\
   N.~of repetitions & $30$ & $30$ & $30$ & $30$ & $30$ \\
   Batch size & - & - & $5$ & - & - \\
   $\varepsilon$ & $0$ & $0$ & $0.1$ & $0$ & $0$ \\
   \bottomrule
   \end{tabular}
   \medskip
   \caption{Parameter values for the experiments on recovery of the oracle predictor by gradient descent.}
\label{tab:params1}
\end{table}

\subsection{Additional experiments}
\paragraph{Transformer layer.} The most general formulation of the Transformer layer we consider writes, for $\mathbb{X} \in \R^{L \times d}$,
\begin{align}   \label{eq:true-transformer}
\begin{split}
    \tilde{\mathbb{X}} &= \mathrm{concat}(r, \mathbb{X}) \\
    \hat{\mathbb{X}} &= \tilde{\mathbb{X}} + \sum_{h=1}^H \mathrm{softmax}\bigg(\frac{1}{\sqrt{p}} \underbrace{\mathrm{LN}(\tilde{\mathbb{X}}) Q_h}_{(L+1)\times p} \underbrace{\vphantom{q}K_h^\top \mathrm{LN}(\tilde{\mathbb{X}})^\top}_{p\times (L+1)} \bigg) \underbrace{\mathrm{LN}(\tilde{\mathbb{X}})V_h}_{(L+1) \times p} \underbrace{O_h^\top}_{p \times d} \, , \\
    T(\mathbb{X}) &= \hat{\mathbb{X}} + \mathrm{ReLU}(\hat{\mathbb{X}} W_1^\top  + \mathbf{1} b_1^\top) W_2^\top + \mathbf{1}  b_2^\top \, ,
\end{split}
\end{align}
where
\begin{itemize}
    \item $\mathrm{concat}(r, \mathbb{X}) \in \R^{(L+1) \times d}$ adds a new token at the beginning of the sequence by concatenating $r \in \R^d$ to $X \in \R^{L \times d}$. This token corresponds to the [CLS] or register token (see Section \ref{sec:solving} for discussion and references). In all our experiments, $r \in \R^d$ is a vector with i.i.d.~Gaussian entries of variance $1/d$, which is not trained;
    \item $\mathrm{LN}$ denotes layer normalization, $\mathrm{softmax}$ denotes row-wise softmax, and $\mathbf{1} \in \R^{L+1}$ is the vector filled with $1$;
    \item the parameters are $Q_h, K_h, V_h, O_h \in \R^{d \times p}$, $W_1 \in \R^{d \times m}$, $b_1 \in \R^m$, $W_2 \in \R^{m \times d}$, and $b_2 \in \R^d$.
\end{itemize}
\paragraph{Experiment with single-head Transformer layer on single-location regression.} We first consider the case of single-head attention, where $H=1$ and $p=d$. For ease of notation, we drop the subscripts~$h$ in the parameters of the attention layer. We also set $O$ to be the identity matrix. We aim at training the Transformer layer on the single-location regression task, to check that our simplified model is a good description of the Transformer layer. First note that the output of the Transformer layer~\eqref{eq:true-transformer} is a matrix in $\R^{(L+1) \times d}$ while the target of single-location regression is a scalar. Thus, we consider only the first row of $T(\mathbb{X})$, corresponding to the register token, and learn a linear projection of this row to $\mathbb{R}$. In other words, the Transformer layer should learn to store in the register token global information about the sequence, as described in Sections \ref{sec:regressiontask} and \ref{sec:solving}. Overall, letting $\theta \in \R^d$, our risk writes
\begin{align*}
\cR(Q, K, V, W_1, b_1, W_2, b_2, \theta) &= \Esp \Big[ \big(Y - T(\mathbb{X})_1 \theta \big)^2 \Big] \, ,
\end{align*}
where $(\mathbb{X}, Y)$ are distributed according to the single-location task as described in Section \ref{sec:regressiontask}. We train using single-pass stochastic gradient descent (meaning that fresh samples are used at each step), for $8,000$ steps with a batch size of $128$ and a learning rate of $0.01$. The experiment is repeated $20$ times with independent random initializations, and $95\%$ percentile intervals are plotted (but are not visible when the variance is too small). Parameters $K$, $V$, $W_1$, $W_2$ are initialized with Gaussian entries of variance $2/(d_\textnormal{in} + d_\textnormal{out})$. The bias terms are initialized to $0$, as well as the query matrix $Q$, following a standard recommendation in the literature on signal propagation in Transformer \citep{yang2021tuning,he2023deep,he2024simplifying}. The output weights $\theta$ are initialized with Gaussian entries of variance $1/d^2$, following the mean-field regime \citep{chizat2019lazy}. 
Parameters are $L=10$, $d=p=80$, $m=200$, $\varepsilon^2=0.01$, $\gamma^2=0.5$.

\begin{figure}[ht]
    \begin{subfigure}[t]{0.49\textwidth}
    \centering
    \includegraphics[width=0.93\textwidth]{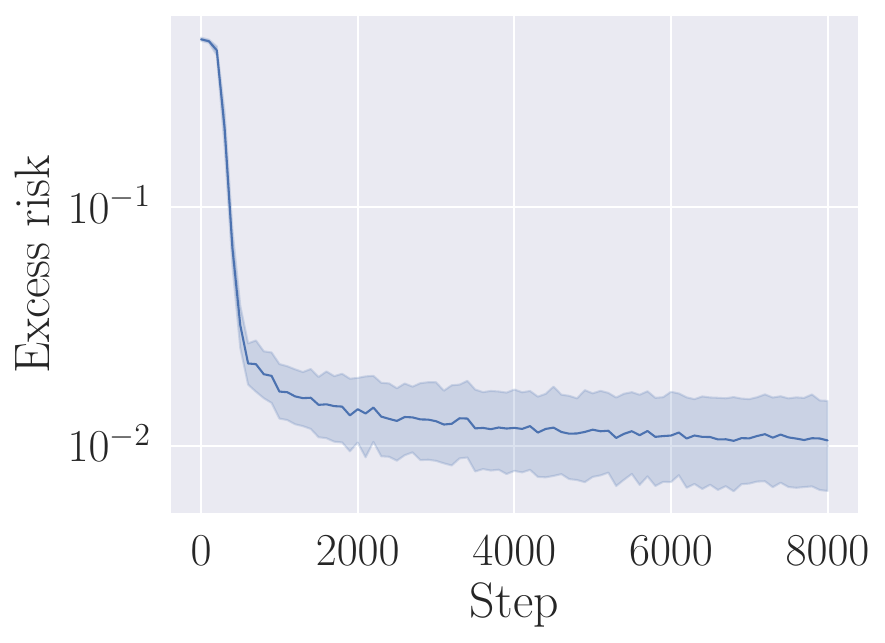}
    \caption{Excess test risk as a function of the number of steps.}
    \label{fig:single-head-left}
    \end{subfigure}
     \hfill
         \begin{subfigure}[t]{0.49\textwidth}
     \centering
    \includegraphics[width=\textwidth]{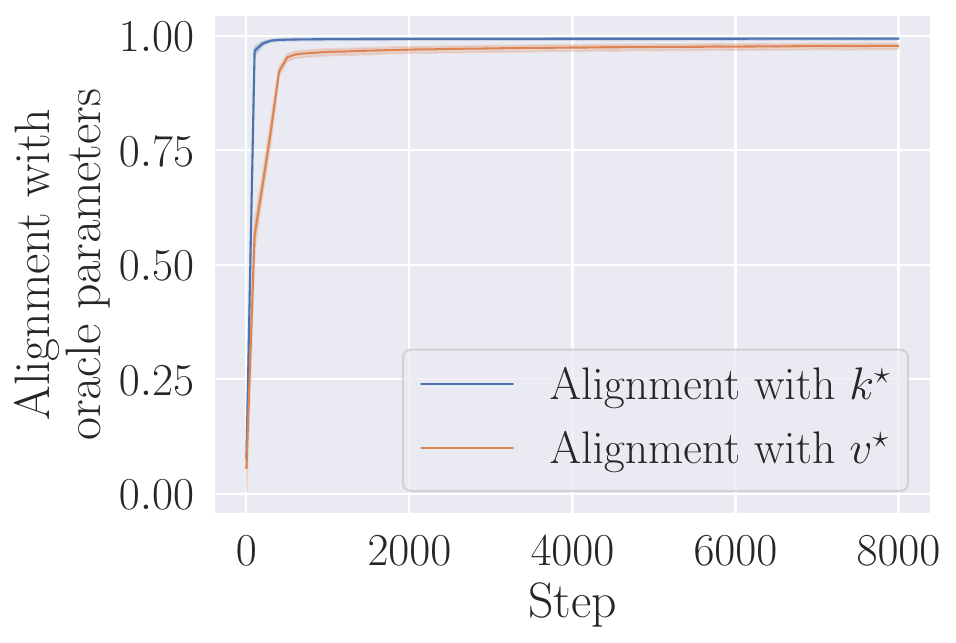}
    \caption{Alignment between Transformer parameters and oracle parameters $k^\star$ and $v^\star$. We plot $|k^\top k^\star|$ and $v^\top v^\star$ as a function of the number of steps, where $k$ is the first left singular vector of $K$, and $v := V (I + W_1 W_2) \theta / \|V (I + W_1 W_2) \theta\|$.}
    \label{fig:single-head-right}
    \end{subfigure}
    \caption{Training a full Transformer layer on single-location regression. The Transformer layer solves the task, and encodes the structure of the problem in its weights.}
    \label{fig:single-head}
\end{figure}

Results are given in Figure \ref{fig:single-head}. We observe in Figure \ref{fig:single-head-left} that the Transformer layer is able to solve single-layer regression. Furthermore, as shown by Figure \ref{fig:single-head-right}, it does so by encoding in its weights the underlying structure of the problem, namely the oracle parameters $k^\star$ and $v^\star$, as in our simplified model (see Section \ref{sec:optim}). More precisely, in the case of our model, we showed that the two parameters $k, v \in (\R^d)^2$ converge to $(k^\star, v^\star)$. To make appear the equivalent of $k$ and~$v$ in the more complex parametrization~\eqref{eq:true-transformer}, we let $k$ be the first left singular vector of $K$, and $v = V (I + W_1 W_2) \theta / \|V (I + W_1 W_2) \theta\|$. We check numerically that the weight matrix $QK^\top$ is nearly rank-one after training\footnote{The ratio between its first and second singular value is of the order of $10^6$ at the end of training.}, which validates taking $k$ as the first singular vector of $K$ in the present experiment. It also validates considering vector-valued parameters in our simplified model. The role of the vector~$k$ is to select the relevant token among all input tokens, while the vector $v$ describes how successive transformations (the value matrix of the attention layer, the MLP with skip connection, and the final linear projection) map this token to the output of the model. We observe that these two vectors align perfectly with $k^\star$ and $v^\star$. This confirms that our simplified model is a good description of how the Transformer layer solves single-location regression.

\paragraph{Multiple-location regression.} A natural extension of single-location regression is when the output depends on $s>1$ tokens instead of just one. This task, which we name multiple-location regression, can be written as
\begin{align}   \label{eq:multiple-location-reg}
   Y &= \sum_{h=1}^s X_{J(h)}^\top v_h^\star + \xi,  
\end{align}
where $J(1), \dots, J(s)$ are latent discrete random variables on $\{1, \hdots, L\}$, all different, and such that,
conditionally on $J(1), \dots, J(s)$,
\[
\left\{
\begin{array}{lll}
    X_{J(h)} &\sim&\cN\Big(\sqrt{\frac{d}{2}}k_h^\star, \gamma^2 I_d\Big)   \\
    X_\ell &\sim& \cN(0, I_d) \quad \textnormal{for} \quad \ell \notin \{J(1), \dots, J(s)\} \, .
\end{array}
\right.
\]
\paragraph{Experiment with simplified predictor on multiple-location regression.} In accordance with the above, a natural extension of the model presented in the main text is the multi-head predictor 
\begin{align}   \label{eq:multihead-predictor}
T_\lambda^{(k_1,v_1, \dots, k_h, v_h)}(\mathbb{X}) = \sum_{h=1}^s \erf\big(\lambda \mathbb{X} k_h \big)^\top \mathbb{X}v_h \, . 
\end{align}
The hope is that each head $(k_h, v_h)$ should align with one of the oracle directions $(k_h^\star, v_h^\star)$. As a first attempt in investigating this question, we run stochastic PGD in a setup similar to the one presented in Figure~\ref{fig:experiment-sgd}. We take $s=2$, the pair $(J(1), J(2))$ takes uniform values among disjoint pairs of indices in $\{1, \dots, L\}$. The directions $(k_1^\star, v_1^\star)$ and $(k_2^\star, v_2^\star)$ are sampled independently uniformly on the sphere, such that $(k_i^\star)^\top v_i^\star = 0$. Parameter values are the same as in Figure~\ref{fig:experiment-sgd}, except that the number of steps is set to $10^5$, the number of repetitions is set to $20$, and the inverse temperature $\lambda_t$ is constant after $2.5\cdot10^4$ steps. Results are given in Figure \ref{fig:simple-multihead}. We observe (Figure \ref{fig:simple-multihead-left}) that our predictor is able to solve the task. However, the recovery of oracle parameters is only partial, as shown in Figures \ref{fig:simple-multihead-middle} and~\ref{fig:simple-multihead-right}: each head partially aligns with the oracle parameters, but the alignment is not perfect. In other words, the model is not well able to separate the signal coming from the different $X_{P(h)}$. This calls for additional research in understanding how attention heads differentiate from each other in order to attend to various signals, and why in our setup the heads are not well-separated.

\begin{figure}[ht]
    \begin{subfigure}[t]{0.32\textwidth}
    \centering
    \includegraphics[width=\textwidth]{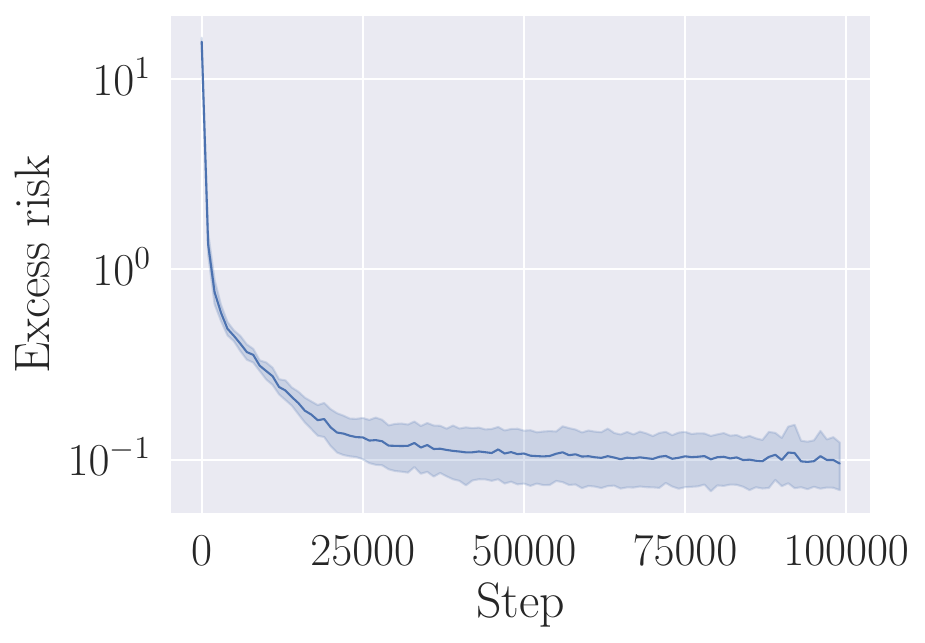}
    \caption{Excess test risk as a function of the number of steps.}
    \label{fig:simple-multihead-left}
    \end{subfigure}
     \hfill
    \begin{subfigure}[t]{0.32\textwidth}
     \centering
    \includegraphics[width=\textwidth]{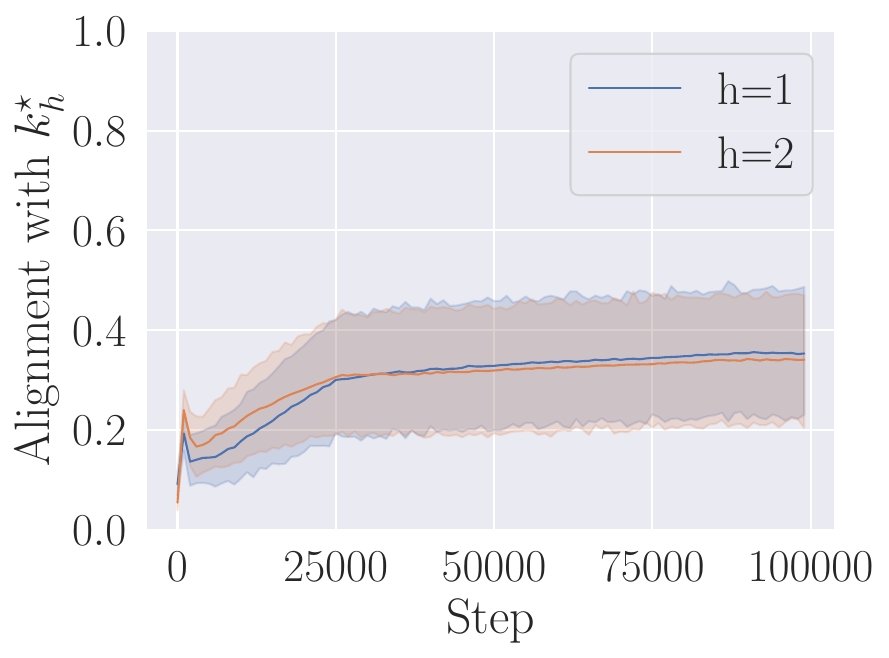}
    \caption{Alignment $|k^\top k_h^\star|$ with the oracle parameters.}
    \label{fig:simple-multihead-middle}
    \end{subfigure}
    \hfill
    \begin{subfigure}[t]{0.32\textwidth}
     \centering
    \includegraphics[width=\textwidth]{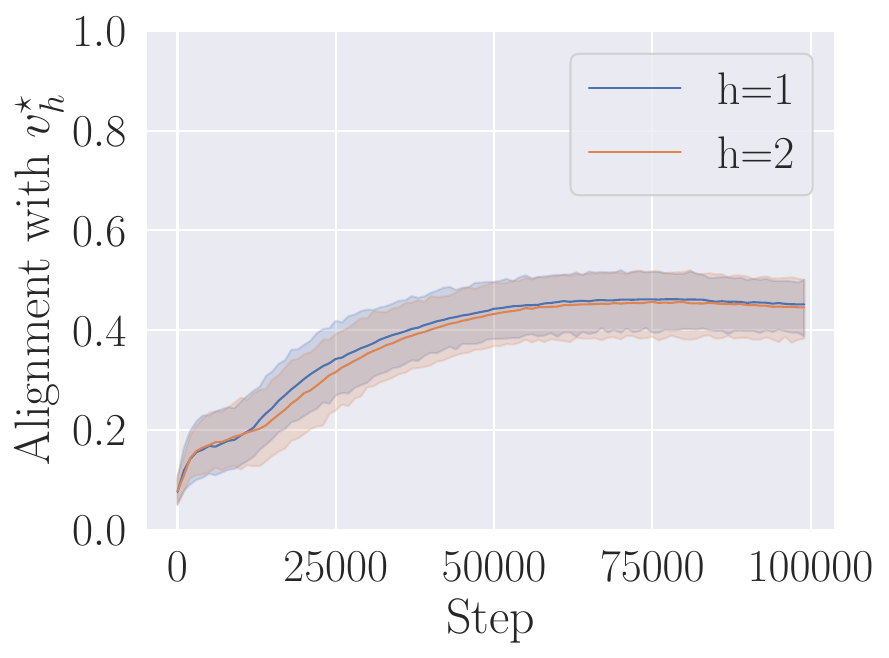}
    \caption{Alignment $|v^\top v_h^\star|$ with the oracle parameters.}
    \label{fig:simple-multihead-right}
    \end{subfigure}
    \caption{Training the multi-head predictor \eqref{eq:multihead-predictor} on the multiple-location regression task \eqref{eq:multiple-location-reg}. The predictor is able to reach a low-risk region. The recovery of oracle parameters by the predictor is partial. In the middle plot, for each repetition and each oracle parameter $k_h^\star$, we look at the end of training which head among $k_1$ and $k_2$ is closer to $k_h^\star$, and report the alignment between $k_h^\star$ and that head along training. If the alignment were perfect, this quantity would be close to $1$. The same holds for the right plot.}
    \label{fig:simple-multihead}
\end{figure}

\paragraph{Experiment with multi-head Transformer layer on multiple-location regression.} We train a multi-head Transformer layer on the multiple-location regression task \eqref{eq:multiple-location-reg}, taking $H=s=2$. The data is generated as in the previous experiment. Parameters are as in the experiment for single-head Transformer, except the dimension $p=d/H=40$, the number of repetitions set to $10$, and the learning rate set to $0.02$. Mimicking the single-head experiment, we let 
$k_h$ be the first left singular vector of $K_h$, and $v_h = V_h O_h^\top (I + W_1 W_2) \theta / \|V_h O_h^\top (I + W_1 W_2) \theta\|$. We also check numerically that all weight matrices $Q_h K_h^\top$ are nearly rank-one after training. Results are reported in Figure \ref{fig:transformer-multihead}. The conclusions are similar to the previous experiment: the excess risk is low at the end of training, but we observe partial recovery of the oracle parameters (although the recovery is somewhat better than with the simplified predictor, especially for $k_h^\star$). This suggests that our simplified predictor might be a first good testbed to understand the training dynamics of multi-head Transformer for this task.

\begin{figure}[ht]
    \begin{subfigure}[t]{0.32\textwidth}
    \centering
    \includegraphics[width=\textwidth]{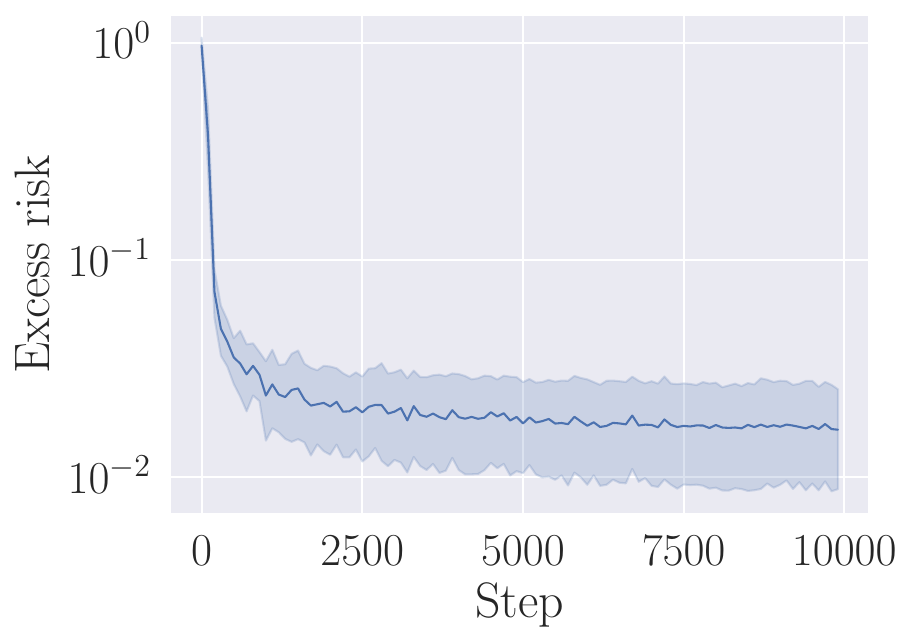}
    \caption{Excess test risk as a function of the number of steps.}
    \label{fig:transformer-multihead-left}
    \end{subfigure}
     \hfill
    \begin{subfigure}[t]{0.32\textwidth}
     \centering
    \includegraphics[width=\textwidth]{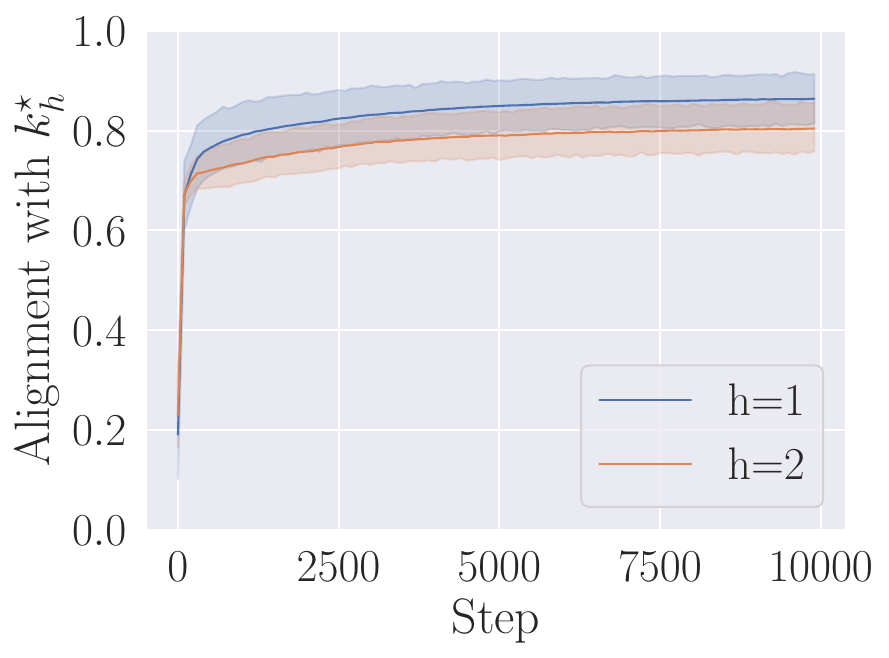}
    \caption{Alignment $|k^\top k_h^\star|$ with the oracle parameters.}
    \label{fig:transformer-multihead-middle}
    \end{subfigure}
    \hfill
    \begin{subfigure}[t]{0.32\textwidth}
     \centering
    \includegraphics[width=\textwidth]{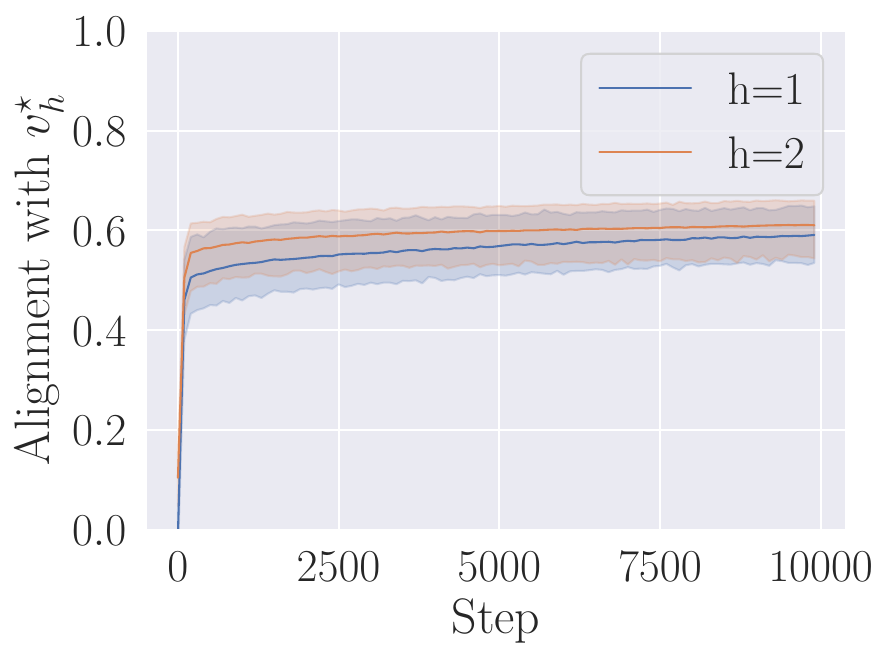}
    \caption{Alignment $v^\top v_h^\star$ with the oracle parameters.}
    \label{fig:transformer-multihead-right}
    \end{subfigure}
    \caption{Training the multi-head Transformer layer \eqref{eq:true-transformer} on the multiple-location regression task~\eqref{eq:multiple-location-reg}. The predictor is able to reach a low-risk region. The recovery of oracle parameters by the predictor is partial. For each $h \in \{1, 2\}$, we let $k_h$ be the first left singular vector of $K_h$, and $v_h = V_h O_h^\top (I + W_1 W_2) \theta / \|V_h O_h^\top (I + W_1 W_2) \theta\|$. In the middle plot, for each repetition and each oracle parameter $k_h^\star$, we look at the end of training which head among $k_1$ and $k_2$ is closer to $k_h^\star$, and report the alignment between $k_h^\star$ and that head along training. If the alignment were perfect, this quantity would be close to $1$. The same holds for the right plot.}
    \label{fig:transformer-multihead}
\end{figure}

\section{Further discussion of related models}   \label{app:related-works}
We begin by discussing some related works on training dynamics of Transformers \citep{jelassi2022vision,nichani2024how,wang2024transformers}, to illustrate the originality of our task and predictor.
\citet{jelassi2022vision} study how (vision) Transformers learn spatial patterns in the data by relying on positional encodings. This differs significantly from our task that is invariant by token permutation. Further, in their model, the argument of softmax (i.e., a matrix $A \in \R^{L \times L}$) is directly a parameter of the model. This is a radically different structure from the usual attention, and from our setup, where the data appear in the nonlinearity $\sigma(X_\ell^\top k)$.
Next, \citet{nichani2024how} explore a task involving a fixed latent causal graph over the positions of the tokens. Here again, positional encodings play a critical role in their analysis, whereas our task is invariant under permutations of the tokens. Moreover, in \citet{nichani2024how}, the output is expressed as a function of the last token, with the previous tokens providing the necessary context for this computation. In our setup, however, the output depends on a token whose position varies and must be identified within the context.
Closer to our approach is the recent paper by \citet{wang2024transformers}, which also incorporates a notion of token-wise sparsity: the output is computed as the average of a small subset of tokens, where the subset is identified by comparing the positional encodings of each token with that of a reference token. We outline two key differences with our setting. First, we do not make use of a reference token, but instead learn the latent direction $k^\star$ to identify the informative token. Second, in our setting, the tokens also encode an output projection direction $v^\star$ on top of $k^\star$. In other words, our task involves learning a linear regression in addition to identifying the relevant token, which is not the case in \citet{wang2024transformers}.

Besides, we also note that our task shares similarities with multi-index models \citep{mccullagh83} and mixtures of linear regressions \citep{deveaux1989mixtures}. %
However, our task $\eqref{pb:learning_pb}$ has a more structured nature, involving sequence-valued inputs and incorporating a single-location pattern.

Finally, one could imagine a multi-layer perceptron (MLP) designed specifically for single-location regression, where the weights have a diagonal structure with respect to the sequence index, namely
\[
\textnormal{MLP}(X_1, \dots, X_L) = \sum_{\ell=1}^L W_2 \sigma(W_1 X_\ell + b_1) + b_2.
\]
In such a setup, the first layer could learn the projections along $k^*$ and $v^*$, while the subsequent layer could learn to map these projections to the ouput $Y$ (in a somewhat similar spirit to multi-index models). However, this architecture is far from resembling those used in practice. If we do not assume a diagonal structure and instead use traditional MLPs, the number of parameters must scale at least linearly with the sequence length, which is highly suboptimal and may lead to very slow training. This highlights the efficiency of attention layers, which perform single-location regression with a fixed number of learnable parameters, independent of the input length. We leave a rigorous study of the learning abilities of MLPs in single-location regression for future work.

\end{document}